\documentclass{article}
\usepackage{iclr2025_conference,times}

\usepackage{amsmath,amsfonts,bm}

\def\eqref#1{equation~\ref{#1}}

\def\1{\bm{1}}

\DeclareMathAlphabet{\mathsfit}{\encodingdefault}{\sfdefault}{m}{sl}
\SetMathAlphabet{\mathsfit}{bold}{\encodingdefault}{\sfdefault}{bx}{n}

\newcommand{\E}{\mathbb{E}}

\usepackage{microtype}
\usepackage{graphicx}
\usepackage{subfigure}
\usepackage{booktabs} %
\usepackage{lipsum}
\usepackage{appendix}
\usepackage{enumitem}
\usepackage{wrapfig}
\usepackage{minitoc}
\setcounter{parttocdepth}{5}

\usepackage{hyperref}
\usepackage{url}
\usepackage[capitalize,noabbrev]{cleveref}
\usepackage{colortbl}
\usepackage[most]{tcolorbox}
\definecolor{papercolor}{HTML}{0668E1}
\definecolor{windows_98}{HTML}{008080}

\hypersetup{
    colorlinks,
    linkcolor={papercolor!75!black},
    citecolor={papercolor!75!black},
    urlcolor={papercolor!75!black}
}

\newcommand{\Revision}[1]{#1}
\newcommand{\RevisionTwo}[1]{#1}
\newcommand{\RevisionThree}[1]{#1}
\newcommand{\RevisionFour}[1]{#1}

\usepackage{amsmath}
\usepackage{amssymb}
\usepackage{mathtools}
\usepackage{amsthm}

\newcommand{\appropto}{\mathrel{\vcenter{
  \offinterlineskip\halign{\hfil$##$\cr
    \propto\cr\noalign{\kern2pt}\sim\cr\noalign{\kern-2pt}}}}}

\usepackage[capitalize,noabbrev]{cleveref}
\usepackage{mdframed}

\usepackage{graphicx}

\usepackage{xcolor}         %

\usepackage{graphicx}       %

\usepackage{dsfont}         %

\PassOptionsToPackage{comma}{natbib}

\newcommand{\trace}{\operatorname{tr}\,}
\newcommand{\ntrace}{\bar{\operatorname{tr}}\,}
\newcommand{\dof}{\operatorname{df}}
\newcommand{\ndof}{\bar{\operatorname{df}}}
\newcommand{\id}{I}

\theoremstyle{plain}
\newtheorem{theorem}{Theorem}[section]

\newtheorem{lemma}{Lemma}[section]
\newtheorem{corollary}{Corollary}[section]
\newtheorem{definition}{Definition}[section]
\newtheorem{assumption}{Assumption}[section]

\usepackage{caption}
\usepackage{subcaption}
\usepackage{amsmath}
\usepackage{bm} %
\usepackage{amsthm} %
\usepackage{wrapfig}

\title{An Effective Theory of Bias Amplification}

\author{%
    Arjun Subramonian$^{1,}\thanks{Work done while interning at Meta. Corresponding author: Arjun Subramonian (arjunsub@cs.ucla.edu).}$, Samuel J.~ Bell$^{2}$, Levent Sagun$^{2}$, Elvis Dohmatob$^{2,3,4}$ \\
    $^1$UCLA\quad $^2$Meta FAIR \quad $^3$Concordia University \quad $^4$Mila
}

\iclrfinalcopy
\begin{document}

\doparttoc %
\faketableofcontents %

\maketitle

\begin{abstract}
Machine learning models can capture and amplify biases present in data, leading to disparate test performance across social groups. To better understand, evaluate, and mitigate these biases, a deeper theoretical understanding of how model design choices and data distribution properties contribute to bias is needed. In this work, we contribute a precise analytical theory in the context of ridge regression, both with and without random projections, where the former models feedforward neural networks in a simplified regime. Our theory offers a unified and rigorous explanation of machine learning bias, providing insights into phenomena such as bias amplification and minority-group bias in various feature and parameter regimes. For example, we observe that there may be an optimal regularization penalty or training time to avoid bias amplification, and there can be differences in test error between groups that are not alleviated with increased parameterization. Importantly, our theoretical predictions align with  empirical observations reported in the literature on machine learning bias. We extensively empirically validate our theory on synthetic and semi-synthetic datasets.
\end{abstract}

\section{Introduction}

Machine learning (ML) datasets can encode a plethora of biases which, when said data is used to train models, can result in systems that can cause practical harm.
Datasets that encode correlations that only hold for a subset of the data may cause disparate performance when models are used more broadly, such as an X-ray pneumonia classifier that only functions on images from certain hospitals \citep{zech2018variable}.
This issue is magnified when coupled with under-representation, whereby a dataset fails to adequately reflect parts of the underlying data distribution, often further marginalizing certain groups. Lack of representation results in systems that might work well on average, but fail for minoritized groups, including facial recognition systems that fail for darker-skinned women \citep{pmlr-v81-buolamwini18a}, large language models that consistently misgender transgender and nonbinary people \citep{Ovalle2023TGNB}, or image classification technology that only works in Western contexts \citep{devries2019does,richards2023does}.

Unfortunately, contemporary models may exhibit \emph{bias amplification}, whereby dataset biases are not only replicated, but exacerbated \citep{zhao-etal-2017-men,hendricks2018women,wang2021directionala}.
While previous research has shown that amplification is a function of both dataset properties and how we choose to construct our models \citep{hall2022systematic,sagawa2020investigation,bell2023simplicity}, it is not fully clear how bias amplification occurs mechanistically, nor do we precisely understand which settings lead to its emergence. Thus, in this work, we propose a novel theoretical framework that explains how model design choices (e.g., number of parameters, regularization penalty) and data distributional properties (e.g., number of features, group imbalance, label noises) interact to amplify bias. Moreover, our framework provides an account of  prior work on bias amplification \citep{bell2023simplicity} and minority-group bias \citep{sagawa2020investigation}.

A theory of bias amplification is important for several reasons.
First, as empirical research necessarily yields only sparse data points---often focused on only the most common regimes---theory allows us to interpolate between past findings, and reason about how bias emerges in under-explored settings.
Second, a precise theory gives us the depth of understanding needed in order to intervene, potentially supporting the development of both novel evaluations and mitigations.
Finally, beyond explaining already-known phenomena, our theory makes new predictions, suggesting new avenues for future research.

\subsection{Main Contributions}
\label{sec:main-contributions}
In this work, we develop a unifying and rigorous theory of ML bias in the settings of ridge regression with and without random projections. In particular, we precisely analyze test error disparities between groups (e.g., demographic groups or protected categories) with different data distributions when training on a mixture of data from these groups. We characterize these disparities in high dimensions using operator-valued free probability theory (OVFPT), thereby avoiding possibly loose bounds on critical quantities. Our theory encompasses different parameterization regimes, group sizes, label noises, and data covariance structures. Moreover, our theory has applications to important problems in ML bias that have recently been empirically investigated:
\begin{itemize}[leftmargin=1em]
\item \textbf{Bias amplification.} Even in the absence of group imbalance and spurious correlations, a single model that is trained on a combination of data from different groups can amplify bias beyond separate models that are trained on data from each group \citep{bell2023simplicity}. With our theory, we reproduce and analyze the bias amplification findings of \citet{bell2023simplicity} in controlled settings. We further observe how stopping model training early or tuning the regularization hyperparameter can alleviate bias amplification.

\item \textbf{Minority-group bias.} Overparamaterization can hurt test performance on minority groups due to spurious features \citep{sagawa2020investigation, Khani2021Spurious}. We theoretically analyze how model size and extraneous features affect minority-group bias.
\end{itemize}

We extensively empirically validate our theory in controlled and semi-synthetic settings. Specifically, we show that our theory aligns with practice in the cases of: (1) bias amplification with synthetic data generated from isotropic covariance matrices and the semi-synthetic dataset Colored MNIST \citep{arjovsky2019invariant}, and (2) minority-group bias under different model sizes with synthetic data generated from diatomic covariance matrices. In these applications, we expose new, interesting phenomena in various regimes. For example, a larger number of features than samples can amplify bias under overparameterization, there may be an optimal regularization penalty or training time to avoid bias amplification, and there can be differences in test error between groups that are not alleviated with increased parameterization. Our observations of phenomena in Sections \ref{sec:bias-amp-exp} and \ref{sec:min-group-exp} are largely empirical but are supported by their agreement with our theory. Our theory of ML bias can inform strategies to evaluate and mitigate unfairness in ML, or be used to caution against the usage of ML in certain applications.

\subsection{Related Work}
\label{sec:rw-main}

\textbf{Bias amplification.} A long line of research has explored how ML exacerbates biases in data. For example, a single model that is trained on a combination of data from different groups can amplify bias \citep{zhao-etal-2017-men, wang2021directionala}, even beyond what would be expected when separate models are trained on data from each group \citep{bell2023simplicity}. \citet{hall2022systematic} conduct a systematic empirical study of bias amplification in the context of image classification, finding that amplification can vary greatly as a function of model size, training set size, and training time. Furthermore, overparameterization, despite reducing a model's overall test error, can disproportionately hurt test performance for minority groups \citep{sagawa2020investigation, Khani2021Spurious}. Models can also overestimate the importance of poorly-predictive, low-signal features for minority groups, thereby hurting performance on these groups \citep{leino2018feature}. In this paper, we distill a holistic theory of how model design choices and data distributional properties affect disparate test performance across groups, which can encompass seemingly disparate bias phenomena.

\textbf{High-dimensional analysis of ML.}
A suite of works have analyzed the expected dynamics of ML in appropriate asymptotic scaling limits, e.g., the rate of features $d$ to samples $n$ converges to a finite values as $d$ and $n$ respectively scale towards infinity \citep{adlam2020neural,tripuraneni2021covariate,lee2023demystifying}. Notably, \citet{bach2024high} theoretically analyzes the double descent phenomenon \citep{spigler2019jamming,belkin2019reconciling} in ridge regression with random projections by computing deterministic equivalents for relevant random matrix quantities in a proportionate scaling limit.  
Like \citet{adlam2020neural, tripuraneni2021covariate, lee2023demystifying}, we leverage the tools of %
OVFPT \citep{mingo2017free}, which is at the intersection of random matrix theory (RMT) and functional analysis.
\RevisionThree{However, \citet{adlam2020neural} focus on training and testing a random features model on data from the same Gaussian distribution. Furthermore, \citet{tripuraneni2021covariate, lee2023demystifying} focus on training a random features model on data from one Gaussian distribution and testing the model on a different Gaussian. In contrast, we study the random features model in the setting of training on a mixture of Gaussian distributions and testing on each component. Because a mixture is more expressive than a single Gaussian, our theoretical results cannot be derived as a special case of these other works.}
Furthermore, our theory non-trivially generalizes \citet{bach2024high}, which we recover in Corollary \ref{corr:unregularized-edd} as a special case, and requires more powerful analytical techniques.

\Revision{Certain prior theoretical work precisely analyzes the bias of models trained on a mixture of data from different groups in a high-dimensional setting \citep{mannelli2022unfair, jain2024bias}. Like \citet{mannelli2022unfair, jain2024bias}, we study linear models and consider bias as the disparity in test performance of a model between groups. We further consider some similar factors that give rise to bias amplification (e.g., group imbalance, group data variance, inter-group similarity, dataset size). We also share some theoretical conclusions, such as bias can occur even when the groups have the same ground-truth weights (see Section \ref{sec:spurious-diatomic-setup}) and are balanced (Section \ref{sec:bias-amp-iso-results}). Additionally, we both discuss the paradigms of training a single model for both groups vs. separate models for each group.} \Revision{However, the {\em main distinction} between our work and \citet{mannelli2022unfair, jain2024bias} is that we precisely characterize how models amplify bias in different {\em parameterization regimes}, that is, we examine the impact of model size on bias. This enables us to expose new, richer insights into the impact of over- and underparameterization on bias amplification (see Figure \ref{fig:phase-diags}, Section \ref{sec:bias-amp-exp}, and Section \ref{sec:spurious-diatomic-setup}). See Appendix \ref{sec:rw-cont} for further comparison of our work to \citet{mannelli2022unfair, jain2024bias}.}

\section{Preliminaries}
\label{sec:prelim}
\subsection{Data Distributions}
We consider a ridge regression problem on a dataset from the following multivariate Gaussian mixture with two groups $s=1$ and $s=2$. These groups could represent different demographic groups or protected categories. 
\begin{align}
&\textbf{(Group ID) }\mathrm{Law}(s) = \mathrm{Bernoulli}(p),\\
    &\textbf{(Features) }\mathrm{Law}(x \mid s) = \mathcal N(0,\Sigma_s),\\
    &\textbf{(Ground-truth weights) }\mathrm{Law}(w_1^*) = \mathcal N(0, \Theta / d),\quad \mathrm{Law}(w_2^* - w_1^*) = \mathcal N(0, \Delta / d), \\
    &\textbf{(Labels) }\mathrm{Law}(y \mid s,x) = \mathcal N(f^\star_s(x),\sigma_s^2),\text{ with }f^\star_s(x) := x^\top w_s^*.
\end{align}
The scalar  $p \in (0,1)$ controls for the relative size of the two groups (e.g., $p=1/2$ in the balanced setting). For simplicity of notation, we define $p_1 = p$ and $p_2 = 1 - p$. The $d \times d$ positive-definite matrices $\Sigma_1$ and $\Sigma_2$ are the covariance matrices for the different groups. The $d$-dimensional vectors $w_1^*$ and $w_2^*$ are the ground-truth weights vectors for each group. $w_1^*$ and $w_2^* - w_1^*$ are independently sampled from zero-mean Gaussian distributions with covariances $\Theta / d$ and $\Delta / d$, respectively. In particular, setting $\Delta = 0$ corresponds to the case that both groups have identical ground-truth weights. We define $\Theta_1 = \Theta, \Theta_2=\Theta+\Delta$. Finally, $\sigma_s^2$ corresponds to the label noise for each group $s$. While we consider the case of two groups only for conciseness, our theoretical methods readily extend to any finite number of groups.

\subsection{Models and Metrics}
\label{sec:models-metrics}

\textbf{Learning.} A learner is given an IID sample $\mathcal D = \{(x_1,y_1),\ldots,(x_n,y_n)\} \equiv (X \in \mathbb{R}^{n \times d}, Y \in \mathbb{R}^n)$ of data from the above distribution and it learns a model for predicting the label $y$ from the feature vector $x$. Thus, $X$ is the total design matrix with $i$th row $x_i$, and $y$ the total response vector with $i$th component $y_i$.
Let $\mathcal D^s = (X \in \mathbb{R}^{n_s \times d}, Y \in \mathbb{R}^{n_s})$ be the data pertaining only to group $s$, so that $\mathcal D = \mathcal D^1 \cup \mathcal D^2$ is a partitioning of the entire dataset. Two choices are available to the learner: (1) learn a model a $\widehat f_s \in \mathcal F$ on each dataset $\mathcal D^s$, or (2) learn a single model $\widehat f \in \mathcal F$ on the entire dataset $\mathcal D$. In practice, a choice is made based on scaling vs. personalization considerations.

We consider two solvable settings for linear models: classical ridge regression in the ambient input space, and ridge regression in a feature space given by random projections. The latter allows us to study the role of model size in ML bias, by varying the output dimension of the random projection mapping. This output dimension $m$ controls the size of a feedforward neural network in a simplified regime \citep{maloney2022solvable,bach2024high}.

\textbf{Classical Ridge Regression.}
We will first consider the function class $\mathcal F \subseteq \{\mathbb R^d \to \mathbb R\}$ of linear ridge regression models without random projections.
For any vector $w \in \mathbb R^d$, the model $f$ with parameters $w$ is defined by $f(x) = x^\top w,\text{ for all }x \in \mathbb R^d,$ and is learned with $\ell_2$-regularization. We define the generalization error or risk of any model $f$ with respect to group $s$ as:
\begin{equation}
    R_s(f) = \mathbb E\,[(f(x) - f_s^\star(x))^2 \mid s].
\end{equation}
We consider ridge regression because in addition to its analytical tractability, it can be viewed as the asymptotic limit of many learning problems \citep{dobriban2018high, richards2021asymptotics, hastie2022surprises}. We now formally define some metrics related to bias amplification.
\begin{definition}[Bias Amplification]
\label{def:bias-amp}
    We isolate the contribution of the model to bias when learning from data with different groups. This intuitive conceptualization of bias amplification allows us to quantify the phenomenon. Grounded in the literature \citep{bell2023simplicity}, we define the Expected Difficulty Disparity (EDD) as:
    \begin{eqnarray}
        EDD = | \mathbb E\, R_2(\widehat f_2) - \mathbb E\, R_1(\widehat f_1)|,
    \end{eqnarray}
    where the expectations are w.r.t. randomness in the training data and any other sources of randomness in the models. The $EDD$ captures the difference in test risk between models trained and evaluated on each group separately. In contrast, we define the Observed Difficulty Disparity (ODD) as:
\begin{eqnarray}
    ODD = | \mathbb E\, R_2(\widehat f) - \mathbb E\, R_1(\widehat f)|.
\end{eqnarray}
The $ODD$ captures the bias (i.e., difference in test risk between groups) of a model trained on both groups. Finally, we define the Amplification of Difficulty Disparity (ADD) as $ADD = \frac{ODD}{EDD}$. We say that bias amplification occurs when $ADD > 1$. See Appendix \ref{sec:rw-cont} for further motivation of $ADD$.
\end{definition}

\textbf{Ridge Regression with Random Projections.}
We consider feedforward neural networks in a simplified regime which can be approximated via random projections, i.e., a one-hidden-layer neural network $f(x) = v^\top Sx$ with a {\em linear} activation function. In particular, we extend classical ridge regression by transforming our learned weights as $\widehat w = S \widehat \eta \in \mathbb{R}^d$, where $S \in \mathbb{R}^{d \times m}$ is a random projection with entries that are IID sampled from ${\cal N} (0, 1 / d)$. Ridge regression with random projections offers analytical tractability while exposing bias amplification phenomena related to model size; such phenomena are not exposed by classical ridge regression (see Figure \ref{fig:phase-diags}).
\Revision{Moreover, it has been shown that in high dimensions, training a one-hidden-layer neural network with gradient descent effectively learns a linear predictor over random features \citep{Yehudai2019RandomFeatures}. Furthermore, \citet{Adlam2020FineGrained, bach2024high}; inter alia are able to reproduce interesting phenomena like double descent using the random features model. Nevertheless, \citet{Yehudai2019RandomFeatures} have shown that the model often cannot learn even a ReLU neuron, suggesting that some mechanisms of bias amplification could be different in nonlinear networks.}

\section{Theoretical Analysis}
\textbf{Assumptions.} Some of our theorems will require standard technical assumptions that %
we detail here and in Appendix \ref{sec:assumptions}. Assumption \ref{ass:scaling-random-proj} describes the proportionate scaling limits, standard in RMT, in which we will work. These limits %
enable us to derive deterministic analytical formulae for the expected test risk of models. Our experiments (see Sections \ref{sec:bias-amp-exp} and \ref{sec:min-group-exp}) validate our theory.

\begin{assumption}
In the case of ridge regression with random projections, we will work in the following proportionate scaling limit:
\begin{gather}
n,n_1,n_2,d \to \infty,\quad n_1/n \to p_1,\,n_2/n \to p_2, \, d/n \to \phi,\,m/n \to \psi,\,m/d \to \gamma, \\
d/n_1 \to \phi_1,\,m/n_1 \to \psi_1,\quad d/n_2 \to \phi_2,\,m/n_2 \to \psi_2,
\label{eq:proportionate-random-proj}
\end{gather}
for some constants $\phi_1,\phi_2,\phi,\psi_1,\psi_2,\psi \in (0,\infty)$. The scalar $\phi$ captures the rate of features to samples. Observe that $\phi=p_1 \phi_1$ and $\phi = p_2 \phi_2$. We note that $\phi\gamma = \psi$ and $\phi_s\gamma = \psi_s$. The scalar $\psi$ captures the rate of parameters to samples. The setting $\psi > 1$ (resp. $\psi < 1$) corresponds to the overparameterized  (resp. underparameterized) regime.
\label{ass:scaling-random-proj}
\end{assumption}

\subsection{Main Result: Ridge Regression with Random Projections}
\label{sec:randproj}

To provide a mechanistic understanding of how ML models may amplify bias, our theory elucidates differences in the test error between groups when a single model is trained on a combination of data from both groups vs. when separate models are trained on data from each group. We first consider the classical ridge regression model in Appendix \ref{sec:warm-up-classical} before studying ridge regression with random projections below, which is a more realistic but still analytically solvable setup.

\textbf{Single Random Projections Model Learned for Both Groups.} We first consider the ridge regression model $\widehat f$ with random projections, which is learned using empirical risk minimization and $\ell_2$-regularization with penalty $\lambda$. The parameter $\widehat w$ of the linear model $\widehat f$ is given by the following optimization problem:
\begin{eqnarray}
\widehat w = S\widehat\eta \in \mathbb R^d,\text{ with } \widehat w = \arg\min_{\eta \in \mathbb R^m} L(\eta)= \sum_{s=1}^2 n^{-1}\|X_sS\eta-Y_s\|^2_2 + \lambda\|\eta\|_2^2.
\end{eqnarray}
Explicitly, one can write $\widehat w = S(Z^\top Z + n\lambda I_m)^{-1}Z^\top Y$, where $Z := XS$.
Before presenting our result for the random projections model, we provide some relevant definitions.

\begin{definition}
\label{def:cosmo-constants}
Let $\ntrace A := (1/d)\trace A$ be the normalized trace operator and $(e_1,e_2,\tau,u_1,u_2,\rho)$ be the unique positive solution to the following fixed-point equations:
\begin{gather}
1/\tau = 1 + \ntrace L K^{-1},\quad 1/e_s = 1 + \psi \tau \ntrace \Sigma_s K^{-1},\text{ for }s\in\{1,2\}, \\
\rho = \tau^2 \ntrace (\gamma \rho L^2 + \lambda^2 D) K^{-2},\quad u_s = \psi e_s^2 \ntrace \Sigma_s (\gamma \tau^2 D + \rho I_d) K^{-2},\text{ for }s\in\{1,2\},\\
\text{ where: }
L = p_1 e_1 \Sigma_1 + p_2 e_2 \Sigma_2,\, K = \gamma \tau L + \lambda I_d,\, D = p_1 u_1 \Sigma_1 + p_2 u_2 \Sigma_2 + B.
\end{gather}

For deterministic $d \times d$ PSD matrices $A$ and $B$, we define the following auxiliary quantities:
\begin{align}
    h_j^{(1)} (A) &:= p_j \gamma e_j \tau \ntrace A \Sigma_j K^{-1}, \\
    h_j^{(2)} (A, B) &:= p_j \gamma \ntrace A \Sigma_j (\gamma e_j \tau^2 B + p_{j'} \gamma \tau^2  \Sigma_{j'} (e_j u_{j'} - e_{j'} u_j) + e_j \rho I_d - \lambda u_j \tau I_d) K^{-2}, \\
    h_j^{(3)} (A, B) &:= p_j \ntrace A \Sigma_j (\gamma e_j^2 p_j \Sigma_j (p_{j'}\gamma \tau^2 u_{j'}  \Sigma_{j'} + \gamma \tau^2 B + \rho I_d)\nonumber \\
    &+ u_j (p_{j'} \gamma e_{j'} \tau  \Sigma_{j'} + \lambda I_d)^2) K^{-2}, \\
    h_j^{(4)} (A, B) &:= p_j \gamma p_{j'} \ntrace \Sigma_j \Sigma_{j'} A (\gamma \tau^2 ( e_j e_{j'}B - p_je_j^2 u_{j'}\Sigma_j  - p_{j'} \Sigma_{j'} e_{j'}^2 u_j)\nonumber\\
    &- \lambda \tau (e_j u_{j'} + e_{j'} u_j) I_d + e_j e_{j'} \rho I_d) K^{-2}.
\end{align}
\end{definition}
In Appendix \ref{sec:edd-random-proj-proof}, we intuitively interpret the scalars $e_s, \tau, u_s, \rho$ in the setting where a separate model is learned for each group. In essence, our theory extends the scalars to the more general setting where a single model is trained on a mixture of data from different groups. Furthermore, each of the terms $h_j^{(1)}, \ldots, h_j^{(4)}$ capture the limiting values of different sources of covariance between the sample covariance matrices for the groups, the resolvent matrix, and the random projections matrix $S$. These sources of covariance are written explicitly in Appendix \ref{ref:rand-proj-proof}, and naturally arise from expanding the solution to the ridge regression problem with random projections.

We now present Theorem \ref{thm:odd-theory-rand-proj}, which is our {\em main contribution}. Theorem \ref{thm:odd-theory-rand-proj} presents a novel bias-variance decomposition for the test error $R_s(\widehat f)$ for each group $s\in\{1,2\}$ in the context of ridge regression with random projections. It is a non-trivial generalization of theories in high-dimensional ML which requires the powerful machinery of OVFPT (see proof in Appendix \ref{ref:rand-proj-proof}). 

\begin{theorem}%
Under Assumptions \ref{ass:commute} and \ref{ass:scaling-random-proj}, it holds that $ R_s (\widehat f) \simeq B_s (\widehat f) + V_s (\widehat f)$, with
    \begin{align}
    V_s (\widehat f) &= \sum_{j = 1}^2 \sigma_j^2 \phi h_j^{(2)} (I_d, \Sigma_s), \\
    B_s (\widehat f) &= \ntrace \Theta_s \Sigma_s + h_1^{(3)} (\Theta_s, \Sigma_s) + h_2^{(3)} (\Theta_s, \Sigma_s) + 2 h_1^{(4)} (\Theta_s, \Sigma_s) \\
    &- 2 h_1^{(1)} (\Theta_s \Sigma_s) - 2 h_2^{(1)} (\Theta_s \Sigma_s) + h_{s'}^{(3)} (\Delta, \Sigma_s) \\
    &- 2 \begin{cases}
    0, & s = 1, \\
    h_1^{(3)} (\Delta, \Sigma_2) + h_2^{(4)} (\Delta, \Sigma_2) - h_1^{(1)} (\Delta \Sigma_2), & s = 2.
    \end{cases}
    \end{align}
\label{thm:odd-theory-rand-proj}
\end{theorem}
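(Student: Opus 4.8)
The plan is to start from the closed form $\widehat w = S M^{-1} Z^\top Y$ with $Z = XS$, $M := Z^\top Z + n\lambda I_m$, and to use that, since $x\mid s \sim \mathcal N(0,\Sigma_s)$, the risk is $R_s(\widehat f) = \|\widehat w - w_s^*\|_{\Sigma_s}^2$ where $\|v\|_A^2 := v^\top A v$. Substituting $Y_s = X_s w_s^* + \eps_s$ and writing $\bar w := \E_{\eps}[\widehat w \mid X,S,w^*] = \sum_{j=1}^2 T_j w_j^*$ with $T_j := S M^{-1} S^\top X_j^\top X_j$, the split $\widehat w = \bar w + (\widehat w - \bar w)$ is $\|\cdot\|_{\Sigma_s}$-orthogonal after conditioning on $(X,S,w^*)$, because then $\widehat w - \bar w = S M^{-1} S^\top X^\top \eps$ is mean-zero and $\bar w - w_s^*$ is deterministic. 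Hence $R_s = B_s + V_s$ with $B_s = \E\|\bar w - w_s^*\|_{\Sigma_s}^2$ and $V_s = \sum_j \sigma_j^2\, \E\,\Tr[\Sigma_s S M^{-1}S^\top X_j^\top X_j S M^{-1}S^\top]$. This reduces the theorem to computing deterministic equivalents (in the limit of Assumption \ref{ass:scaling-random-proj}) of a short list of ``atomic'' trace functionals of $M^{-1}$, $S M^{-1}S^\top$, the group sample covariances $X_j^\top X_j$, and $S$, extending the classical-ridge warm-up of Appendix \ref{sec:warm-up-classical}.

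\textbf{Reducing the bias term.}
Reparametrize $w_1^* = u$, $w_2^* = u + \delta$ with $u \sim \mathcal N(0,\Theta/d)$, $\delta \sim \mathcal N(0,\Delta/d)$ independent, so that $\bar w - w_s^* = (T_1 + T_2 - I_d)u + (T_2 - \mathbf 1\{s=2\} I_d)\delta$; averaging over $w^*$ yields
\begin{align*}
B_s = {}& \tfrac{1}{d}\,\E\,\Tr\big[\Sigma_s(T_1+T_2-I_d)\,\Theta\,(T_1+T_2-I_d)^\top\big] \\
& + \tfrac{1}{d}\,\E\,\Tr\big[\Sigma_s(T_2-\mathbf 1\{s=2\}I_d)\,\Delta\,(T_2-\mathbf 1\{s=2\}I_d)^\top\big].
\end{align*}
Expanding the two quadratic forms, every summand has one of four shapes: $\ntrace[\Sigma_s C]$ for deterministic $C$ (these collect into the leading $\ntrace\Theta_s\Sigma_s$, using that $w_s^*$ has covariance $\Theta_s/d$); $\ntrace[\Sigma_s C T_j]$; $\ntrace[\Sigma_s T_j C T_j^\top]$ (equal group indices); and $\ntrace[\Sigma_s T_j C T_{j'}^\top]$ with $j\ne j'$. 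I claim these converge respectively to $h_j^{(1)}(C\Sigma_s)$, $h_j^{(3)}(C,\Sigma_s)$, $h_j^{(4)}(C,\Sigma_s)$, and that the $V_s$ integrand converges (with the appropriate $d/n$ normalization) to $\phi\,h_j^{(2)}(I_d,\Sigma_s)$. Using linearity of the $h^{(k)}$ in their matrix arguments to rewrite the $\Theta$-pieces in terms of $\Theta_s$ plus a $\Delta$-correction, together with the identity $h_1^{(4)} = h_2^{(4)}$ (valid under the commutativity Assumption \ref{ass:commute}), reorganizes the collected terms into exactly the claimed formula; the extra $\Delta$-contributions in the $s=2$ branch are precisely those introduced by $w_2^*$ carrying covariance $\Theta_2 = \Theta + \Delta$ rather than $\Theta$.

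\textbf{Computing the deterministic equivalents (the crux).}
The substantive step is proving convergence of the atomic functionals. Writing $X_s = G_s \Sigma_s^{1/2}$ with $G_s$ having i.i.d.\ standard Gaussian entries, $M^{-1}$ and $S M^{-1}S^\top$ are functions of the three \emph{independent} rectangular Gaussian matrices $G_1, G_2, S$. I would construct a linear pencil --- a block matrix linear in $G_1, G_2, S$ and the deterministic data $\Sigma_1^{1/2}, \Sigma_2^{1/2}, \lambda, I$ --- whose inverse displays, as prescribed blocks, each resolvent product appearing in $B_s$ and $V_s$. Since $G_1, G_2, S$ are asymptotically free with amalgamation over the algebra of block-diagonal deterministic matrices, the operator-valued Stieltjes transform of the pencil obeys a matrix Dyson (subordination) equation whose right-hand side is the covariance map of the pencil's entries; solving it yields the fixed-point system of Definition \ref{def:cosmo-constants}. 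Its first-order part --- the equations in $\tau, e_1, e_2$ and the matrices $L, K$ --- governs $\ntrace[\,\cdot\, M^{-1}]$ and $\ntrace[\,\cdot\, S M^{-1}S^\top]$; the second-order part, obtained by perturbing the fixed point with a source term, is where the matrix $D = p_1 u_1 \Sigma_1 + p_2 u_2 \Sigma_2 + B$ and the scalars $u_1, u_2, \rho$ arise (hence the $K^{-2}$ factors), and it yields $h_j^{(2)}, h_j^{(3)}, h_j^{(4)}$. The main obstacle is exactly the bookkeeping of the ``sources of covariance'' between $X_j^\top X_j$, the resolvent, and $S$ flagged after Definition \ref{def:cosmo-constants}: each such cross-term feeds a distinct contribution into the second-order equations, and it is the two-group mixture structure --- two distinct covariances entering a single shared resolvent $M^{-1}$ --- that blocks any reduction to the single-population analyses of \citet{adlam2020neural, tripuraneni2021covariate, lee2023demystifying, bach2024high}. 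As a consistency check, setting $\Delta = 0$, $\Sigma_1 = \Sigma_2$, $\sigma_1 = \sigma_2$ collapses the system to recover Corollary \ref{corr:unregularized-edd}.
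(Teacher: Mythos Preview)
Your proposal is correct and follows essentially the same route as the paper: reduce $R_s(\widehat f)$ to four ``atomic'' trace functionals of $SM^{-1}S^\top$ and the group sample covariances, then compute their deterministic equivalents via linear pencils and operator-valued free probability, yielding the fixed-point system of Definition~\ref{def:cosmo-constants}. Two minor differences worth noting: (i) you parametrize the bias with the independent pair $(u,\delta)$, which kills the cross term and forces a $\Theta\to\Theta_s$ rewrite at the end, whereas the paper works directly with the correlated pair $(w_s^*,\delta)$ and carries an explicit cross term $-2\,\mathbb E\,\ntrace\,\Delta(M SRS^\top - I_d)\Sigma_2 SRS^\top M_1$ for $s=2$; and (ii) you describe obtaining the second-order quantities $(u_1,u_2,\rho)$ by perturbing the first-order fixed point with a source, while the paper instead builds a separate, larger linear pencil for each of $r_j^{(2)},r_j^{(3)},r_j^{(4)}$ and reads $(u_1,u_2,\rho)$ off as additional entries of the resulting matrix Dyson equation---both techniques are standard and lead to the same answer.
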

The unregularized limit corresponds to the minimum-norm interpolator, and alternatively may be viewed as training a neural network until convergence \citep{ali2019continuous}. We discuss methods for, and the complexity of, solving the above fixed-point equations in Appendix \ref{sec:fixed-point-complexity}. Furthermore, in Appendix \ref{sec:unregularized-edd-proof}, we directly express the bias and variance of the test risk of an unregularized model trained on just group $s$ in terms of the second and first-order degrees of freedom of  $\Sigma_s$ and the parameterization rate $\psi_s$. Moreover, in Appendix \ref{sec:bias-amp-pow-setup}, we derive the approximate bias amplification profile of an unregularized model with respect to the ratio $c = \sigma_2^2 / \sigma_1^2$ of label noises, in the setting where the eigenspectra of the covariance matrices have power-law decay.

\textbf{Separate Random Projections Model Learned for Each Group.} We now consider the ridge regression models $\widehat f_1$ and $\widehat f_2$ with random projections, which are learned using empirical risk minimization and $\ell_2$-regularization with penalties $\lambda_1$ and $\lambda_2$, respectively. In particular, we have the following optimization problem for each group $s$:
$\arg\min_{\eta \in \mathbb R^m} L(w)= n_s^{-1}\|X_sS\eta-Y_s\|^2_2 + \lambda_s\|\eta\|_2^2$.
Alternatively, the reader can think of each $\widehat f_s$ as the limit of $\widehat f$ when $p_s \to 1$. In this setting, we deduce Theorem \ref{thm:edd-theory-rand-proj}, which follows from Theorem \ref{thm:odd-theory-rand-proj}.
\begin{theorem}
Under Assumptions \ref{ass:commute} and \ref{ass:scaling-random-proj}, it holds that $R_s (\widehat f_s) \simeq B_s (\widehat f_s) + V_s (\widehat f_s)$, where $V_s (\widehat f_s) = \lim_{p_s \to 1} V_s (\widehat f)$ and $B_s (\widehat f_s) = \lim_{p_s \to 1} B_s (\widehat f)$ (see Appendix \ref{sec:edd-random-proj-proof} for explicit formulae).
\label{thm:edd-theory-rand-proj}
\end{theorem}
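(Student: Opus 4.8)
The plan is to deduce Theorem~\ref{thm:edd-theory-rand-proj} from Theorem~\ref{thm:odd-theory-rand-proj} by tracking the degeneration $p_s \to 1$. The first step is to observe that, at the level of the optimization problem, the per-group estimator $\widehat f_s$ \emph{is} the pooled estimator $\widehat f$ with all of its mass on group $s$: in $L(\eta) = \sum_{t=1}^2 n^{-1}\|X_t S\eta - Y_t\|_2^2 + \lambda\|\eta\|_2^2$ the complementary-group term carries weight $n_{s'}/n \to 1-p_s$ while the group-$s$ term carries $n_s/n \to p_s$, so as $p_s \to 1$ the former disappears and, after the harmless rescaling $n^{-1}\to n_s^{-1}$, one is left with $n_s^{-1}\|X_s S\eta - Y_s\|_2^2 + \lambda_s\|\eta\|_2^2$ at parameterization rate $\psi_s = \gamma\phi_s$. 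The commutativity condition (Assumption~\ref{ass:commute}) is a structural hypothesis on $\Sigma_1,\Sigma_2,\Theta,\Delta$ that is untouched by this limit, so it continues to hold. It therefore suffices to show that the deterministic equivalents of Theorem~\ref{thm:odd-theory-rand-proj} possess a well-defined $p_s\to 1$ limit and that this limit is the true risk $R_s(\widehat f_s)$.

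For the first half I would pass to the limit inside the fixed-point system of Definition~\ref{def:cosmo-constants}. Writing $p_{s'}=1-p_s$, the matrices collapse to $L \to e_s\Sigma_s$, $K \to \gamma\tau e_s\Sigma_s + \lambda I_d$, and $D \to u_s\Sigma_s + B$, and the equations for $(e_s,\tau,u_s,\rho)$ reduce precisely to the single-group fixed-point system of Appendix~\ref{sec:warm-up-classical} specialized to $(\Sigma_s,\psi_s)$ (with $\psi_s=\psi$ in the limit). The only point needing care is that the complementary constants do not spoil this: $1/e_{s'} = 1 + \psi\tau\ntrace\Sigma_{s'}K^{-1}$ and the analogous identity for $u_{s'}$ keep $e_{s'},u_{s'}$ bounded — equivalently $p_{s'}e_{s'}\to 0$ and $p_{s'}u_{s'}\to 0$ — as long as $K\succ 0$, which holds for $\lambda>0$ and, in the $\lambda\to 0^+$ minimum-norm case, by the parameterization-rate hypotheses of Assumption~\ref{ass:scaling-random-proj}; uniqueness of the positive solution then identifies the limiting constants with the single-group ones.

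Substituting these limits into $V_s(\widehat f)$ and $B_s(\widehat f)$ and collecting terms completes the computation. Every $h_j^{(k)}$ with $j=s'$ carries an explicit prefactor $p_{s'}$, and $h^{(4)}$ always carries the product $p_1p_2$, so all complementary contributions vanish: the $2h_1^{(4)}(\Theta_s,\Sigma_s)$ term, the term $h_{s'}^{(3)}(\Delta,\Sigma_s)$, and, when $s=2$, the entire bracket $h_1^{(3)}(\Delta,\Sigma_2)+h_2^{(4)}(\Delta,\Sigma_2)-h_1^{(1)}(\Delta\Sigma_2)$. What remains is $V_s(\widehat f_s)=\sigma_s^2\phi_s\,h_s^{(2)}(I_d,\Sigma_s)$ and $B_s(\widehat f_s)=\ntrace\Theta_s\Sigma_s + h_s^{(3)}(\Theta_s,\Sigma_s) - 2h_s^{(1)}(\Theta_s\Sigma_s)$, evaluated at the single-group constants; in particular $B_1(\widehat f_1)$ is $\Delta$-free and $V_1(\widehat f_1)$ involves only $\sigma_1^2$, while $B_2(\widehat f_2)$ depends on $\Delta$ only through $\Theta_2=\Theta+\Delta$ — exactly as it must, since training on group $s$ alone never sees the other group's ground truth or label noise. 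These coincide with the explicit formulae recorded in Appendix~\ref{sec:edd-random-proj-proof}.

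The main obstacle is justifying the interchange of the high-dimensional limit with the subsequent $p_s\to 1$ limit: Theorem~\ref{thm:odd-theory-rand-proj} establishes $\simeq$ only for each fixed $p_s\in(0,1)$, whereas $p_s=1$ is a degenerate boundary ($n_{s'}=0$). The clean resolution I would adopt is to re-run the OVFPT linearization and resolvent computation of Appendix~\ref{ref:rand-proj-proof} directly with the group-$s'$ block absent: every step — the block operator-valued Stieltjes transform, the self-consistent equations, and the differentiation/contour identities used to read off the $h^{(k)}$'s — goes through verbatim with a single block, so one obtains $\simeq$ for $\widehat f_s$ on the nose and then merely checks algebraically that it agrees with the $p_s\to 1$ limit above. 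The remaining delicate point is the joint $\lambda\to 0^+$, $\psi_{s'}\to\infty$ regime, where one must confirm the complementary constants stay finite; this is handled exactly as in the regularity arguments underpinning Theorem~\ref{thm:odd-theory-rand-proj}.
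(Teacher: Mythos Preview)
Your approach is essentially the same as the paper's: deduce the separate-model risk from Theorem~\ref{thm:odd-theory-rand-proj} by taking $p_s\to 1$ (equivalently $p_{s'}\to 0$). In fact the paper's proof in Appendix~\ref{sec:edd-random-proj-proof} is a single sentence to this effect, so your sketch is considerably more thorough---you correctly identify which $h_j^{(k)}$ terms vanish via the $p_{s'}$ prefactors, arrive at the right explicit formulae, and flag the interchange-of-limits issue that the paper does not address. One small slip: the single-group fixed-point system you reduce to is the one in Definition~\ref{def:cosmo-constants-edd} of Appendix~\ref{sec:edd-random-proj-proof}, not Appendix~\ref{sec:warm-up-classical} (which treats the classical model without random projections).
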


\textbf{Phase Diagram.}
The phase diagram for the random projections model (Figure \ref{fig:phase-diags}) offers rich insights into how the rate of parameters to samples ($\psi$), in interaction with the rate of features to samples ($\phi$), affects bias amplification. In the $ODD$ and $EDD$ profiles, we observe phase transitions at $\phi = \psi$ (when $\psi < 0.5$) and $\psi = 0.5$ (i.e., $\psi_1 = \psi_2 = 1$), where these metrics begin decreasing significantly. $\psi_s = 1$ is a known interpolation threshold for random features models \citep{Adlam2020FineGrained, dascoli2020doubletrouble}. In contrast, at $\psi = 1$ and $\phi = 1$, the $ODD$ drastically increases. Furthermore, at $\phi = \psi$ (when $\psi < 0.5$) and $\phi = 0.5$ (for $\psi > 0.5$), the $EDD$ greatly increases. Accordingly, in the $ADD$ profile, we observe phase transitions at $\phi = \psi$ (when $\psi < 0.5$), $\psi = 0.5$, $\psi = 1$, and $\phi = 1$, where bias amplification begins occurring (i.e., $ADD > 1$). However, bias seems to be deamplified (i.e., $ADD < 1$) at $\phi = \psi$ (when $\psi < 0.5$) and $\phi = 0.5$ (when $\psi > 0.5$). Some observations are less visible due the granularity of the color thresholding in Figure \ref{fig:phase-diags}.

\begin{figure}[t!]
    \centering
    \includegraphics[width=0.9\linewidth]{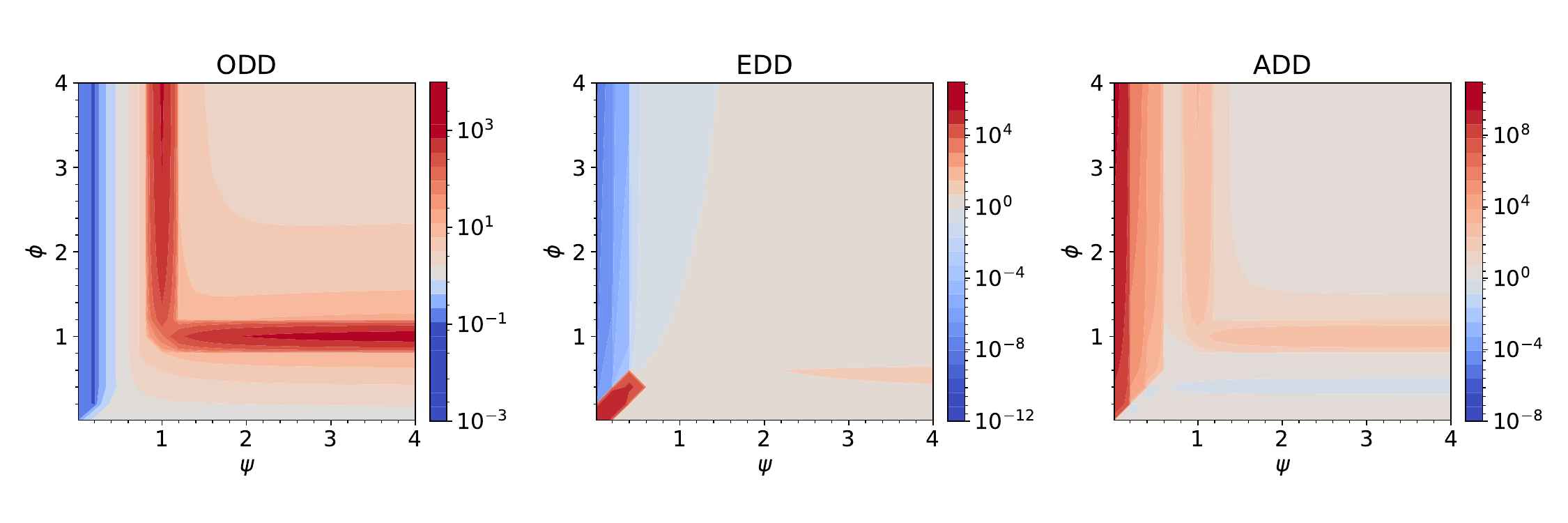}
    \vspace{-.4cm}
    \caption{\textbf{$ODD$, $EDD$, and $ADD$ phase diagrams for ridge regression with random projections.} We plot the bias amplification phase diagrams with respect to $\phi$ (rate of features to samples) and $\psi$ (rate of parameters to samples), as predicted by our theory for ridge regression with random projections (Theorems \ref{thm:odd-theory-rand-proj}, \ref{thm:edd-theory-rand-proj}). Red regions indicate theoretical predictions greater than 1 (i.e., bias amplification in the rightmost plot), while blue regions indicate theoretical predictions less than 1 (i.e., bias deamplification in the rightmost plot). Darkness indicates intensity. We consider isotropic covariance matrices: $\Sigma_1 = 2 I_d, \Sigma_2 = I_d$, $\Theta = 2 I_d$, $\Delta = I_d$. Additionally, $n = 1 \times 10^4, \sigma_1^2 = \sigma_2^2 = 1$. We further choose $\lambda = \lambda_1 = \lambda_2 = 1 \times 10^{-6}$ to approximate the minimum-norm interpolator. We show that bias amplification can occur even in the balanced data setting, i.e., when $p_1 = p_2 = 1/2$.}
    \label{fig:phase-diags}
\end{figure}

\section{Bias Amplification}
\label{sec:bias-amp-exp}

We empirically show how ridge regression models with random projections may amplify bias when a single model is trained on a combination of data from different groups vs. when separate models are trained on data from each group \citep{bell2023simplicity}. We further show how our theory: (1) predicts bias amplification, and (2) exposes new, interesting bias amplification phenomena in various regimes.

\subsection{Isotropic Covariance}
\label{sec:bias-amp-iso-results}
\label{sec:bias-amp-iso-setup}

\textbf{Setup.} To mirror the setting of \citet{bell2023simplicity}, we consider balanced data ($p_1 = p_2 = 1/2$) without spurious correlations ($\Sigma_1 = a_1 I_d, \Sigma_2 = a_2 I_d$, for $a_1, a_2 > 0$). The groups have different ground-truth weights ($\Theta = 2 I_d, \Delta = I_d$). Refer to App. \ref{sec:common-exp-details} for full details due to space limitations.

\textbf{Validation of Theory.} Figure \ref{fig:bias-amp} and the figures in Appendix \ref{sec:bias-amp-plots} reveal that Theorems \ref{thm:odd-theory-rand-proj} and \ref{thm:edd-theory-rand-proj} closely predict the $ODD$, $EDD$, and $ADD$ of ridge regression models with random projections under diverse settings. Note that, as indicated by the error bars, some of our empirical estimates (especially those with larger magnitude) have higher variance and their variance is influenced by the choice of $\psi, \phi, a_1, a_2, \sigma_1^2, \sigma_2^2$. \textbf{Notably, our theory predicts the observation of \cite{bell2023simplicity} that models can amplify bias even with balanced groups and without spurious correlations.} We present new phenomena predicted by our theory below.

\textbf{Effect of Label Noise.} In the $ODD$ profile, when the label noise ratio $c = \sigma_2^2 / \sigma_1^2$ is larger, the right tail is higher for $\phi$ (rate of features to samples) closer to 1 than other $\phi$. This suggests that under overparameterization, a larger noise ratio and similar number of features and samples can increase disparities in test risk between groups when a single model is learned for both groups. We aim to explain this phenomenon analytically in Section \ref{sec:bias-amp-pow-setup}.
Moreover, the $EDD$ curve is generally higher for larger $c$, suggesting that a larger noise ratio increases disparities in test risk when a separate model is learned for each group. This finding is supported by our experiment with real data (see Figure \ref{fig:colored-mnist-1-errors}).

\textbf{Effect of Model Size.} We observe interesting divergent behavior as $\psi$ (rate of parameters to samples) increases for different $\phi$ (rate of features to samples). When $\phi > 1$, as $\psi$ increases, the $ODD$ increases and then decreases, peaking at the interpolation threshold at $\psi = 1$. Similarly, when $\phi > 0.5$ (i.e., $\phi_1 = \phi_2 > 1$), as $\psi$ increases, the $EDD$ increases and then decreases, peaking at the interpolation threshold at $\psi = 0.5$ (i.e., $\psi_1 = \psi_2 = 1$). Accordingly, when $\phi > 0.5$, bias is effectively deamplified ($ADD < 1$) at $\psi = 0.5$ and when $\phi > 1$, bias amplification peaks ($ADD > 1$) at $\psi = 1$. In contrast, when $\phi < 1$, the $ODD$ decreases as $\psi$ increases, plateauing at different finite values. Similarly, when $\phi < 0.5$, the $EDD$ generally decreases and plateaus as $\psi$ increases; in some cases, the $EDD$ dips and/or increases and plateaus. A notable exception to these trends occurs when $\phi \approx 1$, with the corresponding $ODD$ and $ADD$ curves increasing as $\psi$ increases, plateauing at a significantly larger value (i.e., $ADD \gg 1$) than the curves corresponding to other values of $\phi$. We observe a similar phenomenon for the $EDD$ curves when $\phi_1 = \phi_2 \approx 1$. Hence, overparameterization can greatly amplify bias when the number of features is close to the number of samples. Regardless of the regime of $\phi$, the left tail of the $ADD$ profile appears to plateau at 1. The right tail plateaus at different finite values, with the curves corresponding to $\phi > 1$ consistently plateauing above 1. This suggests that when there are more features than samples, overparameterization amplifies bias.

\Revision{Some of the peaks and valleys in Figure \ref{fig:bias-amp} can be attributed to double descent. However, double descent in high dimensions has primarily been studied in the setting where data are drawn from a single Gaussian distribution; this corresponds to the $EDD$ setting, where a separate model is learned for each group. In Figure \ref{fig:phase-diags}, we observe a double descent peak in the $EDD$ at $\psi_1 = \psi_2 = 1$ \citep{Adlam2020FineGrained, dascoli2020doubletrouble}. Our work extends the theoretical treatment of double descent to the setting of training a model on a mixture of Gaussians. However, our theory of bias amplification cannot be reduced to double descent. For example, we note other interpolation thresholds in Figure \ref{fig:phase-diags}; our use of a linear activation does not have a confounding effect here, as interpolation thresholds have also been observed in random features models with nonlinear activations \citep{adlam2020neural}. In addition, much of Sections \ref{sec:bias-amp-exp} and \ref{sec:min-group-exp}, and Appendix \ref{sec:bias-amp-pow-setup}, are devoted to studying the tails or limiting behavior of bias amplification with respect to $\psi$ and $\phi$.}

\textbf{Effect of Number of Features.} In the $ODD$ and $ADD$ profiles, when the rate of features to samples $\phi > 1$, the right tail generally plateaus at higher values (i.e., greater than 1) when $\phi$ is closer to 1. This suggests that with a similar number of features and samples, under overparameterization, bias amplification increases. In contrast, when $\phi < 1$, the right tail of the $ODD$ and $EDD$ curves seems to plateau at higher values when $\phi$ is larger. Regardless of the regime of the rate of features to samples $\phi$, the left tails of the $ODD$ and $EDD$ curves are generally higher for larger $\phi$.  

\begin{figure}[t!]
    \centering
    \includegraphics[trim={0 0 0 1.25cm},clip,width=0.9\linewidth]{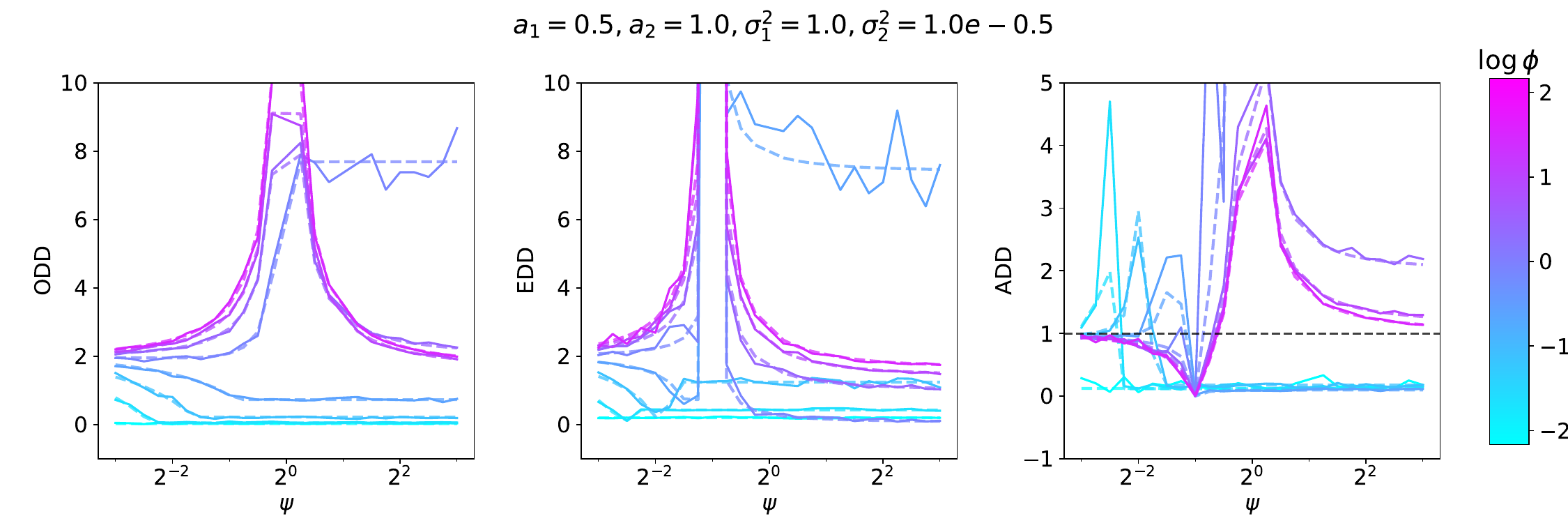}
        \vspace{-.25cm}
    \caption{\textbf{Our theory predicts that models can amplify bias even with balanced groups and without spurious correlations.} We empirically validate our theory (Theorems \ref{thm:odd-theory-rand-proj} and \ref{thm:edd-theory-rand-proj}) for $ODD$, $EDD$, and $ADD$ under the setup described in Section \ref{sec:bias-amp-iso-setup}, with $a_1 = 0.5, a_2 = 1, \sigma_1^2 = 1$, and $\sigma_2^2 = 1 \times 10^{-5}$. The solid lines capture empirical values while the corresponding lower-opacity dashed lines represent what our theory predicts. We plot $ODD$ and $EDD$ on the same scale for easy comparison, and include a black dashed line at $ADD = 1$ to contrast bias amplification vs. deamplification. We include all the plots with error bars in Appendix \ref{sec:bias-amp-plots}.}
    \label{fig:bias-amp}
    \vspace{-0.5cm}
\end{figure}

\subsection{Regularization and Training Dynamics}
\label{sec:reg-training-dynamics}

We now explore how regularization and training dynamics affect bias amplification.

\textbf{Setup.}
\label{sec:over-time-setup} We revisit the experimental setup for Section \ref{sec:bias-amp-iso-setup}. We modulate $a_1, a_2, \psi$ (rate of parameters to samples), as well as $\lambda$ (regularization penalty) to understand the effects of regularization and early stopping on bias amplification. We fix $\sigma_1^2 = \sigma_2^2 = 1$, and the rate of features to samples $\phi = 0.75$.

\textbf{Effect of Regularization and Training Time.} \Revision{In simplistic settings,} we can simulate model learning over training time $t$ by setting $\lambda = 1 / t$ \citep{ali2019continuous}. In the figures in Appendix \ref{sec:bias-amp-training}, we observe that regardless of the regime of $\psi$, $ADD \approx 1$ (i.e., there is neither bias amplification nor deamplification) with high regularization or a short training time. When $\psi > 1$ (i.e., in the overparameterized regime), the $ADD$ is generally greater than 1 across values of $\lambda$ (i.e., bias is amplified), while when $\psi < 1$ (i.e., in the underparameterized regime), the $ADD$ is generally less than 1 (i.e., bias is deamplified). Moreover, when $\psi > 1$, as regularization decreases (or training time increases), bias amplification increases and plateaus. In contrast, when $\psi < 1$, as regularization decreases (or training time increases), bias deamplification increases and plateaus. A notable exception to this trend occurs when $\psi$ is close to 1, where bias is initially deamplified and then amplified as $\lambda$ decreases (or $t$ increases). \textbf{This suggests that there may be an optimal regularization penalty or training time to avoid bias amplification and increase bias deamplification.} Intuitively, as training progresses, overparameterized models may discover ``shortcut'' associations \citep{geirhos2020shortcut} that do not generalize equally well across groups, yielding bias amplification. %
\RevisionThree{In practice, an optimal $\lambda$ or $t$ can be selected by searching for values that strike a desired balance between overall validation error and empirical bias amplification. The search space can be  reduced by using the above $ADD$ trends w.r.t. $\lambda$ and $t$ that our theory predicts for over- vs. underparameterized models (see Appendix \ref{sec:actionable-insights} for more details). It is important for ML practitioners the consider the interplay between high vs. low feature-to-sample regimes and overparameterization in inducing bias amplification vs. deamplification when selecting optimal hyperparameters (see Figure \ref{fig:phase-diags}).}

\Revision{In general, the calibration $\lambda = 1 / t$ may not yield a theoretically tight picture of how bias evolves with $t$. The use of discrete gradient descent in practice rather than continuous-time gradient flows might yield further discrepancies. However, the calibration $\lambda = 1 / t$ yields a ratio of gradient flow to ridge risk that is at most 1.69, with no assumptions on the features $X$ \citep{ali2019continuous}. Moreover, in the controlled settings considered by \citet{ali2019continuous}, this ratio empirically appears to be quite close to 1, and thus may be sufficient for extrapolating our results. Like us, \citet{jain2024bias} and \citet{hall2022systematic} find that bias and bias amplification can vary substantially during training; future work can establish stronger connections between our observations and the results of \citet{jain2024bias}, who analytically identify phases in the evolution of bias and a crossing phenomenon in the test error curves of groups during training. However, \citet{jain2024bias} do not consider the effect of over- and underparameterization on bias evolution. While our analysis relies on the simplistic calibration $\lambda = 1/t$, it reveals divergent behavior in how bias evolves depending on model size.}

\textbf{Corroboration on Real Data.} We further investigate the effect of training time on bias amplification on a more realistic dataset. We train a convolutional neural network (CNN) on Colored MNIST (see Appendix \ref{sec:colored-mnist-details} for more details). Colored MNIST is a semi-synthetic dataset derived from MNIST where digits are randomly re-colored to be red or green \citep{arjovsky2019invariant}. We treat the color of each digit as its group, and we manipulate the groups to have different levels of label noise. In our experimental protocol: (1) the color of each digit (in both train and test) is chosen uniformly at random (i.e., with probability 0.5) and independently of the label; (2) by default, in the training set, the labels of red digits are flipped with probability 0.05 while the labels of green digits are flipped with probability 0.25; (3) labels are binarized (i.e., digits 0-4 correspond to 0 while digits 5-9 correspond to 1); and (4) each training step constitutes a step of gradient descent based on a batch of 250 instances. Although Colored MNIST is a classification task and we use a complex CNN architecture, \textbf{our theory correctly predicts that as the training time $t$ increases, the $ODD$ of the CNN is relatively low while the $EDD$ is much larger}, producing bias deamplification.

\begin{wrapfigure}{R}{6cm}
    \vspace{-0.5cm}
    \centering
    \includegraphics[width=0.35\textwidth]{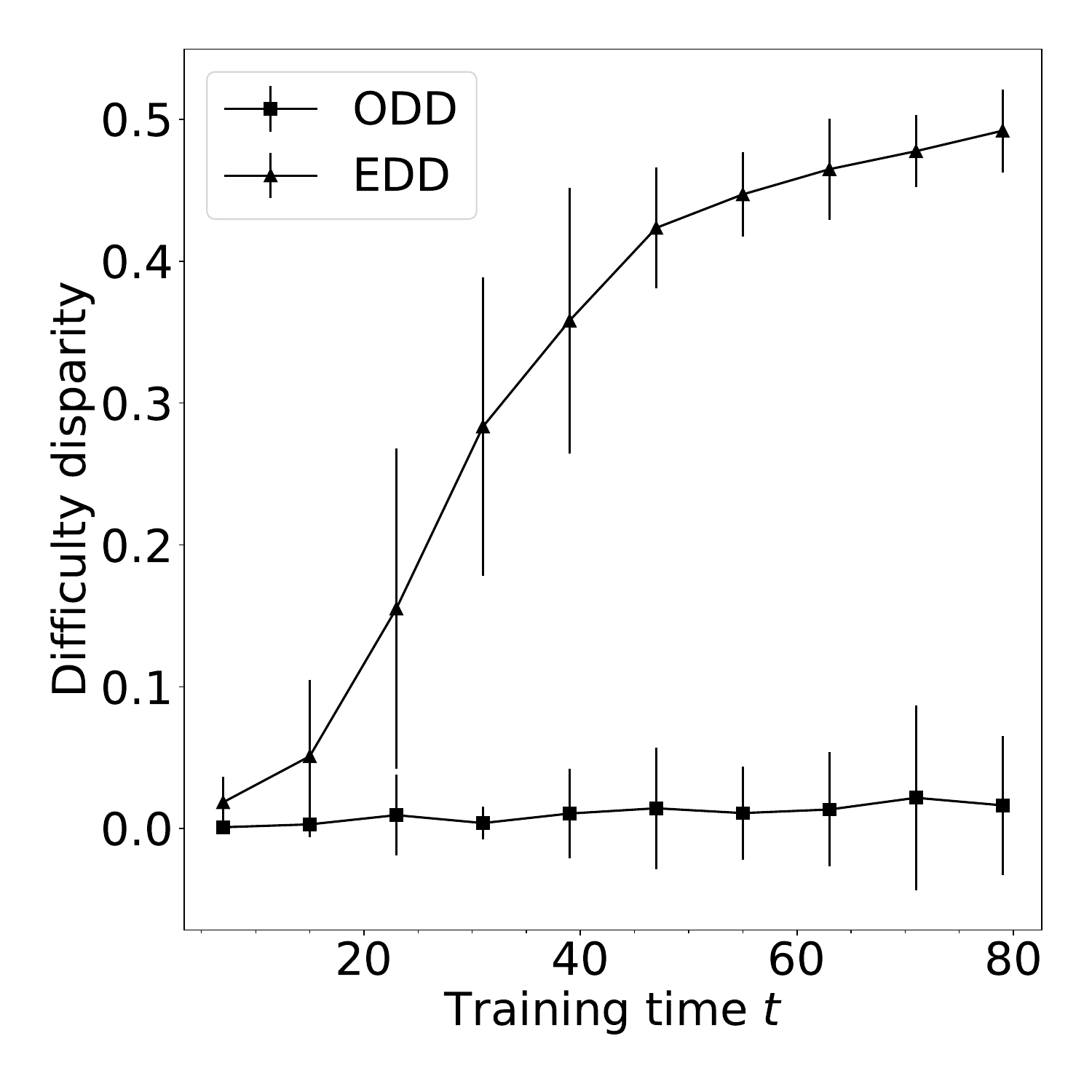}
        \vspace{-.5cm}
    \caption{\textbf{Our theory predicts that disparate label noise between groups deamplifies bias on Colored MNIST.} We plot the $ODD$ and $EDD$ of a CNN over training time $t$ for Colored MNIST. As $t$ increases, the $ODD$ is relatively low while the $EDD$ is noticeably higher. The error bars capture the standard deviation computed over 10 random seeds.}
    \label{fig:colored-mnist}
    \vspace{-0.5cm}
\end{wrapfigure}

Taking $t \to \infty$ corresponds to the setting of $\lambda \to 0^+$ in our theory (Theorems \ref{thm:odd-theory-rand-proj}, \ref{thm:edd-theory-rand-proj}). Because we assign the colors at random, the only difference in image features between groups is color; therefore, we expect the covariance matrices $\Sigma_1$ and $\Sigma_2$ to roughly coincide and $\Delta = 0$ (i.e., $w_1^*=w_2^*$). Note that we do not make any assumptions about the structure of $\Sigma_1, \Sigma_2$. Furthermore, $p_1 = p_2 = 1/2$, and thus, $\phi_1 = \phi_2$ and $\psi_1 = \psi_2$.  Additionally, we analogize the probability of label flipping to label noise in ridge regression. Hence, $e_1 = e_2, u_1 = u_2$. Accordingly, $\lim_{\lambda \to 0^+} B_1 (\widehat f) = \lim_{\lambda \to 0^+} B_1 (\widehat f_1) \approx \lim_{\lambda \to 0^+} B_2 (\widehat f) = \lim_{\lambda \to 0^+} B_2 (\widehat f_2)$. Simultaneously, $\lim_{\lambda \to 0^+} V_1 (\widehat f) \approx \lim_{\lambda \to 0^+} V_2 (\widehat f)$. However, $\lim_{\lambda \to 0^+} V_1 (\widehat f_1) \approx \sigma_1^2 / 2 \cdot V = 0.05 / 2 \cdot V = 0.025 V$ (where $V = \phi_1 h_1^{(2)} (I_d, \Sigma)$), while $\lim_{\lambda \to 0^+} V_2 (\widehat f_2) \approx \sigma_2^2 / 2 \cdot V = 0.25 / 2 \cdot V = 0.125 V$. This results in $ODD \approx 0$ while $EDD \approx 0.1 |V|$, which explains the divergence of $ODD$ and $EDD$ in Figure \ref{fig:colored-mnist}. Intuitively, the high label noise for the green digits prohibits the separate model $\widehat f_2$ from achieving a low test risk compared to $\widehat f_1$; the single model $\widehat f$ achieves a comparable test risk on both groups, effectively deamplifying bias, because of the better learning signal from the red digits. \Revision{This phenomenon is similar to {\em positive transfer}, wherein the $EDD$ of a model generally tends to be higher than the $ODD$ when the labeling rules of imbalanced groups are sufficiently similar \citep{mannelli2022unfair}. However, \citet{mannelli2022unfair} do not explore the impact of model size on positive transfer. We show that the $ODD$ can be less than the $EDD$ depending on $\psi$ in Figure \ref{fig:bias-amp}, where $\Delta = I_d$ (i.e., the groups have different labeling rules). Future work can study the $ADD = \frac{ODD}{EDD}$ profile when $\Delta = 0$.} Refer to Appendix \ref{sec:add-cmnist-plots} for additional Colored MNIST experiments.

\section{Minority-Group Bias}
\label{sec:min-group-exp}
\label{sec:overparam-diatomic-setup}
\label{sec:spurious-diatomic-setup}

Recent work has revealed that overparameterization may hurt test performance on minority groups due to spurious features \citep{sagawa2020investigation, Khani2021Spurious}. Our theory provides new insights into how model size and extraneous features affect minority-group bias.

\textbf{Setup.} To mirror the settings of \citet{sagawa2020investigation, Khani2021Spurious}, we consider diatomic covariance matrices of {\em core} and {\em extraneous} features. We define $A \oplus B = \begin{pmatrix}
    A & 0 \\ 0 & B
\end{pmatrix}$, and choose $\Sigma_1 = a_1 I_{\pi d} \oplus 0 I_{(1-\pi)d}, \Sigma_2 = a_2 I_{\pi d} \oplus b_2 I_{(1-\pi)d}$, for $\pi \in (0,1)$, $a_1, b_2 > 0$, $a_1 = a_2$. Refer to App. \ref{sec:common-exp-details} for full details and a discussion of extraneous vs. spurious features (due to space limitations). 

\begin{figure}[t!]
    \centering
    \includegraphics[trim={0 0 0 1.15cm},clip,width=\linewidth]{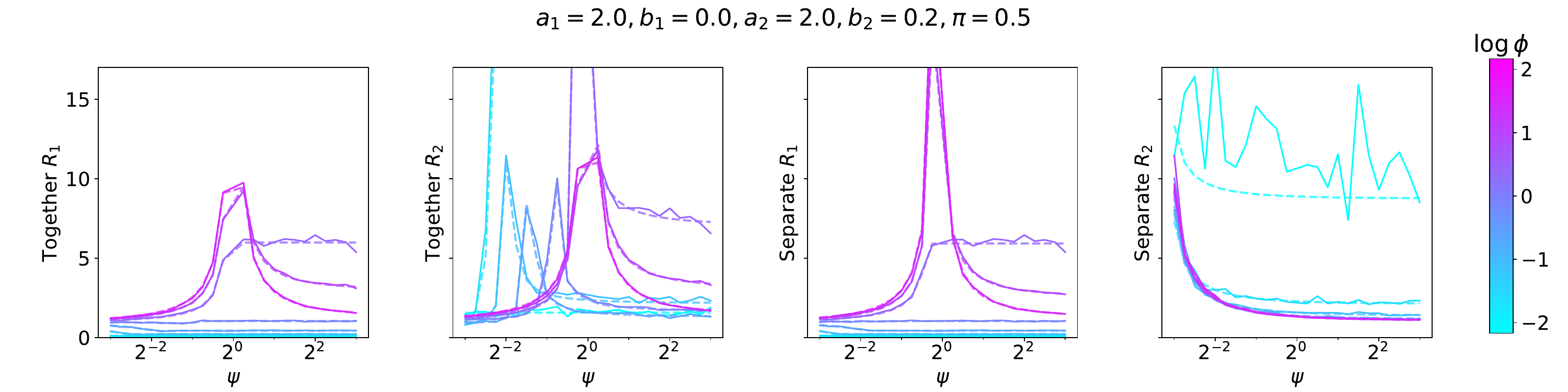}
    \vspace{-.5cm}
    \caption{\textbf{Minority-group test risk can peak with different model sizes depending on the rate of features to samples.} We empirically demonstrate that minority-group bias is affected by extraneous features. We validate our theory (Theorems \ref{thm:odd-theory-rand-proj} and \ref{thm:edd-theory-rand-proj}) for together $R_1, R_2$ (i.e., single model learned for both groups) and separate $R_1, R_2$ (i.e., separate model learned per group) under the setup described in Section \ref{sec:over-time-setup}, with $a_1 = 2, b_2 = 0.2$, and $\pi = 0.5$. The solid lines capture empirical values while the corresponding lower-opacity dashed lines represent what our theory predicts. We include a black dashed line at $ADD = 1$ to contrast bias amplification vs. deamplification. All y-axes are on the same scale for easy comparison. All the plots with error bars are in Appendix \ref{sec:spurious-plots}.}
    \label{fig:spurious-plots}
    \vspace{-0.5cm}
\end{figure}

\textbf{Interpolation Thresholds.} The together $R_2$ (i.e., the test risk for the minority group in the single model setting) has different interpolation thresholds as $\psi$ (rate of parameters to samples) increases, depending on $\phi$ (rate of features to samples) and $\pi$ (fraction of core features). Notably, as $\phi$ increases, the interpolation thresholds occur at larger model sizes, culminating at $\psi = 1$. This suggests that for a higher rate of features to samples, a larger model size can greatly increase the together test risk of the minority group. Furthermore, the interpolation thresholds all occur closer to $\psi = 1$ for larger $\pi$, collapsing to a single threshold at $\psi = 1$ when $\pi \to 1$ (as in Appendix \ref{sec:bias-amp-plots}). Therefore, a lower fraction of core features can yield more possible model sizes that increase the test risk of the minority group. In addition, the together $R_2$ exhibits a steeper rate of growth around the interpolation thresholds for larger $b_2$, suggesting that a higher variance in the extraneous features can also increase the test risk of the minority group in the single model setting. The phenomenon of different interpolation thresholds is not visible for $R_2$ when a separate model is trained per group; however, we do observe the expected double descent peaks in the separate $R_1$ and $R_2$ curves at $\psi_1 = 1$ and $\psi_2 = 1$, respectively.

\textbf{Overparameterization.} The right tails of the together $R_2$ curves plateau at different finite values depending on $\phi$. In particular, for $\phi$ closer to 1, the together $R_2$ curves generally plateau at a higher value, suggesting that a similar number of features and samples can exacerbate minority-group bias under overparameterization. Furthermore, for smaller $\pi$ and certain values of $\phi < 1$, the right tail of the together $R_1$ curve plateaus at a lower value than the together $R_2$ curve. This suggests that there can be differences in test error between groups that are not alleviated even with increased model size. This phenomenon diminishes in magnitude as the fraction of core features increases. This phenomenon supports the finding of \citet{sagawa2020investigation} that \textbf{overparameterization with spurious features can increase test risk disparities between groups}. We identify that the magnitude of this phenomenon may \textbf{depend on both the rate of features to samples and fraction of core features}.

\section{Conclusion}

We present a unifying, rigorous, and effective theory of ML bias in the settings of ridge regression with and without random projections.
Our theory predicts interesting insights into bias amplification and minority-group bias in different feature and parameter regimes.
These findings can inform strategies to evaluate and mitigate unfairness in ML (see Appendix \ref{sec:actionable-insights} for more details). \RevisionTwo{However, there remain practical challenges to assessing whether a model is prone to bias amplification. These include robustly estimating the feature covariance matrices \citep{Bickel2008RegularizedEO} and label noises \citep{Frnay2014ACI} for groups from sample data, especially for minority groups which have limited data. Even so, practitioners can use our theory and empirical observations to form intuition about when {\em disparities} in the variability of features and labels across groups can amplify bias.}

Our theoretical methods are easily extendable to the case of more than two groups and can accommodate label noise sampled from other distributions. However, our theory is not directly extendable to different proportionate scaling limits (e.g., $d^2 / n$ has a finite limit instead of $d / n$). Additionally, our theory requires approximately normally-distributed data and thus does not currently account for missing features, which are common in the real world \citep{feng2024adapting}. Furthermore, our theory implicitly assumes that group information is known, which is not always true  \citep{Coston2019Missing}; however, because we work in an asymptotic scaling limit, having access to group information with $o (\min(n_1, n_2))$ noise is sufficient. As future work, we can leverage ``Gaussian equivalents'' \citep{goldt2022gaussian} to extend our theory to wide, fully-trained networks in the NTK \citep{NEURIPS2018_Jacot} and lazy \citep{chizat} regimes; %
this will enable us to understand how, apart from model size, other design choices like nonlinear activation functions and learning rate may affect bias amplification. %

\clearpage

\bibliography{iclr2025_conference}
\bibliographystyle{iclr2025_conference}

\clearpage

\appendix
\addcontentsline{toc}{section}{Appendix} %
\part{Appendix} %
\parttoc %

\clearpage

\section{Warm-up: Deriving Marchenko-Pastur Law via Operator-Valued Free Probability Theory}
\label{sec:ovfpt}
\addcontentsline{app}{section}{Deriving Marchenko-Pastur Law via Operator-Valued Free Probability Theory}

We provide a detailed example of how to apply linear pencils and operator-valued free probability theory (OVFPT) to derive the classical Marchenko-Pastur (MP) law \citep{MarcenkoPastur}. Let $S = (1/n)X^\top X \in \mathbb R^{d \times d}$ be the empirical covariance matrix for an $n \times d$ random matrix $X$ with IID entries from ${\cal N} (0,1)$. If $n$ tends to infinity while $d$ is held fixed, then $S$ converges to the population covariance matrix, here $\Sigma = I_d$. If $d$ also tends to infinity, then the limit seizes to exist. It turns out that one can still make sense of the limiting distribution of eigenvalues of $S$ in the case $d/n$ stays constant, i.e.,
\begin{eqnarray}
n,d \to \infty,\, d/n \to \gamma \in (0,\infty).  
\label{eq:asymptotic}
\end{eqnarray}

In particular, we seek to understand the behavior of the random histogram:
\begin{eqnarray}
\widehat \mu_n = \frac{1}{d}\sum_{i=1}^n \widehat \lambda_i,   
\end{eqnarray}
where $\widehat \lambda_1,\ldots,\widehat \lambda_d$ are the eigenvalues of $S$. In the aforementioned limit,
$\widehat \mu_n$ converges to a deterministic law $\mu_{\text{MP}}$ on $\mathbb R$ called the MP law. This is central to the field of random matrix theory (RMT), a primary tool in probability theory, statistical analysis of neural networks, finance, etc. We are interested in an even more powerful tool -- free probability theory (FPT) -- which is powerful enough to give a precise picture of deep learning in certain linearized regimes (e.g., random features, NTK) and interesting phenomena (e.g., triple descent) via analytic calculation.

\subsection{Step 1: Constructing a Linear Pencil}
For any positive $\lambda$,  consider the $2 \times 2$ block matrix $Q$ defined by:
\begin{equation}
    Q =
    \begin{pmatrix}
    I_n & -\frac{X}{\sqrt{n \lambda}}  \\
    \frac{X^\top}{\sqrt{n \lambda}} & I_d
    \end{pmatrix}.
\end{equation}
Let $\ntrace$ be the normalized trace operator on square matrices and set $\varphi = \mathbb E\circ \ntrace$. This gives random $(n+d) \times (n+d)$ matrices the structure of a von Neumann algebra $\mathcal A$. Define a $2 \times 2$ matrix $G=G(Q)$ by:
\begin{equation}
G = \mathbb (\id_2 \otimes \varphi)Q^{-1},\text{ i.e }g_{i,j} = \varphi ([Q^{-1}]_{i,j}) = [\varphi (Q^{-1})]_{i,j}\text{ for all }i,j \in \{1,2\}.
\end{equation}
\begin{mdframed}
Thus, the operator $(\id_2 \otimes \varphi)Q^{-1}$ extracts the expectation of the normalized trace of the blocks of the inverse of the a $2 \times 2$ block matrix $Q$.
\end{mdframed}

Observe that:
\begin{eqnarray}
\mathbb E\,\ntrace (S+\lambda I_d)^{-1} = \frac{g_{2,2}}{\lambda}.
\label{eq:core}
\end{eqnarray}
This is a direct consequence of inverting a $2 \times 2$ block matrix (namely Schur's complement).
The mechanical advantage of Equation \ref{eq:core} is that the \emph{resolvent} $(S+\lambda I_d)^{-1}$ depends quadratically on $X$ while $g_{2,2}$ is defined via $Q$, which is linear in $X$. For this reason, $Q$ is called a \emph{linear pencil} for $(S+\lambda I_d)^{-1}$. The construction of appropriate linear pencils for rational functions of random matrices is a crucial step in leveraging FPT.

\subsection{Step 2: Constructing the Fundamental Equation via Freeness}
For any $B \in M_b(\mathbb C)^+$, define a block matrix $B \otimes 1_{\mathcal A}$ by:
\begin{eqnarray}
[B \otimes 1_{\mathcal A}]_{ij} = 
\begin{cases}
    b_{ij}I_{d_i},&\mbox{ if }d_i = d_j\\
    0,&\mbox{ else}
\end{cases}.
\end{eqnarray}
Here, $b\times b$ is the number of blocks in the linear pencil $Q_X$, that is, $b=2$. Now, observe that we can write $Q = F - Q_X$, where:
\begin{align}
F=\begin{pmatrix}
    I_d & 0\\
    0 & I_n
\end{pmatrix} = I_2 \otimes 1_{\mathcal A}\text{ and }
Q_X = 
\begin{pmatrix}
    0 & \frac{X}{\sqrt{n \lambda}} \\
    -\frac{X^\top}{\sqrt{n \lambda}} & 0
\end{pmatrix}.
\end{align}
One can then express $G = (\id_b \otimes \varphi) Q^{-1} = (\id_b \otimes \varphi)(F -  Q_X)^{-1}$.
From operator-valued FPT, we know that in the proportionate scaling limit given by Equation \ref{eq:asymptotic}, the following fixed-point equation (due to the asymptotic freeness of $Q_X$ and $F$) is satisfied by $G$:
\begin{equation}
     G = (\id_b \otimes \varphi)(F-R \otimes 1_{\mathcal A})^{-1},
    \label{eq:fp}
\end{equation}
where $R=\mathcal R_{Q_X}(G)$, and $R_{Q_X}$ is the R-transform of $Q_X$ which maps $M_b(\mathbb C)^+$ to itself like so:
\begin{equation}
    \mathcal R_{Q_X}(B)_{ij} = \sum_{k,\ell}\sigma(i,k;\ell ,j)\alpha_k b_{k\ell}.
    \label{eq:Rtransform}
\end{equation}
Here, $\sigma(i,k;\ell,j)$ is the covariance between the entries of block $(i,k)$ and block $(\ell,j)$ of $Q_X$, while $\alpha_k$ is the dimension of the block $(k,\ell)$.

\subsection{Step 3: The Final Calculation}
By the structure of $Q_X$, one can compute from Equation \ref{eq:Rtransform}:
\begin{align}
    r_{1,1} &= d\cdot \frac{-1}{n \lambda} g_{2,2}  =-\frac{\gamma}{\lambda}g_{2,2},\\
    r_{1,2} &= 0,\\
    r_{2,1} &= 0,\\
    r_{2,2} &= n\cdot \frac{-1}{n \lambda} g_{1,1} = -\frac{1}{\lambda}g_{1,1}.
\end{align}
Combining this with Equation \ref{eq:fp}, one has:
\begin{equation}
    \begin{split}
G = (\id_2 \otimes \varphi)(Z-R \otimes 1_{\mathcal A})^{-1} = (I_2 - R)^{-1} &= \begin{pmatrix}
    1+(\gamma/\lambda) g_{2,2} & 0\\
    0 & 1+g_{2,2}/\lambda
\end{pmatrix}^{-1}\\
&=
\begin{pmatrix}
    \lambda/(\lambda + \gamma g_{2,2}) & 0\\
    0 & \lambda/(\lambda +g_{1,1})
\end{pmatrix}.       
    \end{split}
\end{equation}
Comparing the matrix entries, this translates to the following scalar equations:
\begin{align}
    g_{1,1} &= \frac{\lambda}{\lambda +\gamma g_{2,2}},\\
    g_{2,2} &= \frac{\lambda}{\lambda +g_{1,1}},\\\
    g_{2,1} &= g_{1,2} = 0.
\end{align}
Plugging the second equation into the first (to eliminate $g_{1,1}$) gives:
$$
g_{2,2} = \dfrac{\lambda}{\lambda + \lambda/(\lambda+\gamma g_{2,2})}.
$$
Setting $m=g_{2,2}/\lambda$ then gives $m=(\lambda + 1/(1+\gamma m))^{-1}$, i.e.,
\begin{eqnarray}
\frac{1}{m} = \lambda + \frac{1}{1+\gamma m},
\end{eqnarray}
which is precisely the functional equation characterizing the Stieltjes transform (evaluated at $\lambda=-z$) of the MP law with shape parameter $\gamma$. By treating $\lambda$ as a complex number and applying the Cauchy-inversion formula, we can recover $\mu_{\text{MP}}$.

\clearpage

\section{Technical Assumptions}
\label{sec:assumptions}

\begin{assumption}
In the case of classical ridge regression, we will work in the following proportionate scaling limit:
\begin{eqnarray}
n,n_1,n_2,d \to \infty,\quad  n_1/n \to p_1,\,n_2/n \to p_2,\quad d/n_1 \to \phi_1,\,d/n_2 \to \phi_2,\,d/n \to \phi,
\label{eq:proportionate}
\end{eqnarray}
for some constants $\phi_1,\phi_2,\phi \in (0,\infty)$. The scalar $\phi$ captures the rate of features to samples. Observe that $\phi=p_1 \phi_1$ and $\phi = p_2 \phi_2$.
\label{ass:scaling}
\end{assumption}

\begin{assumption}
    The per-group covariance matrices $\Sigma_1$ and $\Sigma_2$ and ground-truth weight covariance matrices $\Theta$ and $\Delta$ are all simultaneously diagonalizable; hence, all these matrices commute.
    \label{ass:commute}
\end{assumption}

While Assumption \ref{ass:commute} may appear reductive, our goal is to analyze the bias amplification phenomenon in a sufficient setting that does not introduce complexities due to non-commutativity. Notably, our main theoretical result does not assume isotropic covariance. For example, our theory accommodates diatomic covariance (see Section \ref{sec:min-group-exp}) and power-law covariance (see Appendix \ref{sec:bias-amp-pow-setup}).

\begin{assumption}
In Corollary \ref{corr:bias-amp-phase-transitions}, we assume the following spectral densities exist when $d \to \infty$:
\begin{itemize}
    \item $\nu \in \mathcal P(\mathbb R_+)$ is the limiting spectral density of $\Sigma_2\Sigma_1^{-1}$, of the ratios $\lambda_j^{(2)}/\lambda_j^{(1)}$ of the eigenvalues of the respective covariance matrices,
    \item $\mu \in \mathcal P(\mathbb R_+, \mathbb R_+)$ is the joint limiting density of the spectra of $\Sigma_2\Sigma_1^{-1}$ and $\Sigma_1$,
    \item $\pi \in \mathcal P(\mathbb R_+)$ is the limiting density of the spectrum of $\Delta$.
\end{itemize}
    \label{ass:spectral}
\end{assumption}

\clearpage

\section{Related Work (Continued)}
\label{sec:rw-cont}

\paragraph{High-dimensional analysis of bias.} \citet{mannelli2022unfair} employ the replica method, which is non-rigorous, while we use OVFPT, which is entirely rigorous. Moreover, \citet{mannelli2022unfair, jain2024bias} study the application of linear classification to Gaussian data with isotropic covariance; in contrast, we study the application of regression with random projections (a simplified model of feedforward neural networks) to Gaussian data with more general covariance structure (i.e., covariance matrices that are simultaneously diagonalizable) and noisy labels. This allows us to analyze the effects of these additional factors on bias. We make additional connections between our work and \citet{mannelli2022unfair, jain2024bias} in Section \ref{sec:reg-training-dynamics}.

\paragraph{Bias amplification metrics.} Our definition of $ADD$ is consistent with the conceptualization of bias of \citet{bell2023simplicity}. At a high level, our definition quantifies how many times worse model bias would be if a ML practitioner opted to train a single model on a mixture of data from two groups (i.e., the setting in which bias is observed in practice) vs. separate models for the data from each group (i.e., the setting which corresponds to the bias in the data alone, and thus the a priori amount of bias we would expect in the case of a single model). In sum, we seek to isolate the contribution of the {\em model} to bias when learning from data with different groups.

\clearpage

\section{Warm-Up: Classical Linear Model}
\label{sec:warm-up-classical}

\textbf{Technical Difficulty.}
The analysis of the test errors (e.g., $R_s(\widehat f)$) amounts to the analysis of the trace of rational functions of sums of random matrices. Although the limiting spectral density of sums of random matrices is a classical computation using subordination techniques \citep{MarcenkoPastur, Kargin2015Subordination}, a more involved analysis is required in our case. This difficulty is even greater in the setting of random projections (see Section \ref{sec:randproj}). Thus, we employ OVFPT to compute the exact high-dimensional limits of such quantities. We derive Theorems \ref{thm:edd-theory} and \ref{thm:odd-theory} using OVFPT (in Appendices \ref{sec:odd-proof} and \ref{sec:edd-proof}). Theorem \ref{thm:odd-theory} is a non-trivial generalization of Proposition 3 from \citep{bach2024high}, which can be recovered by taking $p_s \to 1$ (i.e., $p_{s'} \to 0$).

\subsection{Single Model Learned for Both Groups}

We first consider the classical ridge regression model $\widehat f$, which is learned using empirical risk minimization and $\ell_2$-regularization with penalty $\lambda$. The parameter vector $\widehat w \in \mathbb R^d$ of the linear model $\widehat f$ is given by the following %
problem:
\begin{eqnarray}
\widehat w = \arg\min_{w \in \mathbb R^d} L(w)= \sum_{s=1}^2n^{-1}\|X_s w -Y_s\|^2_2 + \lambda \|w\|_2^2.
\label{eq:classical-ridge}
\end{eqnarray}
The unregularized limit  $\lambda \to 0^+$ corresponds to ordinary least-squares (OLS). %
We provide in Theorem \ref{thm:odd-theory} a novel bias-variance decomposition for the test error $R_s(\widehat f)$ for each group $s\in\{1,2\}$. We first present some relevant definitions.

\begin{definition}
    For any group index $s \in \{1,2\}$, we define $(e_1, e_2, u^{(s)}_1, u^{(s)}_2)$ to be the unique positive solution to the following system of fixed-point equations:
    \begin{align}
    1/e_s &= 1+\phi \ntrace \Sigma_s K^{-1},\quad u^{(s)}_k = \phi e_k^2 \ntrace  \Sigma_k(p_1 u^{(s)}_1  \Sigma_1 + p_2 u^{(s)}_2  \Sigma_2 + \Sigma_s) K^{-2},\, k \in \{1,2\},
\end{align}
where $K = p_1 e_1 \Sigma_1 + p_2 e_2 \Sigma_2 + \lambda I_d$ and $\ntrace A := (1/d)\trace A$ is the normalized trace operator.
\end{definition}
The fixed-point equations for $e_s$ are non-linear and often not analytically solvable for general $\Sigma_1, \Sigma_2$. This is typical in RMT.

\begin{theorem}%
Under Assumptions \ref{ass:commute} and \ref{ass:scaling}, it holds that: $ R_s (\widehat f) \simeq B_s (\widehat f) + V_s (\widehat f)$, with
    \begin{align}
    V_s (\widehat f) &= V_s^{(1)} (\widehat f) + V_s^{(2)} (\widehat f), \\
    V_s^{(k)} (\widehat f) &= p_k  \sigma_k^2 \phi \ntrace \Sigma_k \big(e_k \Sigma_s - \lambda u^{(s)}_k I_d + p_{k'} \Sigma_{k'} (e_k u^{(s)}_{k'} - e_{k'} u^{(s)}_k) \big) K^{-2}, \\
    B_s (\widehat f) &= B_s^{(1)} (\widehat f) + B_s^{(3)} (\widehat f) + \begin{cases}
        0, & s = 1, \\
        2B_2^{(2)} (\widehat f), & s= 2,
    \end{cases} \\
    B_s^{(1)} (\widehat f) &= p_{s'}\ntrace  \Delta \Sigma_{s'}  (p_{s'} (1 + p_s u^{(s)}_s) e_{s'}^2 \Sigma_{s'}  \Sigma_s + u^{(s)}_{s'} (p_s e_s \Sigma_s + \lambda I_d)^2) K^{-2}, \\
    B_2^{(2)} (\widehat f) &= p_1\lambda\ntrace\Sigma_1((1+p_2 u_2^{(2)})e_1\Sigma_2-u_1^{(2)}(p_2e_2\Sigma_2+\lambda I_d))K^{-2}, \\
    B_s^{(3)} (\widehat f) &= \lambda^2 \ntrace \Theta_s (p_1 u_1^{(s)} \Sigma_1 + p_2 u_2^{(s)} \Sigma_2 + \Sigma_s) K^{-2},
    \end{align}
    where $1'=2$ and $2'=1$.
\label{thm:odd-theory}
\end{theorem}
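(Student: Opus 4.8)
The starting point is the exact risk identity. Since $\widehat f(x) = x^\top \widehat w$ and $f_s^\star(x) = x^\top w_s^\star$ with $x\mid s\sim\mathcal N(0,\Sigma_s)$, we have $R_s(\widehat f) = (\widehat w - w_s^\star)^\top \Sigma_s (\widehat w - w_s^\star)$. Writing $Y_s = X_s w_s^\star + \varepsilon_s$ with $\varepsilon_s\sim\mathcal N(0,\sigma_s^2 I_{n_s})$, the minimizer of \eqref{eq:classical-ridge} is $\widehat w = (\widehat\Sigma+\lambda I_d)^{-1}\big(\widehat\Sigma_1 w_1^\star + \widehat\Sigma_2 w_2^\star + n^{-1}X_1^\top\varepsilon_1 + n^{-1}X_2^\top\varepsilon_2\big)$, where $\widehat\Sigma_s := n^{-1}X_s^\top X_s$ and $\widehat\Sigma := \widehat\Sigma_1+\widehat\Sigma_2$. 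Using $\widehat\Sigma_1 w_1^\star + \widehat\Sigma_2 w_2^\star - (\widehat\Sigma+\lambda I_d)w_s^\star = \widehat\Sigma_{s'}(w_{s'}^\star - w_s^\star) - \lambda w_s^\star$, I split $\widehat w - w_s^\star$ into a \emph{signal} part driven by $(w_1^\star, w_2^\star - w_1^\star)$ and a \emph{noise} part driven by $(\varepsilon_1,\varepsilon_2)$. Taking expectations over these independent Gaussians kills the cross terms and, via $\E[\varepsilon_s\varepsilon_s^\top]=\sigma_s^2 I$, $\E[w_1^\star w_1^{\star\top}]=\Theta/d$, $\E[(w_2^\star - w_1^\star)(\cdot)^\top]=\Delta/d$, collapses $\E R_s(\widehat f)$ to a finite sum of expected normalized traces of rational functions of $\widehat\Sigma_1,\widehat\Sigma_2$ with the deterministic PSD matrices $\Sigma_s,\Theta,\Delta$ inserted. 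For $s=2$ one must first regroup $-\widehat\Sigma_1(w_2^\star - w_1^\star) - \lambda w_2^\star = -(\widehat\Sigma_1+\lambda I_d)(w_2^\star - w_1^\star) - \lambda w_1^\star$ to restore independence before taking expectations; this is the source of the asymmetry between $s=1$ and $s=2$ (the extra $2B_2^{(2)}$ term).

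The remaining task is to compute the deterministic high-dimensional limits of these traces. They are of two kinds: first-order, such as $\ntrace\Sigma_s(\widehat\Sigma+\lambda I_d)^{-1}$, and second-order, such as $\ntrace A\,\widehat\Sigma_k(\widehat\Sigma+\lambda I_d)^{-1}\Sigma_s(\widehat\Sigma+\lambda I_d)^{-1}\widehat\Sigma_k$ for deterministic $A$. I would treat both with operator-valued free probability exactly as in the warm-up of Appendix \ref{sec:ovfpt}: build a linear pencil, i.e., a block matrix $Q$ that is \emph{affine} in $X_1$ and $X_2$ (stacking suitable copies of the $X_s$, $X_s^\top$ and the auxiliary matrices) whose inverse has blocks whose normalized traces reproduce the target quantities; then, using that the blocks built from $X_1$ and from $X_2$ are asymptotically free (because $X_1,X_2$ are independent and, under Assumption \ref{ass:commute}, all population matrices are simultaneously diagonalizable so the bookkeeping is essentially commutative), invoke the operator-valued subordination fixed point $G=(\id\otimes\varphi)(F-\mathcal R_{Q_X}(G)\otimes 1_{\mathcal A})^{-1}$. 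Reading off the scalar components of $G$ yields the self-consistent system: the equations $1/e_s = 1+\phi\,\ntrace\Sigma_s K^{-1}$ with $K=p_1 e_1\Sigma_1 + p_2 e_2\Sigma_2 + \lambda I_d$ come out at first order ($e_s$ being the subordination factor renormalizing group $s$'s covariance inside the resolvent), and the $u^{(s)}_k$ equations come out at second order (equivalently, from differentiating the first-order relation or from a doubled pencil), since the second-order traces are not rational functions of the first-order ones.

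Finally I would substitute the limiting value of each trace back into the signal/noise decomposition and simplify using the fixed-point identities, absorbing the $\widehat\Sigma_k$ factors into the combinations $e_k\Sigma_k$ and $u^{(s)}_k\Sigma_k$ via subordination; this is exactly why the claimed formulas $V_s^{(k)}(\widehat f)$, $B_s^{(1)}(\widehat f)$, $B_2^{(2)}(\widehat f)$, $B_s^{(3)}(\widehat f)$ are polynomials in $e_s,u^{(s)}_s,\lambda$ against $\Sigma_s,K^{-1},K^{-2}$, and verifying the match is a direct (if tedious) identification. As a sanity check, taking $p_s\to 1$ collapses $\widehat\Sigma\to\widehat\Sigma_s$ and should recover Proposition 3 of \citet{bach2024high}. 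The main obstacle is the two-matrix, second-order OVFPT step: the presence of two sample covariance matrices with distinct population covariances $\Sigma_1\neq\Sigma_2$ and distinct aspect ratios forces a genuinely matrix-valued fixed point rather than the scalar Stieltjes equation of the Marchenko--Pastur warm-up, and the interleaved double-resolvent traces require tracking the covariance structure of $Q_X$ carefully so that the $R$-transform is computed correctly. Getting the combinatorics of which blocks are correlated exactly right, and verifying concentration so that ``$\simeq$'' is an almost-sure equality of the random trace with its deterministic equivalent, is where the real work lies.
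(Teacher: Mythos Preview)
Your proposal is correct and follows essentially the same approach as the paper: the same bias--variance split of $\widehat w - w_s^\star$ into noise and signal parts, reduction to expected normalized traces of $(\widehat\Sigma+\lambda I_d)^{-1}$ sandwiched with $\widehat\Sigma_k$ and deterministic matrices, and evaluation of those traces via linear pencils and the operator-valued subordination fixed point, yielding the $e_s$ equations at first order and the $u_k^{(s)}$ equations at second order. The only cosmetic difference is that for $s=2$ the paper keeps the decomposition in terms of $w_2^\star$ and computes the cross term directly using $\E[\delta w_2^{\star\top}]=\Delta/d$, whereas you regroup to make the pieces independent first; the two are algebraically identical and produce the same three trace functionals $B_2^{(1)},B_2^{(2)},B_2^{(3)}$.
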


\subsection{Separate Model Learned Per Group}

We now treat the case of fitting a separate model $\widehat f_s$ per group. Suppose that the classical ridge regression models $\widehat f_1$ and $\widehat f_2$ are learned using empirical risk minimization and $\ell_2$-regularization with penalties $\lambda_1$ and $\lambda_2$, respectively. In particular, we have the following optimization problem for each group $s$:
\begin{eqnarray}
    \arg\min_{w \in \mathbb R^d} L(w)= \frac{1}{n_s}\sum_{(x_i,y_i) \in \mathcal D^s}(x_i^\top w-y_i)^2 + \lambda_s\|w\|_2^2 = \frac{\|X_s w -Y_s\|^2_2}{n_s} + \lambda_s\|w\|_2^2.
\end{eqnarray}

We first present some relevant definitions.

\begin{definition}
\label{def:kappa}
    Let $\ndof_m^{(s)} (t) = \ntrace \Sigma_s^m \left(\Sigma_s + t I_d \right)^{-m}$, and $\kappa_s$ be the unique positive solution to the equation
    $     \kappa_s - \lambda_s =  \kappa_s \phi_s \ndof^{(s)}_1 (\kappa_s).
    $
\end{definition}

In this setting, we deduce Theorem \ref{thm:edd-theory}.

\begin{theorem}
Under Assumptions \ref{ass:commute} and \ref{ass:scaling}, it holds that:
    \begin{eqnarray}
        R_s (\widehat f_s) \simeq B_s (\widehat f_s) + V_s (\widehat f_s),\text{ with }
    \end{eqnarray}
    \begin{align}
        V_s (\widehat f_s) = \frac{\sigma_s^2 \phi_s \ndof_2^{(s)} (\kappa_s)}{1 - \phi_s \ndof_2^{(s)} (\kappa_s)},\, B_s (\widehat f_s) &= \frac{\kappa_s^2 \ntrace \Theta_s \Sigma_s \left(\Sigma_s + \kappa_s I_d\right)^{-2} }{1 - \phi_s \ndof_2^{(s)} (\kappa_s)}.
        \label{eq:u-rho-lin-3}
    \end{align}
\label{thm:edd-theory}
\end{theorem}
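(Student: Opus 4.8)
The plan is to derive this theorem as a special case of Theorem \ref{thm:odd-theory} by taking the limit $p_s \to 1$ (equivalently $p_{s'} \to 0$), exactly as the text hints when it says ``the reader can think of each $\widehat f_s$ as the limit of $\widehat f$ when $p_s \to 1$.'' Concretely, I would set $p_s = 1$, $p_{s'} = 0$, and $\lambda = \lambda_s$ in the fixed-point system and in the bias–variance formulas of Theorem \ref{thm:odd-theory}, then simplify. When $p_{s'} = 0$, the matrix $K = p_1 e_1 \Sigma_1 + p_2 e_2 \Sigma_2 + \lambda I_d$ collapses to $K = e_s \Sigma_s + \lambda_s I_d$, and the cross-group terms involving $\Sigma_{s'}$, $u^{(s)}_{s'}$, $\Delta$ all vanish. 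In particular $B_s^{(1)}(\widehat f)$ carries a prefactor $p_{s'} \to 0$ and drops out, and the $s=2$ correction term $2B_2^{(2)}(\widehat f)$ also carries a prefactor $p_1 \to 0$ and drops out, so the bias reduces to $B_s^{(3)}(\widehat f) = \lambda_s^2 \ntrace \Theta_s \Sigma_s (u^{(s)}_s \Sigma_s + \Sigma_s) K^{-2}$ up to the remaining single-group fixed-point scalar, and similarly the variance reduces to the single surviving term $V_s^{(s)}(\widehat f) = \sigma_s^2 \phi_s \ntrace \Sigma_s(e_s \Sigma_s - \lambda_s u^{(s)}_s I_d) K^{-2}$.

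The second step is to reconcile the two parametrizations: Theorem \ref{thm:odd-theory} uses scalars $(e_s, u^{(s)}_s)$ solving one fixed-point system, while Theorem \ref{thm:edd-theory} uses $\kappa_s$ and the degrees-of-freedom functionals $\ndof^{(s)}_m(t) = \ntrace \Sigma_s^m(\Sigma_s + t I_d)^{-m}$. I would show that in the $p_s \to 1$ limit the correct identification is $\kappa_s = \lambda_s / e_s$, so that $K = e_s \Sigma_s + \lambda_s I_d = e_s(\Sigma_s + \kappa_s I_d)$ and hence $K^{-m} = e_s^{-m}(\Sigma_s + \kappa_s I_d)^{-m}$. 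Substituting this into the equation $1/e_s = 1 + \phi \ntrace \Sigma_s K^{-1}$ gives $1/e_s = 1 + (\phi_s / e_s)\ndof^{(s)}_1(\kappa_s)$ after using $\phi = p_s \phi_s \to \phi_s$, which rearranges precisely to $\kappa_s - \lambda_s = \kappa_s \phi_s \ndof^{(s)}_1(\kappa_s)$ — Definition \ref{def:kappa}. Likewise the fixed-point equation for $u^{(s)}_s$ becomes, after the substitution and cancellation of $e_s$ powers, a linear equation in $u^{(s)}_s$ whose solution is $u^{(s)}_s = \phi_s \ndof^{(s)}_2(\kappa_s) / (1 - \phi_s \ndof^{(s)}_2(\kappa_s))$; this is where the denominator $1 - \phi_s \ndof^{(s)}_2(\kappa_s)$ that appears throughout Theorem \ref{thm:edd-theory} originates. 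Plugging these identifications back into the reduced bias and variance expressions, and simplifying $\lambda_s^2 K^{-2} = \lambda_s^2 e_s^{-2}(\Sigma_s + \kappa_s I_d)^{-2} = \kappa_s^2 (\Sigma_s + \kappa_s I_d)^{-2}$, should yield exactly the stated formulas $V_s(\widehat f_s) = \sigma_s^2 \phi_s \ndof^{(s)}_2(\kappa_s)/(1 - \phi_s \ndof^{(s)}_2(\kappa_s))$ and $B_s(\widehat f_s) = \kappa_s^2 \ntrace \Theta_s \Sigma_s(\Sigma_s + \kappa_s I_d)^{-2}/(1 - \phi_s \ndof^{(s)}_2(\kappa_s))$.

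The main obstacle is justifying that the limit $p_s \to 1$ commutes with the high-dimensional deterministic-equivalent limit, i.e., that taking $p_{s'}\to 0$ in the closed-form expressions of Theorem \ref{thm:odd-theory} genuinely recovers the model where $\widehat f_s$ is trained only on $\mathcal D^s$. I would handle this by noting that the $p_{s'}\to 0$ limit is a continuous (indeed analytic) limit of the fixed-point system — the solutions $(e_s, u^{(s)}_s)$ depend continuously on $p_s$ on the relevant domain, uniqueness of the positive solution is preserved in the limit, and none of the surviving denominators vanish — so no delicate interchange-of-limits argument is needed beyond continuity; alternatively one can rerun the OVFPT linear-pencil derivation of Appendix \ref{sec:odd-proof} directly with a one-group design matrix $X_s \in \mathbb{R}^{n_s \times d}$, which is strictly simpler and yields the same answer. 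A secondary bookkeeping point is to check the variance term carefully: the naive limit gives $V_s(\widehat f_s) = \sigma_s^2 \phi_s \ntrace \Sigma_s(e_s \Sigma_s - \lambda_s u^{(s)}_s I_d)K^{-2}$, and one must verify that after substituting $\kappa_s = \lambda_s/e_s$ and the linear solution for $u^{(s)}_s$ this algebraically collapses to the compact ratio $\sigma_s^2\phi_s \ndof^{(s)}_2(\kappa_s)/(1-\phi_s\ndof^{(s)}_2(\kappa_s))$; this is a short but not entirely mechanical identity, relying on the relation $\ntrace \Sigma_s^2 (\Sigma_s + \kappa_s I_d)^{-2}$ versus $\ntrace \Sigma_s(\Sigma_s+\kappa_s I_d)^{-2}$ and the defining equation for $\kappa_s$. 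Everything else is routine substitution, so I expect the write-up to be short, with the continuity/interchange justification and this final variance identity being the only points requiring care.
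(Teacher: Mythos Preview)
Your proposal is correct but takes a different route from the paper's own proof. The paper (Appendix~\ref{sec:edd-proof}) proves Theorem~\ref{thm:edd-theory} \emph{directly} via OVFPT: it writes down the single-group resolvent quantities $(H_s/\lambda_s+I_d)^{-1}(H_s/\lambda_s)(H_s/\lambda_s+I_d)^{-1}\Sigma_s$ and $(H_s/\lambda_s+I_d)^{-1}\Theta_s(H_s/\lambda_s+I_d)^{-1}\Sigma_s$, builds linear pencils for each, and solves the resulting fixed-point equations from scratch to obtain $\kappa_s$, $\ndof_1^{(s)}$, $\ndof_2^{(s)}$ and the stated formulas. In other words, the paper treats Theorem~\ref{thm:edd-theory} as a warm-up that introduces the OVFPT machinery on the simpler one-group problem \emph{before} tackling the two-group Theorem~\ref{thm:odd-theory}.

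Your approach inverts this dependency: you take the already-proven two-group result and specialize via $p_s\to 1$. This is more economical in total effort (no new pencil, no new OVFPT computation) and is exactly the alternative you mention in passing. Indeed, the paper itself carries out precisely your limit calculation at the end of Appendix~\ref{sec:odd-proof} --- but only as a \emph{consistency check} on Theorem~\ref{thm:odd-theory}, not as the proof of Theorem~\ref{thm:edd-theory}. So the algebra you sketch (the identification $\kappa_s=\lambda_s/e_s$, the linear equation giving $u_s^{(s)}=\phi_s\ndof_2^{(s)}(\kappa_s)/(1-\phi_s\ndof_2^{(s)}(\kappa_s))$, and the variance collapse using $\ntrace\Sigma_s(\Sigma_s+\kappa_s I_d)^{-2}=(\ndof_1^{(s)}(\kappa_s)-\ndof_2^{(s)}(\kappa_s))/\kappa_s$ together with the defining relation $\kappa_s-\lambda_s=\kappa_s\phi_s\ndof_1^{(s)}(\kappa_s)$) is all confirmed there. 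The trade-off: your route is shorter but logically requires the harder theorem first; the paper's direct route is self-contained and pedagogically sets up the tools for the general case.
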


\subsection{Phase Diagram}
We present the bias amplification phase diagram (Figure \ref{fig:phase-diags-vanilla}) predicted by Theorems \ref{thm:odd-theory} and \ref{thm:edd-theory} for the classical ridge regression model. The phase diagram offers insights into how $\phi$ (rate of features to samples) affects bias amplification. To obtain the precise phase diagram, we solve the scalar equations numerically. In the $ODD$ profile, we observe an interpolation threshold at $\phi = 1$. To the right of the threshold, we observe a tail that descends towards 1. To the left of the threshold, the $ODD$ descends below 1 with a local minimum at $\phi \approx 0.25$ before increasing. In contrast, we observe that the $EDD$ continually grows as $\phi$ increases, ascending from a small value, exhibiting an inflection point at $\phi = 0.5$, and plateauing after $\phi = 1$. Accordingly, the $ADD$ increases significantly as $\phi$ decreases (with an intermediate inflection point at $\phi = 0.5$), peaks at $\phi = 1$, and descends towards 1 as $\phi$ increases (i.e., bias remains amplified in this phase). In sum, bias is most amplified when the rate of features to samples $\phi \ll 1$ and $\phi = 1$. Interestingly, bias amplification consistently occurs (i.e., $ADD > 1$) across all observed values of $\phi$.

\begin{figure}[!ht]
    \centering
    \includegraphics[width=0.9\linewidth]{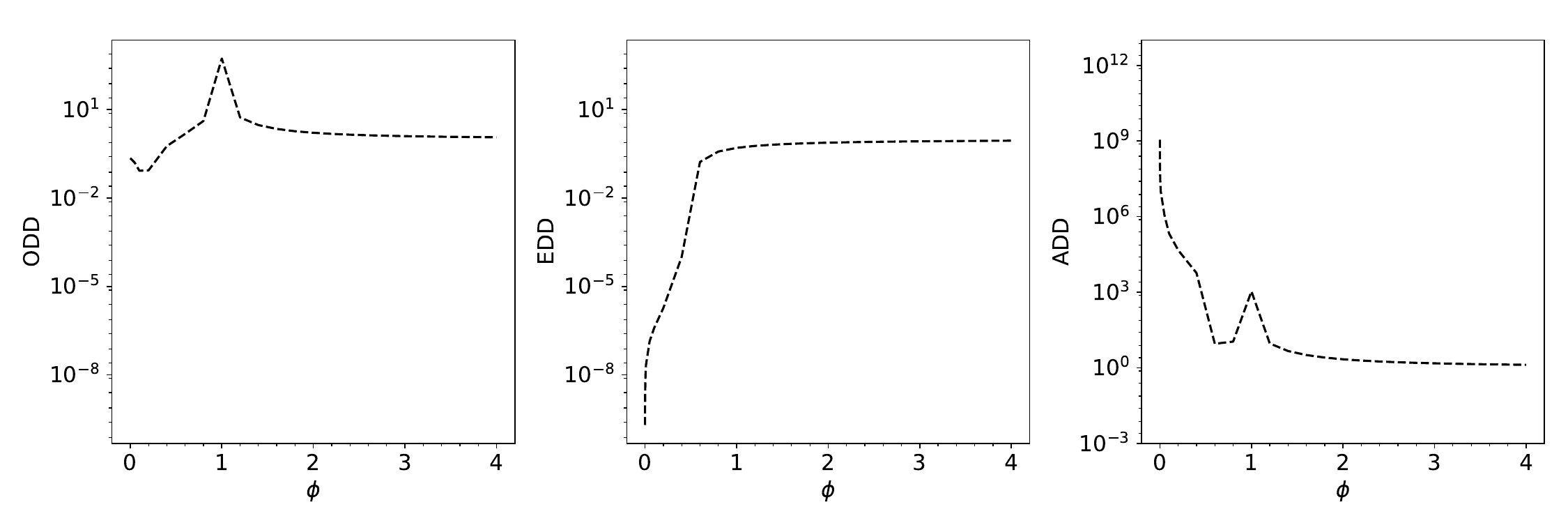}
    \caption{\textbf{$ODD$, $EDD$, and $ADD$ phase diagrams for classical ridge regression.} We plot the bias amplification phase diagrams with respect to $\phi$ (rate of features to samples), as predicted by our theory for ridge regression without random projections (Theorems \ref{thm:odd-theory}, \ref{thm:edd-theory}). Dashed black lines indicate theoretical predictions. We consider isotropic covariance matrices: $\Sigma_1 = 2 I_d, \Sigma_2 = I_d$, $\Theta = 2 I_d$, $\Delta = I_d$. Additionally, $n = 1 \times 10^4, \sigma_1^2 = \sigma_2^2 = 1$. We further choose $\lambda = \lambda_1 = \lambda_2 = 1 \times 10^{-6}$ to approximate the minimum-norm interpolator. We observe that bias amplification can occur even in the balanced data setting, i.e., when $p_1 = p_2 = 1/2$, without spurious correlations.}
    \label{fig:phase-diags-vanilla}
\end{figure}

\clearpage

\section{Proof of Theorem \ref{thm:edd-theory}}
\label{sec:edd-proof}

\begin{proof}
We define $M_s = X_s^\top X_s$ and $E_s = Y_s - X_sw^*_s$. Note that $\widehat w_s=(X_s^\top X_s + n_s \lambda_s I_d)^{-1}X_s^\top (X_sw^*_s + E_s) = (M_s + n_s \lambda_s I_d)^{-1} M_s w^*_s + (M_s + n_s \lambda_s I_d)^{-1} X_s^\top E_s$. We deduce that $R_s(\widehat f_s) = B_s(\widehat f_s) + V_s(\widehat f_s)$, where:
\begin{align}
B_s(\widehat f_s) &= \mathbb E\,\|(M_s + n_s \lambda_s I_d)^{-1} M_s w^*_s -w^*_s\|_{\Sigma_s}^2,\\
V_s(\widehat f_s) &=  \mathbb E\,\|(M_s + n_s \lambda_s I_d)^{-1} X_s^\top E_s \|_{\Sigma_s}^2.
\end{align}

\subsection{Variance Term}

Note that the variance term $V_s(\widehat f)$ of the test error of $\widehat f_s$ evaluated on group $s$ is given by: 
\begin{align}
    V_s(\widehat f_s) &=  \sigma_s^2\mathbb E\, \trace X_s (M_s + n_s \lambda_s I_d)^{-1} \Sigma_s (M_s + n_s \lambda_s I_d)^{-1} X_s^\top \\
    &= \sigma_s^2\mathbb E\, \trace (M_s + n_s \lambda_s I_d)^{-1} M_s (M_s + n_s \lambda_s I_d)^{-1} \Sigma_s.
\end{align}

We can re-express this as:
\begin{align}
n_s V_s(\widehat f_s) &= \sigma_s^2 \mathbb E\trace (H_s+\lambda_s I_d)^{-1} H_s (H_s+\lambda_s I_d)^{-1} \Sigma_s \\
&= \frac{\sigma_s^2}{\lambda_s} \mathbb E\trace (H_s/\lambda_s + I_d)^{-1} (H_s/\lambda_s) (H_s/\lambda_s+ I_d)^{-1} \Sigma_s,
\end{align}
where $H_s = X_s^\top X_s/n_s$ and $X_s=Z_s \Sigma_s^{1/2}$, with $Z_1 \in \mathbb R^{n_1 \times d}$ and $Z_2 \in \mathbb R^{n_2 \times d}$ being independent random matrices with IID entries from ${\cal N} (0, 1)$. Thus, the variance term is proportional to:
\begin{align}
\ntrace (H_s +\lambda_s I_d)^{-1} H_s(H_s+\lambda_s I_d)^{-1}\Sigma_s.
\end{align}
WLOG, we consider the case where $s = 1$. The matrix of interest has a linear pencil representation given by (with zero-based indexing):
\begin{align}
(H_1 / \lambda_1 + I_d)^{-1} (H_1 / \lambda_1) (H_1 / \lambda_1 + I_d)^{-1} \Sigma_1 = Q^{-1}_{0, 8},
\end{align}
where the linear pencil $Q$ is defined as follows:
\begin{gather}
Q = 
\resizebox{0.8\linewidth}{!}{%
$\left(\begin{array}{ccccccccc}
I_{d} & \Sigma_{1}^{\frac{1}{2}} & 0 & 0 & -\Sigma_{1}^{\frac{1}{2}} & 0 & 0 & 0 & 0 \\
0 & I_{d} & -\frac{1}{\sqrt{\lambda_1} \sqrt{n_{1}}} Z_{1}^\top & 0 & 0 & 0 & 0 & 0 & 0 \\
0 & 0 & I_{n_1} & -\frac{1}{\sqrt{\lambda_1} \sqrt{n_{1}}} Z_{1} & 0 & 0 & 0 & 0 & 0 \\
-\Sigma_{1}^{\frac{1}{2}} & 0 & 0 & I_{d} & 0 & 0 & 0 & 0 & 0 \\
0 & 0 & 0 & 0 & I_{d} & -\frac{1}{\sqrt{\lambda_1} \sqrt{n_{1}}} Z_{1}^\top & 0 & 0 & 0 \\
0 & 0 & 0 & 0 & 0 & I_{n_1} & -\frac{1}{\sqrt{\lambda_1} \sqrt{n_{1}}} Z_{1} & 0 & 0 \\
0 & 0 & 0 & 0 & 0 & 0 & I_{d} & -\Sigma_{1}^{\frac{1}{2}} & 0 \\
0 & 0 & 0 & 0 & \Sigma_{1}^{\frac{1}{2}} & 0 & 0 & I_{d} & -\Sigma_1 \\
0 & 0 & 0 & 0 & 0 & 0 & 0 & 0 & I_{d} \\
\end{array}\right)$}.
\end{gather}
We compute $Q$ using the \texttt{NCMinimalDescriptorRealization} function of the NCAlgebra library\footnote{\url{https://github.com/NCAlgebra/NC}}. We further symmetrize $Q$ by constructing the self-adjoint matrix $\overline{Q}$:
\begin{gather}
    \overline{Q} = \left(\begin{array}{cc} 0 & Q^\top \\ Q & 0 \end{array}\right).
\end{gather}
This enables us to apply known formulae for the $R$-transform of Gaussian block matrices \citep{far2006spectra}. We note that $\overline Q^{-1}_{0, 17} = Q^{-1}_{0, 8}$. Taking similar steps as \citet{lee2023demystifying}, we use OVFPT on $\overline Q$. Let $G =  (I_{18} \otimes \mathbb E\,\ntrace) \overline Q^{-1} \in \mathbb R^{18 \times 18}$ be the matrix whose entries are  normalized traces of blocks\footnote{By convention, the trace of a non-square block is zero.} of $\overline{Q}^{-1}$. We provide a detailed example of how to apply OVFPT to derive the MP law in Appendix \ref{sec:ovfpt}.
One can arrive at that, in the asymptotic limit given by Equation \ref{eq:proportionate}, the following holds:
\begin{equation}
\begin{split}
&\E \ntrace (H_1 + \lambda_1 I_d)^{-1} H_1 (H_1 + \lambda_1 I_d)^{-1} \Sigma_1 = \frac{G_{0, 17}}{\lambda_1},\\
&\text{ with }\frac{G_{0, 17}}{\lambda_1} = (G_{5, 14} - G_{2, 14}) \ntrace \left(\Sigma_1 G_{2, 11} + \lambda_1 I_d \right)^{-1} \Sigma_1 \left(\Sigma_1 G_{5, 14} + \lambda_1 I_d \right)^{-1} \Sigma_1.
\end{split}
\end{equation}
We will now obtain the fixed-point equations satisfied by $G_{2, 11}$ and $G_{5, 14}$. We observe that:
\begin{align}
    &G_{2, 11} = - \frac{\lambda_1}{- \lambda_1 + \phi_{1} G_{3, 10}}, \quad G_{3, 10} = -\lambda_1 \ntrace \Sigma_1 (\Sigma_1 G_{2, 11} + \lambda_1 I_d)^{-1} \\
    &\implies G_{2, 11} = \frac{1}{1 + \phi_{1} \ntrace \Sigma_1 (\Sigma_1 G_{2, 11} + \lambda_1 I_d)^{-1}}, \\
    &G_{5, 14} = - \frac{\lambda_1}{- \lambda_1 + \phi_{1} G_{6, 13}}, \quad G_{6, 13} = - \lambda_1 \ntrace \Sigma_1 \left(\Sigma_1 G_{5, 14} + \lambda_1 I_d \right)^{-1} \\
    &\implies G_{5, 14} = \frac{1}{1 + \phi_{1} \ntrace \Sigma_1 \left(\Sigma_1 G_{5, 14} + \lambda_1 I_d \right)^{-1}}.
\end{align}
We recognize that we must have the identification $e_1 = G_{2, 11} = G_{5, 14}$, where $e_1 \geq 0$. Therefore:
\begin{align}
    e_1 &= \frac{e_1}{e_1 + \phi_{1} \ndof^{(1)}_1 (\lambda_1 / e_1)} \\
    \text{i.e., } 1 &= e_1 + \phi_{1} \ndof^{(1)}_1 (\lambda_1 / e_1) = \lambda_1 / \kappa_1 + \phi_{1} \ndof^{(1)}_1 (\kappa_1) \\
    \kappa_1 &= \lambda_1 + \kappa_1 \phi_1 \ndof^{(1)}_1 (\kappa_1),
    \label{eq:kap-lin}
\end{align}
where $\ndof_m^{(s)} (t) = \ntrace \Sigma_s^m \left(\Sigma_s + t I_d \right)^{-m}$ and $\kappa_1 = \lambda_1 / e_1$. Additionally:
\begin{align}
G_{2, 14} &= \frac{\lambda_1 \phi_1 G_{3, 13}}{(-\lambda_1 + \phi_1 G_{3, 10}) (-\lambda_1 + \phi_1 G_{6, 13})} = \phi_1 e_1^2 \frac{G_{3, 13}}{\lambda_1}, \\
\frac{G_{3, 13}}{\lambda_1} &= \ntrace (\Sigma_1 G_{2, 11} + \lambda_1 I_d)^{-2} (\Sigma_1 G_{2, 14} + \lambda_1 I_d) \Sigma_1 \\
&= \frac{G_{2, 14}}{e_1^2} \ndof_2^{(1)} (\kappa_1) + \lambda_1 \ntrace (\Sigma_1 e_1 + \lambda_1 I_d)^{-2} \Sigma_1, \\
\frac{G_{3, 10}}{\lambda_1} &= -\ntrace (\Sigma_1 e_1 + \lambda_1 I_d)^{-1} \Sigma_1.
\end{align}
Then:
\begin{align}
G_{5, 14} - G_{2, 14} &= e_1^2 \left(1 - \phi_1 \frac{G_{3, 10} + G_{3, 13}}{\lambda_1} \right), \\
\frac{G_{3, 10} + G_{3, 13}}{\lambda_1} &= \frac{G_{2, 14}}{e_1^2} \ndof_2^{(1)} (\kappa_1) + \lambda_1 \ntrace (\Sigma_1 e_1 + \lambda_1 I_d)^{-2} \Sigma_1 \\
&-\ntrace (\Sigma_1 e_1 + \lambda_1 I_d)^{-2} (\Sigma_1 e_1 + \lambda_1 I_d) \Sigma_1 \\
&= \frac{G_{2, 14}}{e_1^2} \ndof_2^{(1)} (\kappa_1) - \frac{e_1}{e_1^2} \ndof_2^{(1)} (\kappa_1) \\
&= -\frac{G_{5, 14} - G_{2, 14}}{e_1^2} \ndof_2^{(1)} (\kappa_1).
\end{align}
We define:
\begin{align}
    &c_1 \geq 1, c_1 = \frac{G_{5, 14} - G_{2, 14}}{e_1^2} = 1 + \phi_1 c_1 \ndof_2^{(1)} (\kappa_1), \\
    \text{i.e., }&c_1 = \frac{1}{1 - \phi_1 \ndof_2^{(1)} (\kappa_1)}.
\end{align}
Hence: 
\begin{align}
    \frac{G_{0, 17}}{\lambda_1} = c_1 \ndof_2^{(1)} (\kappa_1) = \frac{\ndof_2^{(1)} (\kappa_1)}{1 - \phi_1 \ndof_2^{(1)} (\kappa_1)}.
\end{align}
In conclusion:
\begin{align}
    &\kappa_1 = \lambda_1 + \kappa_1 \phi_1 \ndof^{(1)}_1 (\kappa_1), \\
    &V_1 (\widehat f_1) = \frac{\sigma_1^2 \phi_1 \ndof_2^{(1)} (\kappa_1)}{1 - \phi_1 \ndof_2^{(1)} (\kappa_1)}.
\end{align}
Following similar steps for $V_2 (\widehat f_2)$, we get:
\begin{align}
    &\kappa_2 = \lambda_2 + \kappa_2 \phi_2 \ndof^{(2)}_1 (\kappa_2), \\
    &V_2 (\widehat f_2) = \frac{\sigma_2^2 \phi_2 \ndof_2^{(2)} (\kappa_2)}{1 - \phi_2 \ndof_2^{(2)} (\kappa_2)}.
\end{align}
To further substantiate our result, let us consider the unregularized case where $\lambda_s = 0$ and $\phi_s < 1$:
\begin{align}
    &\kappa_s = 0, V_s (\widehat f_s) = \frac{\sigma_s^2 \phi_s}{1 - \phi_s}.
\end{align}

From an alternate angle, we know that: 
\begin{align}
    R_s(\widehat f_s) &= \mathbb E\,\|\widehat w_s - w^*_s\|_{\Sigma_s}^2 = \mathbb E\,\|(X_s^\top X_s)^{-1}X_s^\top E_s\|_{\Sigma_s}^2 \\
    &= \sigma_s^2 \mathbb E \trace X_s (X_s^\top X_s)^{-1}\Sigma_s (X_s^\top X_s)^{-1} X_s^\top\\
    &= \sigma_s^2 \mathbb E \trace (X_s^\top X_s)^{-1}\Sigma_s = \frac{\sigma_s^2}{n_s-d-1}\trace I_d = \sigma_s^2\frac{d}{n_s-d-1} \simeq \frac{\sigma_s^2\phi_s}{1-\phi_s},
\end{align}
where we have used Lemma \ref{lm:stats101} below.

\begin{lemma}
    Let $n$ and $d$ be positive integers with $n \ge d+2$. If $Z$ is an $n \times d$ random matrix with IID rows from ${\cal N}(0,\Sigma)$, then:
    \begin{eqnarray}
        \mathbb E (Z^\top Z)^{-1} = \frac{1}{n-d-1}\Sigma^{-1}.
    \end{eqnarray}
    \label{lm:stats101}
\end{lemma}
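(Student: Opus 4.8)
The plan is to recognize $Z^\top Z$ as a Wishart matrix and to identify $\mathbb E(Z^\top Z)^{-1}$ as the mean of the associated inverse-Wishart law, deriving this from scratch via a symmetry argument and a Schur-complement computation. First I would reduce to the isotropic case: writing $Z = Z_0\Sigma^{1/2}$ with $Z_0$ an $n\times d$ matrix of IID $\mathcal N(0,1)$ entries, we have $(Z^\top Z)^{-1} = \Sigma^{-1/2}(Z_0^\top Z_0)^{-1}\Sigma^{-1/2}$ (all inverses existing almost surely since $n\ge d$), so it suffices to show $\mathbb E(Z_0^\top Z_0)^{-1} = \tfrac{1}{n-d-1}I_d$. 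Then I would use rotational invariance: for every orthogonal matrix $O$ we have $Z_0O\stackrel{d}{=}Z_0$, hence $\mathbb E(Z_0^\top Z_0)^{-1} = O^\top\big(\mathbb E(Z_0^\top Z_0)^{-1}\big)O$ for all such $O$, which forces $\mathbb E(Z_0^\top Z_0)^{-1} = c\,I_d$ with $c = \tfrac1d\,\mathbb E\,\trace(Z_0^\top Z_0)^{-1}$.

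It then remains to evaluate $\mathbb E\,\trace(Z_0^\top Z_0)^{-1}$. Partition $Z_0 = [\,z_1\mid Z_{-1}\,]$ into its first column $z_1\in\mathbb R^n$ and its remaining $d-1$ columns $Z_{-1}$. The block-inverse (Schur complement) formula gives $\big((Z_0^\top Z_0)^{-1}\big)_{11} = \big(z_1^\top(I_n-P_{-1})z_1\big)^{-1}$, where $P_{-1} = Z_{-1}(Z_{-1}^\top Z_{-1})^{-1}Z_{-1}^\top$ is the orthogonal projection onto the column span of $Z_{-1}$. Conditioning on $Z_{-1}$, which almost surely has full column rank $d-1$ so that $I_n-P_{-1}$ is a fixed projection of rank $n-d+1$, the vector $z_1\sim\mathcal N(0,I_n)$ is independent of $P_{-1}$, and therefore $z_1^\top(I_n-P_{-1})z_1\sim\chi^2_{n-d+1}$. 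Since $n\ge d+2$ gives $n-d+1\ge 3$, the identity $\mathbb E[1/\chi^2_k] = 1/(k-2)$ for $k>2$ yields $\mathbb E\big((Z_0^\top Z_0)^{-1}\big)_{11} = \tfrac{1}{n-d-1}$; by the symmetry noted above every diagonal entry has this same expectation, so $\mathbb E\,\trace(Z_0^\top Z_0)^{-1} = \tfrac{d}{n-d-1}$ and $c = \tfrac{1}{n-d-1}$. Undoing the reduction gives $\mathbb E(Z^\top Z)^{-1} = \tfrac{1}{n-d-1}\Sigma^{-1}$, as claimed.

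The argument is essentially routine; the only step requiring care is the conditional distributional identity $z_1^\top(I_n-P_{-1})z_1\mid Z_{-1}\sim\chi^2_{n-d+1}$, which rests on the independence of the columns of $Z_0$ together with the elementary fact that a standard Gaussian quadratic form through a deterministic rank-$r$ projection is $\chi^2_r$. One should also record the almost-sure invertibility of $Z_{-1}^\top Z_{-1}$ and of $Z_0^\top Z_0$ (valid since $n\ge d$), so that every inverse written above is well defined, and verify that the degrees-of-freedom bound $n-d+1>2$ needed for $\mathbb E[1/\chi^2]$ is precisely the hypothesis $n\ge d+2$.
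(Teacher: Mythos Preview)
Your proof is correct. The paper does not actually supply a proof of this lemma: it is stated inline (with the suggestive label ``stats101'') as a standard fact about the inverse-Wishart mean and is used only to sanity-check the unregularized limit of the variance term. Your derivation---reduction to the isotropic case, rotational invariance to force a scalar-times-identity expectation, and the Schur-complement/$\chi^2$ computation for the diagonal entry---is the standard self-contained argument and fills in what the paper leaves implicit.
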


\subsection{Bias Term}

We can compute the bias term $B_s (\widehat f_s)$ of the test error of $\widehat f_s$ evaluated on group $s$ as:
\begin{align}
    B_s(\widehat f_{s}) &= \mathbb E\,\|(M_s + n_s \lambda_s I_d)^{-1} M_s w^*_s -w^*_s\|_{\Sigma_s}^2 \\
    &= \mathbb E\,\|(M_s + n_s \lambda_s I_d)^{-1} M_s w^*_s - (M_s + n_s \lambda_s I_d)^{-1} (M_s + n_s \lambda_s I_d) w^*_s\|_{\Sigma_s}^2 \\
    &= \mathbb E\,\|(M_s + n_s \lambda_s I_d)^{-1} n_s \lambda_s w^*_s\|_{\Sigma_s}^2 \\
    &= n_s^2 \lambda_s^2 \mathbb E\, \trace (M_s + n_s \lambda_s I_d)^{-1} w^*_s (w^*_s)^\top (M_s + n_s \lambda_s I_d)^{-1} \Sigma_s.
\end{align}

We can re-express this as:
\begin{align}
    \frac{1}{\lambda_s^2} B_s (\widehat f_s) &= \mathbb E\, \ntrace (H_s + \lambda_s I_d)^{-1} \Theta_s (H_s + \lambda_s I_d)^{-1} \Sigma_s \\
    B_s (\widehat f_s) &= \mathbb E\, \ntrace (H_s/\lambda_s + I_d)^{-1} \Theta_s (H_s/\lambda_s + I_d)^{-1} \Sigma_s,
\end{align}
where $\Theta_s = \begin{cases} \Theta, & s = 1 \\ \Theta + \Delta, & s= 2 \end{cases}$. WLOG, we consider the case where $s = 1$. The matrix of interest has a linear pencil representation given by (with zero-based indexing):
\begin{align}
(H_1/\lambda_1 + I_d)^{-1} \Theta (H_1/\lambda_1 + I_d)^{-1} \Sigma_1 = Q^{-1}_{0, 8},
\end{align}
where the linear pencil $Q$ is defined as follows:
\begin{equation}
Q = 
\resizebox{0.8\linewidth}{!}{%
$\left(\begin{array}{ccccccccc}
I_{d} & \Sigma_{1}^{\frac{1}{2}} & 0 & 0 & -\Theta & 0 & 0 & 0 & 0 \\
0 & I_{d} & -\frac{1}{\sqrt{\lambda} \sqrt{n}} Z_{1}^\top & 0 & 0 & 0 & 0 & 0 & 0 \\
0 & 0 & I_{n_1} & -\frac{1}{\sqrt{\lambda} \sqrt{n}} Z_{1} & 0 & 0 & 0 & 0 & 0 \\
-\Sigma_{1}^{\frac{1}{2}} & 0 & 0 & I_{d} & 0 & 0 & 0 & 0 & 0 \\
0 & 0 & 0 & 0 & I_{d} & \Sigma_{1}^{\frac{1}{2}} & 0 & 0 & -\Sigma_1 \\
0 & 0 & 0 & 0 & 0 & I_{d} & -\frac{1}{\sqrt{\lambda} \sqrt{n}} Z_{1}^\top & 0 & 0 \\
0 & 0 & 0 & 0 & 0 & 0 & I_{n_1} & -\frac{1}{\sqrt{\lambda} \sqrt{n}} Z_{1} & 0 \\
0 & 0 & 0 & 0 & -\Sigma_{1}^{\frac{1}{2}} & 0 & 0 & I_{d} & 0 \\
0 & 0 & 0 & 0 & 0 & 0 & 0 & 0 & I_{d} \\
\end{array}\right)$}.
\end{equation}
We note that $\overline Q^{-1}_{0, 17} = Q^{-1}_{0, 8}$. Using OVFPT, we deduce that, in the limit given by Equation \ref{eq:proportionate}, the following holds:
\begin{align}
&\mathbb E\, \ntrace (H_1/\lambda_1 + I_d)^{-1} \Theta (H_1/\lambda_1 + I_d)^{-1} \Sigma_1 = G_{0, 17}, \\
&\text{with } G_{0, 17} = \lambda_1 \ntrace \left(\Sigma_1 G_{2, 11} + \lambda_1 I_d\right)^{-1} (\lambda_1 \Theta + \Sigma_1 G_{2, 15}) \left(\Sigma_1 G_{6, 15} + \lambda_1 I_d \right)^{-1} \Sigma_1.
\end{align}
We will now obtain the fixed-point equations satisfied by $G_{2, 11}$ and $G_{6, 15}$. We observe that:
\begin{align}
&G_{2, 11} = - \frac{\lambda_1}{- \lambda_1 + \phi_{1} G_{3, 10}}, \quad  G_{3, 10} = -\lambda_1 \ntrace \Sigma_1 (\Sigma_1 G_{2, 11} + \lambda_1 I_d)^{-1} \\
&\implies G_{2, 11} = \frac{1}{1 + \phi_{1} \ntrace \Sigma_1 (\Sigma_1 G_{2, 11} + \lambda_1 I_d)^{-1}}, \\
&G_{6, 15} = - \frac{\lambda_1}{- \lambda_1 + \phi_{1} G_{7, 14}}, \quad G_{7, 14} = - \lambda_1 \ntrace \Sigma_1 \left( \Sigma_1 G_{6, 15} + \lambda_1 I_d \right)^{-1} \\
&\implies G_{6, 15} = \frac{1}{1 + \phi_{1} \ntrace \Sigma_1 \left( \Sigma_1 G_{6, 15} + \lambda_1 I_d \right)^{-1}}.
\end{align}

We recognize that we must have the identification $e_1 = G_{2, 11} = G_{6, 15}$, where $e_1 \geq 0$. Therefore:
\begin{align}
    e_1 &= \frac{1}{1 + \phi_{1} \ntrace \Sigma_1 \left(\Sigma_1 e_1 + \lambda_1 I_d \right)^{-1}}, \\
    \text{i.e., }\kappa_1 &= \lambda_1 + \kappa_1 \phi_1 \ndof^{(1)}_1 (\kappa_1).
\end{align}
Additionally:
\begin{align}
G_{2, 15} &= \frac{\lambda_1 \phi_1 G_{3, 14}}{(-\lambda_1 + \phi_1 G_{3, 10}) (-\lambda_1 + \phi_1 G_{7, 14})} = \phi_1 e_1^2 \frac{G_{3, 14}}{\lambda_1}, \\
\frac{G_{3, 14}}{\lambda_1} &= \ntrace (\Sigma_1 G_{2, 11} + \lambda_1 I_d)^{-2} (\Sigma_1 G_{2, 15} + \lambda_1 \Theta) \Sigma_1 \\
&= \frac{G_{2, 15}}{e_1^2} \ndof_2^{(1)} (\kappa_1) + \frac{\lambda_1}{e_1^2} \ntrace (\Sigma_1 + \kappa_1 I_d)^{-2} \Theta \Sigma_1, \\
\implies G_{2, 15} &= \phi_1 G_{2, 15} \ndof_2^{(1)} (\kappa_1) + \lambda_1 \phi_1 \ntrace (\Sigma_1 + \kappa_1 I_d)^{-2} \Theta \Sigma_1, \\
\text{i.e., }G_{2, 15} &= \frac{\lambda_1 \phi_1}{1 - \phi_1 \ndof_2^{(1)} (\kappa_1)} \ntrace (\Sigma_1 + \kappa_1 I_d)^{-2} \Theta \Sigma_1.
\end{align}
Hence:
\begin{align}
    G_{0, 17} &= \kappa_1^2 \ntrace \left(\Sigma_1 + \kappa_1 I_d\right)^{-2} \Theta \Sigma_1 + \kappa_1^2 \ndof_2^{(1)} (\kappa_1) \frac{G_{2, 15}}{\lambda_1} \\
    &= \kappa_1^2 \ntrace \left(\Sigma_1 + \kappa_1 I_d\right)^{-2} \Theta \Sigma_1 + \kappa_1^2 \frac{\phi_1 \ndof_2^{(1)} (\kappa_1)}{1 - \phi_1 \ndof_2^{(1)} (\kappa_1)} \ntrace (\Sigma_1 + \kappa_1 I_d)^{-2} \Theta \Sigma_1 \\
    &= \left(1 + \frac{\phi_1 \ndof_2^{(1)} (\kappa_1)}{1 - \phi_1 \ndof_2^{(1)} (\kappa_1)} \right) \kappa_1^2 \ntrace \left(\Sigma_1 + \kappa_1 I_d\right)^{-2} \Theta \Sigma_1.
\end{align}
In conclusion:
\begin{align}
    B_1 (\widehat f_1) = \frac{\kappa_1^2 \ntrace \left(\Sigma_1 + \kappa_1 I_d\right)^{-2} \Theta \Sigma_1}{1 - \phi_1 \ndof_2^{(1)} (\kappa_1)}.
\end{align}

Following similar steps for $B_2 (\widehat f_2)$, we get:
\begin{align}
    B_2 (\widehat f_2) = \frac{\kappa_2^2 \ntrace \left(\Sigma_2 + \kappa_2 I_d\right)^{-2} (\Theta + \Delta) \Sigma_2}{1 - \phi_2 \ndof_2^{(2)} (\kappa_2)}.
\end{align}

We observe that in the unregularized case (i.e., $\lambda_s = 0$), $\kappa_s = 0$. In this setting, $B_s (\widehat f_s) = 0$ as expected.
\end{proof}

\clearpage

\section{Proof of Theorem \ref{thm:odd-theory}}
\label{sec:odd-proof}

\begin{proof}
We define $M = X^\top X + n \lambda I_d$. Note that one has:
\begin{align}
\widehat w &= M^{-1}(M_1w^*_1+X_1^\top E_1+M_2w^*_2+X_2^\top E_2).
\end{align}
We deduce that $R_s(\widehat f) = B_s(\widehat f) + V_s(\widehat f)$, where:
\begin{align}
B_s(\widehat f) &=  \mathbb E\,\|M^{-1} M_{s'} w^*_{s'} + M^{-1} M_s w^*_s -w^*_s \|_{\Sigma_s}^2, \\
V_s(\widehat f) &= \mathbb E\,\|M^{-1} (X_1^\top E_1+X_2^\top E_2)\|_{\Sigma_s}^2 \\
&= \mathbb E\,\|M^{-1} X_1^\top E_1\|_{\Sigma_s}^2 + \mathbb E\,\|M^{-1} X_2^\top E_2\|_{\Sigma_s}^2,
\end{align}
with $s' = \begin{cases} 2, & s = 1 \\ 1, & s = 2 \end{cases}$.

\subsection{Variance Terms}

Note that $V_s(\widehat f)$ of the test error of $\widehat f$ evaluated on group $s$ is given by: 
\begin{align}
V_s(\widehat f) &= \sigma_1^2\mathbb E\, \trace X_1 M^{-1} \Sigma_s M^{-1} X_1^\top + \sigma_2^2\mathbb E\, \trace X_2 M^{-1} \Sigma_s M^{-1} X_2^\top \\
&= \sigma_1^2\mathbb E\, \trace M^{-1} M_1 M^{-1} \Sigma_s + \sigma_2^2\mathbb E\, \trace M^{-1} M_2 M^{-1} \Sigma_s.
\end{align}
We can re-express this as:
\begin{eqnarray}
nV_s(\widehat f) = \sigma_1^2 \mathbb E\trace (H+\lambda I_d)^{-1} H_1 (H+\lambda I_d)^{-1} \Sigma_s + \sigma_2^2\mathbb E\, \trace (H+\lambda I_d)^{-1} H_2 (H+\lambda I_d)^{-1} \Sigma_s,
\end{eqnarray}
where $H = H_1+H_2$, $H_s = X_s^\top X_s/n$, and $X_s=Z_s\Sigma_s^{1/2}$ with $Z_1 \in \mathbb R^{n_1 \times d}$ and $Z_2 \in \mathbb R^{n_2 \times d}$ being independent random matrices with IID entries from ${\cal N}(0,1)$.

WLOG, we focus on $\trace (H+\lambda I_d)^{-1} H_2 (H+\lambda I_d)^{-1} \Sigma_s$. The matrix of interest has a linear pencil representation given by (with zero-based indexing):
\begin{align}
(H_1 / \lambda + H_2 / \lambda + I_d)^{-1} (H_2 / \lambda) (H_1 / \lambda + H_2 / \lambda + I_d)^{-1} \Sigma_s = Q^{-1}_{1, 8},
\end{align}
where the linear pencil $Q$ is defined as follows:
\begin{equation}
\rotatebox{270}{%
$Q = $
\resizebox{\linewidth}{!}{%
$\left(\begin{array}{ccccccccccccccc}
I_{d} & 0 & -\frac{1}{\sqrt{\lambda} \sqrt{n}} Z_{2}^\top & 0 & 0 & 0 & 0 & 0 & 0 & 0 & 0 & 0 & 0 & 0 & 0 \\
-\Sigma_{2}^{\frac{1}{2}} & I_{d} & 0 & 0 & 0 & 0 & 0 & 0 & 0 & 0 & 0 & 0 & 0 & 0 & 0 \\
0 & 0 & I_{n_2} & -\frac{1}{\sqrt{\lambda} \sqrt{n}} Z_{2} & 0 & 0 & 0 & 0 & 0 & 0 & 0 & 0 & 0 & 0 & 0 \\
0 & 0 & 0 & I_{d} & -\Sigma_{2}^{\frac{1}{2}} & 0 & 0 & 0 & 0 & 0 & 0 & 0 & 0 & 0 & 0 \\
\Sigma_{2}^{\frac{1}{2}} & 0 & 0 & 0 & I_{d} & \Sigma_{1}^{\frac{1}{2}} & 0 & 0 & -\Sigma_s & 0 & 0 & 0 & 0 & 0 & 0 \\
0 & 0 & 0 & 0 & 0 & I_{d} & -\frac{1}{\sqrt{\lambda} \sqrt{n}} Z_{1}^\top & 0 & 0 & 0 & 0 & 0 & 0 & 0 & 0 \\
0 & 0 & 0 & 0 & 0 & 0 & I_{n_1} & -\frac{1}{\sqrt{\lambda} \sqrt{n}} Z_{1} & 0 & 0 & 0 & 0 & 0 & 0 & 0 \\
0 & 0 & 0 & 0 & -\Sigma_{1}^{\frac{1}{2}} & 0 & 0 & I_{d} & 0 & 0 & 0 & 0 & 0 & 0 & 0 \\
0 & 0 & 0 & 0 & 0 & 0 & 0 & 0 & I_{d} & \Sigma_{1}^{\frac{1}{2}} & 0 & 0 & \Sigma_{2}^{\frac{1}{2}} & 0 & 0 \\
0 & 0 & 0 & 0 & 0 & 0 & 0 & 0 & 0 & I_{d} & -\frac{1}{\sqrt{\lambda} \sqrt{n}} Z_{1}^\top & 0 & 0 & 0 & 0 \\
0 & 0 & 0 & 0 & 0 & 0 & 0 & 0 & 0 & 0 & I_{n_1} & -\frac{1}{\sqrt{\lambda} \sqrt{n}} Z_{1} & 0 & 0 & 0 \\
0 & 0 & 0 & 0 & 0 & 0 & 0 & 0 & -\Sigma_{1}^{\frac{1}{2}} & 0 & 0 & I_{d} & 0 & 0 & 0 \\
0 & 0 & 0 & 0 & 0 & 0 & 0 & 0 & 0 & 0 & 0 & 0 & I_{d} & -\frac{1}{\sqrt{\lambda} \sqrt{n}} Z_{2}^\top & 0 \\
0 & 0 & 0 & 0 & 0 & 0 & 0 & 0 & 0 & 0 & 0 & 0 & 0 & I_{n_2} & -\frac{1}{\sqrt{\lambda} \sqrt{n}} Z_{2} \\
0 & 0 & 0 & 0 & 0 & 0 & 0 & 0 & -\Sigma_{2}^{\frac{1}{2}} & 0 & 0 & 0 & 0 & 0 & I_{d} \\
\end{array}\right)$}.}
\end{equation}
Using OVFPT, we deduce that, in the limit given by Equation \ref{eq:proportionate}, the following holds:
\begin{align}
\mathbb E\,\ntrace (H_1+H_2+\lambda I_d)^{-1}H_2(H_1+H_2+\lambda I_d)^{-1}\Sigma_s = \frac{G_{1,23}}{\lambda},
\end{align}
with:
\begin{align}
&\frac{G_{1,23}}{\lambda} = \lambda^{-1} \ntrace p_2 \Sigma_2 (\lambda \Sigma_s G_{0, 15} + \lambda G_{0, 27} I_d - p_1 \Sigma_1 G_{0, 15} G_{5, 24} + p_1 \Sigma_1 G_{0, 27} G_{5, 20}) \\
&\cdot (p_1 \Sigma_1 G_{5, 20} + p_2 \Sigma_2 G_{0, 15} + \lambda I_d)^{-2}.
\end{align}
By identifying identical entries of $\overline Q^{-1}$, we must have that $\frac{G_{5, 20}}{\lambda} = \frac{G_{6, 21}}{\lambda} = \frac{G_{10, 25}}{\lambda}, \frac{G_{0, 15}}{\lambda} = \frac{G_{2, 17}}{\lambda} = \frac{G_{13, 28}}{\lambda}$. For $G_{6, 21}$ and $G_{2, 17}$, we observe that:
\begin{align}
    &G_{6, 21} = - \frac{\lambda}{- \lambda + \phi G_{7, 20}}, \quad 
    G_{7, 20} = -\lambda \ntrace \Sigma_1 \left(p_{1} \Sigma_1 G_{6, 21} + p_{2} \Sigma_2 G_{2, 17} + \lambda I_d\right)^{-1} \\
    &\implies G_{6, 21} = \frac{1}{1 + \phi \ntrace \Sigma_1 \left(p_{1} \Sigma_1 G_{6, 21} + p_{2} \Sigma_2 G_{2, 17} + \lambda I_d\right)^{-1}}, \\
    &G_{2, 17} = - \frac{\lambda}{- \lambda + \phi G_{3, 15}}, \quad 
    G_{3, 15} = - \lambda \ntrace \Sigma_2 \left(p_{1} \Sigma_1 G_{6, 21} + p_{2} \Sigma_2 G_{2, 17} + \lambda I_d\right)^{-1} \\
    &\implies G_{2, 17} = \frac{1}{1 + \phi \ntrace \Sigma_2 \left(p_{1} \Sigma_1 G_{6, 21} + p_{2} \Sigma_2 G_{2, 17} + \lambda I_d\right)^{-1}}.
\end{align}

We define $\eta_1 = \frac{G_{6, 21}}{\lambda}, \eta_2 = \frac{G_{2, 17}}{\lambda}$, with $\eta_1 \geq 0, \eta_2 \geq 0$. Therefore:
\begin{align}
\eta_s &= \frac{1}{\lambda+\phi \ntrace \Sigma_s K^{-1}},
\end{align}
where $K = \eta_1 p_1 \Sigma_1 + \eta_2 p_2 \Sigma_2 + I_d$. Additionally, by identifying identical entries of $\overline Q^{-1}$, we must have that $G_{5, 24} = G_{6, 25}, G_{0, 27} = G_{2, 28}$. We observe that:
\begin{align}
    G_{10, 25} &= \frac{-\lambda}{-\lambda + \phi G_{11, 24}}, \\
    G_{6, 25} &= \frac{\lambda \phi G_{7, 24}}{(-\lambda + \phi G_{7, 20}) (-\lambda + \phi G_{11, 24})} = \phi \lambda^2 \eta_1^2 \frac{G_{7, 24}}{\lambda}, \\
    \frac{G_{7, 24}}{\lambda} &= \lambda^{-2} \ntrace K^{-2} (p_1 \Sigma_1 G_{6, 25} + p_2 \Sigma_2 G_{2, 28} - \lambda \Sigma_s) \Sigma_1, \\
    \implies G_{6, 25} &= \phi \eta_1^2 \ntrace K^{-2} (p_1 \Sigma_1 G_{6, 25} + p_2 \Sigma_2 G_{2, 28} - \lambda \Sigma_s) \Sigma_1, \\
    G_{13, 28} &= \frac{-\lambda}{-\lambda + \phi G_{14, 27}}, \\
    G_{2, 28} &= \frac{\lambda \phi G_{3, 27}}{(-\lambda + \phi G_{3, 15}) (-\lambda + \phi G_{14, 27})} = \phi \lambda^2 \eta_2^2 \frac{G_{3, 27}}{\lambda}, \\
    \frac{G_{3, 27}}{\lambda} &= \lambda^{-2} \ntrace K^{-2} (p_1 \Sigma_1 G_{6, 25} + p_2 \Sigma_2 G_{2, 28} - \lambda \Sigma_s) \Sigma_2, \\
    \implies G_{2, 28} &= \phi \eta_2^2 \ntrace K^{-2} (p_1 \Sigma_1 G_{6, 25} + p_2 \Sigma_2 G_{2, 28} - \lambda \Sigma_s) \Sigma_2.
\end{align}
We now define $v^{(s)}_1 = -G_{6, 25}, v^{(s)}_2 = -G_{2, 28}$, with $v^{(s)}_1 \geq 0, v^{(s)}_2 \geq 0$. Therefore, $v^{(s)}_1, v^{(s)}_2$ obey the following system of equations:
\begin{align}
    v^{(s)}_k &= \phi \eta_k^2 \ntrace K^{-2} (v^{(s)}_1 p_1 \Sigma_1 + v^{(s)}_2 p_2 \Sigma_2 + \lambda \Sigma_s) \Sigma_k.
\end{align}

We further define $u^{(s)}_k = \frac{v^{(s)}_k}{\lambda}$. Putting all the pieces together:
\begin{align}
\frac{G_{1,23}}{\lambda} = \lambda^{-1} \ntrace p_2 \Sigma_2 \big(\eta_2 \Sigma_s - u^{(s)}_2 I_d + p_1 \Sigma_1 (\eta_2 u^{(s)}_1 - \eta_1 u^{(s)}_2) \big) K^{-2}.
\end{align}

By symmetry, in conclusion:
\begin{gather}
    V_s (\widehat f) = V_s^{(1)} (\widehat f) + V_s^{(2)} (\widehat f), \\
    V_s^{(k)} (\widehat f) = \lambda^{-1} \phi \sigma_k^2 \ntrace p_k \Sigma_k \big(\eta_k \Sigma_s - u^{(s)}_k I_d + p_{k'} \Sigma_{k'} (\eta_k u^{(s)}_{k'} - \eta_{k'} u^{(s)}_k) \big) K^{-2},
\end{gather}
with $k' = \begin{cases} 2, & k = 1 \\ 1, & k = 2 \end{cases}$.

We now corroborate our result in the limit $p_2 \to 1$ (i.e., $p_1 \to 0$) and $s = 2$. We observe that:
\begin{align}
    \phi \to \phi_2&, \lambda \to \lambda_2, \\
    V_2^{(1)} (\widehat f) &= 0, \\
    \frac{V_2^{(2)} (\widehat f)}{\lambda^{-1} \phi_2 \sigma_2^2} &= \ntrace \Sigma_2 (\eta_2 \Sigma_2 - u_2^{(2)} I_d) K^{-2} \\
    v_2^{(2)} &= \phi_2 \eta_2^2 \ntrace K^{-2} (v_2^{(2)} \Sigma_2 + \lambda_2 \Sigma_2) \Sigma_2 \\
    &= \phi_2 (v_2^{(2)} + \lambda_2) \ndof_2^{(2)} (\kappa_2), \\
    u_2^{(2)} &= \frac{\phi_2 \ndof_2^{(2)} (\kappa_2)}{1 - \phi_2 \ndof_2^{(2)} (\kappa_2)}, \\
    \frac{V_2^{(2)} (\widehat f)}{\lambda^{-1} \phi_2 \sigma_2^2} &= \kappa_2 \ndof_2^{(2)} (\kappa_2) - u_2^{(2)} \ntrace \Sigma_2 (\eta_2 \Sigma_2 + I_d)^{-2} \\
    &= \kappa_2 \ndof_2^{(2)} (\kappa_2) - \kappa_2^2 u_2^{(2)} \ntrace \Sigma_2 (\Sigma_2 + \kappa_2 I_d)^{-2} \\
    &= \kappa_2 \ndof_2^{(2)} (\kappa_2) - \kappa_2  u_2^{(2)} (\ndof_1^{(2)} (\kappa_2) - \ndof_2^{(2)} (\kappa_2)) \\
    &= \kappa_2 (1 + u_2^{(2)}) \ndof_2^{(2)} (\kappa_2) - \kappa_2 u_2^{(2)} \ndof_1^{(2)} (\kappa_2) \\
    &= \frac{\kappa_2 - \kappa_2 \phi_2 \ndof_1^{(2)} (\kappa_2)}{1 - \phi_2 \ndof_2^{(2)} (\kappa_2)} \cdot \ndof_2^{(2)} (\kappa_2) \\
    &= \frac{\lambda \ndof_2^{(2)} (\kappa_2)}{1 - \phi_2 \ndof_2^{(2)} (\kappa_2)}, \\
    V_2^{(2)} (\widehat f) &= \frac{\sigma_2^2 \phi_2 \ndof_2^{(2)} (\kappa_2)}{1 - \phi_2 \ndof_2^{(2)} (\kappa_2)},
\end{align}
which exactly recovers the result for $V_2 (\widehat f_2)$ as expected.

\subsection{Bias Terms}
Recall that:
\begin{align}
B_s(\widehat f) &=  \mathbb E\,\|M^{-1} M_{s'} w^*_{s'} + M^{-1} M_s w^*_s -w^*_s \|_{\Sigma_s}^2.
\end{align}
Now, observe that $M^{-1}M_1 w^*_1-w^*_1 = M^{-1}M_1 w^*_1-M^{-1}Mw^*_1=-M^{-1} M_2 w^*_1 - n \lambda M^{-1} w_1^*$. Let $\delta = w^*_2-w^*_1$. Then:
\begin{align}
    B_s(\widehat f) &= \mathbb E \|M^{-1} M_{s'} (-1)^{s - 1} \delta - n \lambda M^{-1} w_s^* \|^2_{\Sigma_s} \\
    &= \mathbb E \trace \delta^\top M_{s'} M^{-1}\Sigma_s M^{-1} M_{s'} \delta \\
    &- 2 (-1)^{s - 1} n \lambda \mathbb E \trace 
 \delta^\top M_{s'} M^{-1} \Sigma_s M^{-1} w_s^* \\
    &+ n^2 \lambda^2 \mathbb E \trace (w_s^*)^\top M^{-1} \Sigma_s M^{-1} w_s^* \\
    &= B_s^{(1)} (\widehat f) - 2 (-1)^{s - 1} B_s^{(2)} (\widehat f) + B_s^{(3)} (\widehat f),
\end{align}
where:
\begin{align}
B_s^{(1)} (\widehat f) &= \mathbb E \ntrace (H_1/\lambda + H_2/\lambda + I_d)^{-1} (H_{s'}/\lambda) \Delta (H_{s'}/\lambda) (H_1/\lambda + H_2/\lambda + I_d)^{-1} \Sigma_s, \\
B_s^{(2)} (\widehat f) &= \mathbb E \trace \delta^\top (H_{s'}/\lambda) (H_1/\lambda + H_2/\lambda + I_d)^{-1} \Sigma_s (H_1/\lambda + H_2/\lambda + I_d)^{-1} w_s^*, \\
B_s^{(3)} (\widehat f) &= \mathbb E \ntrace (H_1/\lambda + H_2/\lambda + I_d)^{-1} \Theta_s (H_1/\lambda + H_2/\lambda + I_d)^{-1} \Sigma_s.
\end{align}
Because $\delta$ and $w_1^*$ are independent and sampled from zero-centered distributions:
\begin{align}
    B_1^{(2)} (\widehat f) &= 0, \\
    B_2^{(2)} (\widehat f) &= \mathbb E \ntrace (H_1/\lambda + H_2/\lambda + I_d)^{-1} \Delta (H_1/\lambda) (H_1/\lambda + H_2/\lambda + I_d)^{-1} \Sigma_2.
\end{align}

WLOG, for $B_s^{(1)}$, we focus on the case $s = 1$. The matrix of interest has a linear pencil representation given by (with zero-based indexing):
\begin{align}
    (H_1/\lambda + H_2/\lambda + I_d)^{-1} (H_2/\lambda) \Delta (H_2/\lambda) (H_1/\lambda + H_2/\lambda + I_d)^{-1} \Sigma_1 = Q^{-1}_{1, 16},
\end{align}
where the linear pencil $Q$ is defined as follows:
\begin{equation}
\rotatebox{270}{%
$Q = $
\resizebox{\linewidth}{!}{%
$\left(\begin{array}{ccccccccccccccccc}
I_{d} & 0 & -\Sigma_{2}^{\frac{1}{2}} & 0 & 0 & 0 & 0 & 0 & 0 & 0 & 0 & 0 & 0 & 0 & 0 & 0 & 0 \\
-\Delta & I_{d} & 0 & 0 & 0 & 0 & 0 & 0 & 0 & 0 & 0 & 0 & 0 & 0 & 0 & 0 & 0 \\
0 & 0 & I_{d} & -\frac{1}{\sqrt{\lambda} \sqrt{n}} Z_{2}^\top & 0 & 0 & 0 & 0 & 0 & 0 & 0 & 0 & 0 & 0 & 0 & 0 & 0 \\
0 & 0 & 0 & I_{n_2} & -\frac{1}{\sqrt{\lambda} \sqrt{n}} Z_{2} & 0 & 0 & 0 & 0 & 0 & 0 & 0 & 0 & 0 & 0 & 0 & 0 \\
0 & 0 & 0 & 0 & I_{d} & -\Sigma_{2}^{\frac{1}{2}} & 0 & 0 & 0 & 0 & 0 & 0 & 0 & 0 & 0 & 0 & 0 \\
0 & 0 & \Sigma_{2}^{\frac{1}{2}} & 0 & 0 & I_{d} & \Sigma_{1}^{\frac{1}{2}} & 0 & 0 & -\Sigma_1 & 0 & 0 & 0 & 0 & 0 & 0 & 0 \\
0 & 0 & 0 & 0 & 0 & 0 & I_{d} & -\frac{1}{\sqrt{\lambda} \sqrt{n}} Z_{1}^\top & 0 & 0 & 0 & 0 & 0 & 0 & 0 & 0 & 0 \\
0 & 0 & 0 & 0 & 0 & 0 & 0 & I_{n_1} & -\frac{1}{\sqrt{\lambda} \sqrt{n}} Z_{1} & 0 & 0 & 0 & 0 & 0 & 0 & 0 & 0 \\
0 & 0 & 0 & 0 & 0 & -\Sigma_{1}^{\frac{1}{2}} & 0 & 0 & I_{d} & 0 & 0 & 0 & 0 & 0 & 0 & 0 & 0 \\
0 & 0 & 0 & 0 & 0 & 0 & 0 & 0 & 0 & I_{d} & \Sigma_{1}^{\frac{1}{2}} & 0 & 0 & \Sigma_{2}^{\frac{1}{2}} & 0 & 0 & 0 \\
0 & 0 & 0 & 0 & 0 & 0 & 0 & 0 & 0 & 0 & I_{d} & -\frac{1}{\sqrt{\lambda} \sqrt{n}} Z_{1}^\top & 0 & 0 & 0 & 0 & 0 \\
0 & 0 & 0 & 0 & 0 & 0 & 0 & 0 & 0 & 0 & 0 & I_{n_1} & -\frac{1}{\sqrt{\lambda} \sqrt{n}} Z_{1} & 0 & 0 & 0 & 0 \\
0 & 0 & 0 & 0 & 0 & 0 & 0 & 0 & 0 & -\Sigma_{1}^{\frac{1}{2}} & 0 & 0 & I_{d} & 0 & 0 & 0 & 0 \\
0 & 0 & 0 & 0 & 0 & 0 & 0 & 0 & 0 & 0 & 0 & 0 & 0 & I_{d} & -\frac{1}{\sqrt{\lambda} \sqrt{n}} Z_{2}^\top & 0 & 0 \\
0 & 0 & 0 & 0 & 0 & 0 & 0 & 0 & 0 & 0 & 0 & 0 & 0 & 0 & I_{n_2} & -\frac{1}{\sqrt{\lambda} \sqrt{n}} Z_{2} & 0 \\
0 & 0 & 0 & 0 & 0 & 0 & 0 & 0 & 0 & -\Sigma_{2}^{\frac{1}{2}} & 0 & 0 & 0 & 0 & 0 & I_{d} & \Sigma_{2}^{\frac{1}{2}} \\
0 & 0 & 0 & 0 & 0 & 0 & 0 & 0 & 0 & 0 & 0 & 0 & 0 & 0 & 0 & 0 & I_{d} \\
\end{array}\right)$}.}
\end{equation} 

Using OVFPT, we deduce that, in the limit given by Equation \ref{eq:proportionate}, the following holds:
\begin{align}
    \mathbb E \ntrace (H_1/\lambda + H_2/\lambda + I_d)^{-1} (H_2/\lambda) \Delta (H_2/\lambda) (H_1/\lambda + H_2/\lambda + I_d)^{-1} \Sigma_1 = G_{1, 33}, 
\end{align}
with:
\begin{equation}
\begin{split}    
&G_{1, 33} \\
&= \lambda^{-1} \ntrace p_2 \Sigma_2 \Delta (p_2 \Sigma_2 G^2_{2, 19} (\lambda - p_1 G_{6, 27}) \Sigma_1 - G_{2, 30} (p_1 \Sigma_1 G_{6, 23} + \lambda I_d)^2) \\
&\cdot (p_1 \Sigma_1 G_{6, 23} + p_2 \Sigma_2 G_{2, 19} + \lambda I_d)^{-2}.
\end{split}
\end{equation}

By identifying identical entries of $\overline Q^{-1}$, we must have that $\eta_1 = \frac{G_{6, 23}}{\lambda} = \frac{G_{7, 24}}{\lambda} = \frac{G_{11, 28}}{\lambda}, \eta_2 = \frac{G_{2, 19}}{\lambda} = \frac{G_{3, 20}}{\lambda} = \frac{G_{14, 31}}{\lambda}$. For $G_{7, 24}$ and $G_{3, 20}$, we observe that:
\begin{align}
    &G_{7, 24} = - \frac{\lambda}{- \lambda + \phi G_{8, 23}}, \quad 
    G_{8, 23} = -\lambda \ntrace \Sigma_1 \left(p_{1} \Sigma_1 G_{7, 24} + p_{2} \Sigma_2 G_{3, 20} + \lambda I_d\right)^{-1} \\
    &\implies G_{7, 24} = \frac{1}{1 + \phi \ntrace \Sigma_1 \left(p_{1} \Sigma_1 G_{7, 24} + p_{2} \Sigma_2 G_{3, 20} + \lambda I_d\right)^{-1}}, \\
    &G_{3, 20} = - \frac{\lambda}{- \lambda + \phi G_{4, 19}}, \quad 
    G_{4, 19} = - \lambda \ntrace \Sigma_2 \left(p_{1} \Sigma_1 G_{7, 24} + p_{2} \Sigma_2 G_{3, 20} + \lambda I_d\right)^{-1} \\
    &\implies G_{3, 20} = \frac{1}{1 + \phi \ntrace \Sigma_2 \left(p_{1} \Sigma_1 G_{7, 24} + p_{2} \Sigma_2 G_{3, 20} + \lambda I_d\right)^{-1}}.
\end{align}

By again identifying identical entries of $\overline Q^{-1}$, we further have that $v^{(1)}_1 = -G_{6, 27} = -G_{7, 28}, v^{(1)}_2 = -G_{2, 30} = -G_{3, 31}$. We observe that:
\begin{align}
G_{7, 28} &= \phi \lambda^2 \eta_1^2 \frac{G_{8, 27}}{\lambda}, \\
\frac{G_{8, 27}}{\lambda} &= \lambda^{-2} \ntrace K^{-2} (p_1 \Sigma_1 G_{7, 28} + p_2 \Sigma_2 G_{3, 31} - \lambda \Sigma_1) \Sigma_1 \\
\implies v^{(1)}_1 &= \phi \eta_1^2 \ntrace K^{-2} (v^{(s)}_1 p_1 \Sigma_1  + v^{(s)}_2 p_2 \Sigma_2 + \lambda \Sigma_1) \Sigma_1, \\
G_{3, 31} &= \phi \lambda^2 \eta_2^2 \frac{G_{4, 30}}{\lambda}, \\
\frac{G_{4, 30}}{\lambda} &= \lambda^{-2} \ntrace K^{-2} (p_1 \Sigma_1 G_{7, 28} + p_2 \Sigma_2 G_{3, 31} -\lambda \Sigma_1) \Sigma_2, \\
\implies v^{(1)}_2 &= \phi \eta_2^2 \ntrace K^{-2} (v^{(s)}_1 p_1 \Sigma_1  + v^{(s)}_2 p_2 \Sigma_2 + \lambda \Sigma_1) \Sigma_2.
\end{align}

Putting all the pieces together:
\begin{align}
B_1^{(1)} (\widehat f) &= \ntrace p_2 \Sigma_2 \Delta (p_2 \eta_2^2 \Sigma_2 (1 + p_1 u^{(s)}_1) \Sigma_1 + u^{(s)}_2 (p_1 \eta_1 \Sigma_1 + I_d)^2) K^{-2}.
\end{align}

In conclusion:
\begin{align}
B_s^{(1)} (\widehat f) &= \ntrace p_{s'} \Sigma_{s'} \Delta (p_{s'} \eta_{s'}^2 \Sigma_{s'} (1 + p_s u^{(s)}_s) \Sigma_s + u^{(s)}_{s'} (p_s \eta_s \Sigma_s + I_d)^2) K^{-2}.
\end{align}

Now, switching our focus to $B_2^{(2)} (\widehat f)$, the matrix of interest has a linear pencil representation given by (with zero-based indexing):
\begin{align}
   (H_1/\lambda + H_2/\lambda + I_d)^{-1} \Delta (H_1/\lambda) (H_1/\lambda + H_2/\lambda + I_d)^{-1} \Sigma_2 = Q^{-1}_{0, 15},
\end{align}
where the linear pencil $Q$ is defined as follows:
\begin{equation}
\rotatebox{270}{%
$Q = $
\resizebox{\linewidth}{!}{%
$\left(\begin{array}{cccccccccccccccc}
I_{d} & 0 & -\Sigma_{1}^{\frac{1}{2}} & 0 & 0 & 0 & 0 & 0 & 0 & 0 & 0 & 0 & 0 & 0 & 0 & 0 \\
-\Delta & I_{d} & 0 & 0 & 0 & 0 & 0 & 0 & 0 & 0 & 0 & 0 & 0 & 0 & 0 & 0 \\
0 & 0 & I_{d} & -\frac{1}{\sqrt{\lambda} \sqrt{n}} Z_{1}^\top & 0 & 0 & 0 & 0 & 0 & 0 & 0 & 0 & 0 & 0 & 0 & 0 \\
0 & 0 & 0 & I_{n_1} & -\frac{1}{\sqrt{\lambda} \sqrt{n}} Z_{1} & 0 & 0 & 0 & 0 & 0 & 0 & 0 & 0 & 0 & 0 & 0 \\
0 & 0 & 0 & 0 & I_{d} & -\Sigma_{1}^{\frac{1}{2}} & 0 & 0 & 0 & 0 & 0 & 0 & 0 & 0 & 0 & 0 \\
0 & 0 & \Sigma_{1}^{\frac{1}{2}} & 0 & 0 & I_{d} & \Sigma_{2}^{\frac{1}{2}} & 0 & 0 & -\Sigma_2 & 0 & 0 & 0 & 0 & 0 & 0 \\
0 & 0 & 0 & 0 & 0 & 0 & I_{d} & -\frac{1}{\sqrt{\lambda} \sqrt{n}} Z_{2}^\top & 0 & 0 & 0 & 0 & 0 & 0 & 0 & 0 \\
0 & 0 & 0 & 0 & 0 & 0 & 0 & I_{n_2} & -\frac{1}{\sqrt{\lambda} \sqrt{n}} Z_{2} & 0 & 0 & 0 & 0 & 0 & 0 & 0 \\
0 & 0 & 0 & 0 & 0 & -\Sigma_{2}^{\frac{1}{2}} & 0 & 0 & I_{d} & 0 & 0 & 0 & 0 & 0 & 0 & 0 \\
0 & 0 & 0 & 0 & 0 & 0 & 0 & 0 & 0 & I_{d} & \Sigma_{1}^{\frac{1}{2}} & 0 & 0 & \Sigma_{2}^{\frac{1}{2}} & 0 & 0 \\
0 & 0 & 0 & 0 & 0 & 0 & 0 & 0 & 0 & 0 & I_{d} & -\frac{1}{\sqrt{\lambda} \sqrt{n}} Z_{1}^\top & 0 & 0 & 0 & 0 \\
0 & 0 & 0 & 0 & 0 & 0 & 0 & 0 & 0 & 0 & 0 & I_{n_1} & -\frac{1}{\sqrt{\lambda} \sqrt{n}} Z_{1} & 0 & 0 & 0 \\
0 & 0 & 0 & 0 & 0 & 0 & 0 & 0 & 0 & -\Sigma_{1}^{\frac{1}{2}} & 0 & 0 & I_{d} & 0 & 0 & 0 \\
0 & 0 & 0 & 0 & 0 & 0 & 0 & 0 & 0 & 0 & 0 & 0 & 0 & I_{d} & -\frac{1}{\sqrt{\lambda} \sqrt{n}} Z_{2}^\top & 0 \\
0 & 0 & 0 & 0 & 0 & 0 & 0 & 0 & 0 & 0 & 0 & 0 & 0 & 0 & I_{n_2} & -\frac{1}{\sqrt{\lambda} \sqrt{n}} Z_{2} \\
0 & 0 & 0 & 0 & 0 & 0 & 0 & 0 & 0 & -\Sigma_{2}^{\frac{1}{2}} & 0 & 0 & 0 & 0 & 0 & I_{d} \\
\end{array}\right)$}.}
\end{equation}

Like before, the following holds:
\begin{align}
    &\mathbb E \ntrace (H_1/\lambda + H_2/\lambda + I_d)^{-1} \Delta (H_1/\lambda) (H_1/\lambda + H_2/\lambda + I_d)^{-1} \Sigma_2 = G_{1, 25},
\end{align}
with:
\begin{equation}
\begin{split}    
&G_{1, 25} \\
&= \ntrace p_1 \Sigma_1 \Delta (\lambda \Sigma_2 G_{2, 18} + \lambda G_{2, 26} I_d - p_2 \Sigma_2 G_{2, 18} G_{6, 29} + p_2 \Sigma_2 G_{2, 26} G_{6, 22}) \\
&\cdot (p_1 \Sigma_1 G_{2, 18} + p_2 \Sigma_2 G_{6, 22} + \lambda I_d)^{-2}
\end{split}
\end{equation}

By identifying identical entries of $\overline Q^{-1}$, we must have that $\eta_1 = \frac{G_{2, 18}}{\lambda} = \frac{G_{3, 19}}{\lambda} = \frac{G_{11, 27}}{\lambda}, \eta_2 = \frac{G_{6, 22}}{\lambda} = \frac{G_{7, 23}}{\lambda} = \frac{G_{14, 30}}{\lambda}$. For $G_{3, 19}$ and $G_{7, 23}$, we observe that:
\begin{align}
    &G_{3, 19} = - \frac{\lambda}{- \lambda + \phi G_{4, 18}}, \quad 
    G_{4, 18} = - \lambda \ntrace \Sigma_1 \left(p_{1} \Sigma_1 G_{3, 19} + p_{2} \Sigma_2 G_{7, 23} + \lambda I_d\right)^{-1} \\
    &\implies G_{3, 19} = \frac{1}{1 + \phi \ntrace \Sigma_1 \left(p_{1} \Sigma_1 G_{3, 19} + p_{2} \Sigma_2 G_{7, 23} + \lambda I_d\right)^{-1}}, \\
    &G_{7, 23} = - \frac{\lambda}{- \lambda + \phi G_{8, 22}},\quad
    G_{8, 22} = - \lambda \ntrace \Sigma_2 \left(p_{1} \Sigma_1 G_{3, 19} + p_{2} \Sigma_2 G_{7, 23} + \lambda I_d\right)^{-1} \\
    &\implies G_{7, 23} = \frac{1}{1 + \phi \ntrace \Sigma_2 \left(p_{1} \Sigma_1 G_{3, 19} + p_{2} \Sigma_2 G_{7, 23} + \lambda I_d\right)^{-1}}.
\end{align}

By again identifying identical entries of $\overline Q^{-1}$, we further have that $v^{(2)}_1 = -G_{2, 26} = -G_{3, 27}, v^{(2)}_2 = -G_{6, 29} = -G_{7, 30}$. We observe that:
\begin{align}
G_{3, 27} &= \phi \lambda^2 \eta_1^2 \frac{G_{4, 26}}{\lambda}, \\
\frac{G_{4, 26}}{\lambda} &= \lambda^{-2} \ntrace K^{-2} (p_1 \Sigma_1 G_{3, 27} + p_2 \Sigma_2 G_{7, 30} - \lambda \Sigma_2) \Sigma_1, \\
\implies v^{(2)}_1 &= \phi \eta_1^2 \ntrace K^{-2} (v^{(2)}_1 p_1 \Sigma_1 + v^{(2)}_2 p_2 \Sigma_2 + \lambda \Sigma_2) \Sigma_1, \\
G_{7, 30} &= \phi \lambda^2 \eta_2^2 \frac{G_{8, 29}}{\lambda}, \\
\frac{G_{8, 29}}{\lambda} &= \lambda^{-2} \ntrace K^{-2} (p_1 \Sigma_1 G_{3, 27} + p_2 \Sigma_2 G_{7, 30} - \lambda \Sigma_2) \Sigma_2, \\
\implies v^{(2)}_2 &= \phi \eta_2^2 \ntrace K^{-2} (v^{(2)}_1 p_1 \Sigma_1 + v^{(2)}_2 p_2 \Sigma_2 + \lambda \Sigma_2) \Sigma_2.
\end{align}

Putting all the pieces together:
\begin{align}
    B_2^{(1)} (\widehat f) &= 0, \\
    B_2^{(2)} (\widehat f) &= \ntrace p_1 \Sigma_1 \Delta \big(\eta_1 \Sigma_2 - u_1^{(2)} I_d + p_2 \Sigma_2 (\eta_1 u_2^{(2)} - \eta_2 u_1^{(2)}) \big) K^{-2}.
\end{align}

Finally, switching our focus to $B_1^{(3)} (\widehat f)$, the matrix of interest has a linear pencil representation given by (with zero-based indexing):
\begin{align}
    (H_1/\lambda + H_2/\lambda + I_d)^{-1} \Theta (H_1/\lambda + H_2/\lambda + I_d)^{-1} \Sigma_1 = Q^{-1}_{1, 8},
\end{align}
where the linear pencil $Q$ is defined as follows:
\begin{equation}
\rotatebox{270}{%
$Q = $
\resizebox{\linewidth}{!}{%
$\left(\begin{array}{ccccccccccccccc}
I_{d} & 0 & \Sigma_{1}^{\frac{1}{2}} & 0 & 0 & \Sigma_{2}^{\frac{1}{2}} & 0 & 0 & -\Sigma_1 & 0 & 0 & 0 & 0 & 0 & 0 \\
-\Theta & I_{d} & 0 & 0 & 0 & 0 & 0 & 0 & 0 & 0 & 0 & 0 & 0 & 0 & 0 \\
0 & 0 & I_{d} & -\frac{1}{\sqrt{\lambda} \sqrt{n}} Z_{1}^\top & 0 & 0 & 0 & 0 & 0 & 0 & 0 & 0 & 0 & 0 & 0 \\
0 & 0 & 0 & I_{n_1} & -\frac{1}{\sqrt{\lambda} \sqrt{n}} Z_{1} & 0 & 0 & 0 & 0 & 0 & 0 & 0 & 0 & 0 & 0 \\
-\Sigma_{1}^{\frac{1}{2}} & 0 & 0 & 0 & I_{d} & 0 & 0 & 0 & 0 & 0 & 0 & 0 & 0 & 0 & 0 \\
0 & 0 & 0 & 0 & 0 & I_{d} & -\frac{1}{\sqrt{\lambda} \sqrt{n}} Z_{2}^\top & 0 & 0 & 0 & 0 & 0 & 0 & 0 & 0 \\
0 & 0 & 0 & 0 & 0 & 0 & I_{n_2} & -\frac{1}{\sqrt{\lambda} \sqrt{n}} Z_{2} & 0 & 0 & 0 & 0 & 0 & 0 & 0 \\
-\Sigma_{2}^{\frac{1}{2}} & 0 & 0 & 0 & 0 & 0 & 0 & I_{d} & 0 & 0 & 0 & 0 & 0 & 0 & 0 \\
0 & 0 & 0 & 0 & 0 & 0 & 0 & 0 & I_{d} & \Sigma_{1}^{\frac{1}{2}} & 0 & 0 & \Sigma_{2}^{\frac{1}{2}} & 0 & 0 \\
0 & 0 & 0 & 0 & 0 & 0 & 0 & 0 & 0 & I_{d} & -\frac{1}{\sqrt{\lambda} \sqrt{n}} Z_{1}^\top & 0 & 0 & 0 & 0 \\
0 & 0 & 0 & 0 & 0 & 0 & 0 & 0 & 0 & 0 & I_{n_1} & -\frac{1}{\sqrt{\lambda} \sqrt{n}} Z_{1} & 0 & 0 & 0 \\
0 & 0 & 0 & 0 & 0 & 0 & 0 & 0 & -\Sigma_{1}^{\frac{1}{2}} & 0 & 0 & I_{d} & 0 & 0 & 0 \\
0 & 0 & 0 & 0 & 0 & 0 & 0 & 0 & 0 & 0 & 0 & 0 & I_{d} & -\frac{1}{\sqrt{\lambda} \sqrt{n}} Z_{2}^\top & 0 \\
0 & 0 & 0 & 0 & 0 & 0 & 0 & 0 & 0 & 0 & 0 & 0 & 0 & I_{n_2} & -\frac{1}{\sqrt{\lambda} \sqrt{n}} Z_{2} \\
0 & 0 & 0 & 0 & 0 & 0 & 0 & 0 & -\Sigma_{2}^{\frac{1}{2}} & 0 & 0 & 0 & 0 & 0 & I_{d} \\
\end{array}\right)$}.}
\end{equation}
The following holds:
\begin{align}
    (H_1/\lambda + H_2/\lambda + I_d)^{-1} \Theta (H_1/\lambda + H_2/\lambda + I_d)^{-1} \Sigma_1 = G_{1, 23},
\end{align}
with $G_{1, 23} = \lambda \ntrace \Theta (-p_1 \Sigma_1 G_{2, 24} - p_2 \Sigma_2 G_{5, 27} + \lambda \Sigma_1) (p_1 \Sigma_1 G_{2, 17} + p_2 \Sigma_2 G_{5, 20} + \lambda I_d)^{-2}$.

By identifying identical entries of $\overline Q^{-1}$ and following similar steps as before, we must have the identification $\eta_1 = \frac{G_{2, 17}}{\lambda}, \eta_2 = \frac{G_{5, 20}}{\lambda}$, as well as $v^{(1)}_1 = -G_{2, 24}, v^{(1)}_2 = -G_{5, 27}$. Therefore, in conclusion:
\begin{align}
    B_s^{(3)} (\widehat f) = \ntrace \Theta_s (p_1 u_1^{(s)} \Sigma_1 + p_2 u_2^{(s)} \Sigma_2 + \Sigma_s) K^{-2}.
\end{align}

In the limit $p_s \to 1$ (i.e., $p_{s'} \to 0$), we observe that:
\begin{align}
\phi &\to \phi_s, \lambda \to \lambda_s, \\
B_s^{(1)} (\widehat f) &\to 0, \\
B_s^{(2)} (\widehat f) &\to 0, \\
B_s^{(3)} (\widehat f) &= \ntrace \Theta_s (u_s^{(s)} + 1) \Sigma_s K^{-2}, \\
v_s^{(s)} &= \phi_s \eta_s^2 \ntrace K^{-2} (v_s^{(s)} + \lambda_s) \Sigma_s^2 \\
&= \phi_s (v_s^{(s)} + \lambda_s) \ndof_2^{(s)} (\kappa_s) \\
u_s^{(s)} &= \frac{\phi_s \ndof_2^{(s)} (\kappa_s)}{1 - \phi_s \ndof_2^{(s)} (\kappa_s)}, \\
B_s^{(3)} (\widehat f) &= \frac{\kappa_s^2 \ntrace \Theta_s \Sigma_s (\Sigma_s + \kappa_s I_d)^{-2}}{1 - \phi_s \ndof_2^{(s)} (\kappa_s)}, \\
B_s (\widehat f) &\to B_s^{(3)} (\widehat f),
\end{align}
which matches up exactly with $B_s (\widehat f_s)$ as expected.
\end{proof}

\clearpage

\section{Proof of Theorem \ref{thm:odd-theory-rand-proj}}
\label{ref:rand-proj-proof}

\begin{proof}
The gradient of the loss $L$ is given by:
\begin{align*}
\nabla L(\eta) &= \sum_s S^\top X_s^\top (X_s S\eta-Y_s)/n + \lambda\eta =\sum_s S^\top M_s S\eta -\sum_s S^\top X_s^\top Y_s/n + \lambda \eta \\
&= H\eta-\sum_s S^\top X_s^\top Y_s/n,
\end{align*}
where $H=S^\top M S + \lambda I_m \in \mathbb R^{m \times m}$, with $M=M_1+M_2$ and $M_s = X_s^\top X_s/n$.
Thus, setting $R = H^{-1}$, we may write:
\begin{align*}
    \widehat w &= S\widehat\eta = S R S^\top (X_1^\top Y_1 + X_2^\top Y_2)/n\\
    &= SRS^\top (M_1 w^*_1+M_2 w^*_2) +  SRS^\top X_1^\top E_1/n + SR S^\top X_2^\top E_2/n.
\end{align*}

We deduce the following bias-variance decomposition:
\begin{align*}
    \mathbb E \|\widehat w-w^*_s\|_{\Sigma_s}^2 &= B_s (\widehat f) + V_s (\widehat f),\text{ where }\\
    V_s (\widehat f) &= V^{(1)}_s (\widehat f) + V^{(2)}_s (\widehat f),\text{ with }V^{(j)}_s (\widehat f) = \sigma_j^2 \phi \mathbb{E} \ntrace M_j S R S^\top \Sigma_s S R S^\top,\\
    B_s (\widehat f) &= \mathbb E \|SRS^\top (M_1 w^*_1+M_2 w^*_2)-w^*_s\|_{\Sigma_s}^2.
\end{align*}

We can further decompose $B_s (\widehat f)$, first considering the case $s = 1$. We define $\delta = w_2^* - w_1^*$.
\begin{align*}
&\mathbb E \|SRS^\top (M_1 w^*_1+M_2 w^*_2)-w^*_1\|_{\Sigma_1}^2 \\
&= \mathbb E\|(SRS^\top (M_1+M_2) - I_d) w^*_1 + SRS^\top M_2\delta\|_{\Sigma_1}^2\\
&= \mathbb E\|(SRS^\top M - I_d)w_1^*\|_{\Sigma_1}^2 + \mathbb E\|SRS^\top M_2 \delta\|_{\Sigma_1}^2\\
&=\mathbb{E} \ntrace \Theta (M S R S^\top - I_d) \Sigma_1 (SRS^\top M-I_d) + \mathbb{E} \ntrace\Delta M_2 S R S^\top \Sigma_1 S R S^\top M_2\\
&=\mathbb{E} \ntrace\Theta\Sigma_1+ \mathbb{E} 
 \ntrace\Theta M SRS^\top \Sigma_1 SRS^\top M -2\mathbb{E} \ntrace \Theta \Sigma_1 SRS^\top M + \mathbb{E} \ntrace\Delta M_2 S R S^\top \Sigma_1 S R S^\top M_2. 
\end{align*}

We can similarly decompose $B_2$:
\begin{align*}
&\mathbb E \|SRS^\top (M_1 w^*_1+M_2 w^*_2)-w^*_2\|_{\Sigma_2}^2 \\
&= \mathbb E \|SRS^\top (M_1 w^*_1+M_2 w^*_2)-w^*_2\|_{\Sigma_2}^2  \\
&= \mathbb E\|(SRS^\top (M_1 + M_2) - I_d) w^*_2 - SRS^\top M_1\delta\|_{\Sigma_2}^2\\
&= \mathbb E\|(SRS^\top M - I_d)w_2^*\|_{\Sigma_2}^2 + \mathbb E\|SRS^\top M_1 \delta\|_{\Sigma_2}^2 - 2 \mathbb{E} \trace (w_2^*)^\top (M SRS^\top - I_d) \Sigma_2 SRS^\top M_1 \delta  \\
&=\mathbb{E} \ntrace \Theta_2 (M SRS^\top -I_d)\Sigma_2 (S R S^\top M - I_d) + \mathbb{E} \ntrace\Delta M_1 S R S^\top \Sigma_2 S R S^\top M_1 \\
&- 2 \mathbb{E} \ntrace \Delta (M SRS^\top - I_d) \Sigma_2 SRS^\top M_1 \\
&=\mathbb{E} \ntrace\Theta_2\Sigma_2+ \mathbb{E} 
 \ntrace\Theta_2 M SRS^\top \Sigma_2 SRS^\top M -2\mathbb{E} \ntrace \Theta_2 \Sigma_2 SRS^\top M \\
 &+ \mathbb{E} \ntrace\Delta M_1 S R S^\top \Sigma_2 S R S^\top M_1 - 2 \mathbb{E} \ntrace \Delta M SRS^\top \Sigma_2 SRS^\top M_1 + 2 \mathbb{E} \ntrace \Delta \Sigma_2 SRS^\top M_1.
\end{align*}

Furthermore, we observe that:
\begin{align}
&\mathbb{E} \ntrace AMSRS^\top BSRS^\top M \\
&= \mathbb{E} \ntrace AM_1SRS^\top BSRS^\top M_1 + \mathbb{E} \ntrace AM_2SRS^\top BSRS^\top M_2 + 2 \mathbb{E} \ntrace AM_1SRS^\top BSRS^\top M_2,\\
&\mathbb{E} \ntrace A SRS^\top M = \mathbb{E} \ntrace A SRS^\top M_1 + \mathbb{E} \ntrace A SRS^\top M_2.
\end{align}

Hence, we desire deterministic equivalents for the following expressions:
\begin{align}
    r_j^{(1)} (A) &= A S \overline R S^\top \overline M_j, \\
    r_{j}^{(2)} (A, B) &= A \overline M_j S \overline R S^\top B S \overline R S^\top, \\
    r_{j}^{(3)} (A, B) &= A \overline M_j S \overline R S^\top B S \overline R S^\top \overline M_j, \\
    r_{j}^{(4)} (A, B) &= A \overline M_j S \overline R S^\top B S \overline R S^\top \overline M_{j'},
\end{align}
where:
\begin{gather}
    \overline M_j = \Sigma_j^{1/2} Z^\top_j Z_j \Sigma_j^{1/2}, \overline R = (S^\top \overline M S + I_m)^{-1}, \overline M = \overline M_1 + \overline M_2, \\
    \overline M_j = M_j / \lambda, \overline R = \lambda R, \overline M = M / \lambda.
\end{gather}

In summary:
\begin{align}
    V_s^{(j)} (\widehat f) &= \sigma_j^2 \phi \lambda^{-1} 
 \mathbb{E} \ntrace r_j^{(2)} (I_d, \Sigma_s), \\
    B_s (\widehat f) &= \ntrace \Theta_s \Sigma_s \\
    &+ \mathbb{E} \ntrace r_1^{(3)} (\Theta_s, \Sigma_s) + \mathbb{E} \ntrace r_2^{(3)} (\Theta_s, \Sigma_s) + 2 \mathbb{E} \ntrace r_1^{(4)} (\Theta_s, \Sigma_s) \\
    &- 2 \mathbb{E} \ntrace r_1^{(1)} (\Theta_s \Sigma_s) - 2 \mathbb{E} \ntrace r_2^{(1)} (\Theta_s \Sigma_s) \\
    &+ \mathbb{E} \ntrace r_{s'}^{(3)} (\Delta, \Sigma_s) \\
    &- 2 \begin{cases}
    0, & s = 1, \\
    \mathbb{E} \ntrace r_1^{(3)} (\Delta, \Sigma_2) + \mathbb{E} \ntrace r_2^{(4)} (\Delta, \Sigma_2) - \mathbb{E} \ntrace r_1^{(1)} (\Delta \Sigma_2), & s = 2
    \end{cases}.
\end{align}

\subsection{Computing \texorpdfstring{$\mathbb{E} \ntrace r_j^{(1)}$}{}}

WLOG, we focus on $r_1^{(1)}$. The matrix of interest has a linear pencil representation given by (with zero-based indexing):
\begin{align}
    r_1^{(1)} = Q_{1, 10}^{-1},
\end{align}
where the linear pencil $Q$ is defined as follows:
\begin{equation}
Q = 
\resizebox{0.8\linewidth}{!}{%
$\left(\begin{array}{ccccccccccc}
I_{d} & 0 & -S & 0 & 0 & 0 & 0 & 0 & 0 & 0 & 0 \\
-A & I_{d} & 0 & 0 & 0 & 0 & 0 & 0 & 0 & 0 & 0 \\
0 & 0 & I_{m} & S^\top & 0 & 0 & 0 & 0 & 0 & 0 & 0 \\
0 & 0 & 0 & I_{d} & -\Sigma_{1}^{\frac{1}{2}} & 0 & 0 & -\Sigma_{2}^{\frac{1}{2}} & 0 & 0 & 0 \\
0 & 0 & 0 & 0 & I_{d} & -\frac{1}{\sqrt{\lambda}} Z_{1}^\top & 0 & 0 & 0 & 0 & 0 \\
0 & 0 & 0 & 0 & 0 & I_{n_1} & -\frac{1}{\sqrt{\lambda}} Z_{1} & 0 & 0 & 0 & 0 \\
-\Sigma_{1}^{\frac{1}{2}} & 0 & 0 & 0 & 0 & 0 & I_{d} & 0 & 0 & 0 & \Sigma_{1}^{\frac{1}{2}} \\
0 & 0 & 0 & 0 & 0 & 0 & 0 & I_{d} & -\frac{1}{\sqrt{\lambda}} Z_{2}^\top & 0 & 0 \\
0 & 0 & 0 & 0 & 0 & 0 & 0 & 0 & I_{n_2} & -\frac{1}{\sqrt{\lambda}} Z_{2} & 0 \\
-\Sigma_{2}^{\frac{1}{2}} & 0 & 0 & 0 & 0 & 0 & 0 & 0 & 0 & I_{d} & 0 \\
0 & 0 & 0 & 0 & 0 & 0 & 0 & 0 & 0 & 0 & I_{d} \\
\end{array}\right)$}.
\end{equation}

Using the tools of OVFPT, the following holds:
\begin{align}
\mathbb{E} \ntrace r_1^{(1)} = G_{1, 21},
\end{align} 
with:
\begin{align}
    G_{1, 21} = \ntrace \gamma p_1 \Sigma_1 A G_{2, 13} G_{5, 16} (\gamma G_{2, 13} (p_1 \Sigma_1 G_{5, 16} + p_2 G_{8, 19}) + \lambda I_d)^{-1}.
\end{align}
For $G_{5, 16}$ and $G_{8, 19}$, we observe that:
\begin{align}
&G_{5, 16} = \frac{-\lambda}{-\lambda + \phi G_{6, 15}}, \quad G_{6, 15} = -\lambda \gamma G_{2, 13} \ntrace \Sigma_1 (\gamma G_{2, 13} (p_1 \Sigma_1 G_{5, 16} + p_2 \Sigma_2 G_{8, 19}) + \lambda I_d)^{-1}, \\
&\implies G_{5, 16} = \frac{1}{1 + \psi G_{2, 13} \ntrace \Sigma_1 (\gamma G_{2, 13} (p_1 \Sigma_1 G_{5, 16} + p_2 \Sigma_2 G_{8, 19}) + \lambda I_d)^{-1}}, \\
&G_{8, 19} = \frac{-\lambda}{-\lambda + \phi G_{9, 18}}, \quad G_{9, 18} = -\lambda \gamma G_{2, 13} \ntrace \Sigma_2 (\gamma G_{2, 13} (p_1 \Sigma_1 G_{5, 16} + p_2 \Sigma_2 G_{8, 19}) + \lambda I_d)^{-1}, \\
&G_{8, 19} = \frac{1}{1 + \psi G_{2, 13} \ntrace \Sigma_2 (\gamma G_{2, 13} (p_1 \Sigma_1 G_{5, 16} + p_2 \Sigma_2 G_{8, 19}) + \lambda I_d)^{-1}}.
\end{align}

We define $e_1 = G_{5, 16}, e_2 = G_{8, 19}$, with $e_1 \geq 0, e_2 \geq 0$. We further observe that:
\begin{align}
G_{2, 13} &= \frac{1}{1 + G_{3, 11}}, \\
G_{3, 11} &= \ntrace (p_1 \Sigma_1 G_{5, 16} + p_2 \Sigma_2 G_{8, 19}) (\gamma G_{2, 13} (p_1 \Sigma_1 G_{5, 16} + p_2 \Sigma_2 G_{8, 19}) + \lambda I_d)^{-1}.
\end{align}

We define $\tau = G_{2, 13} \geq 0$. We further define $L = p_1 e_1 \Sigma_1 + p_2 e_2 \Sigma_2, K = \gamma \tau L + \lambda I_d$. Therefore, we have the following system of equations:
\begin{align}
e_s &= \frac{1}{1 + \psi \tau \ntrace \Sigma_s K^{-1}}, \tau = \frac{1}{1 + \ntrace L K^{-1}}.
\end{align}
In conclusion:
\begin{align}
    \mathbb{E} \ntrace r_j^{(1)} = p_j \gamma e_j \tau \ntrace A \Sigma_j K^{-1}. 
\end{align}

\subsection{Computing \texorpdfstring{$\mathbb{E} \ntrace r_j^{(2)}$}{}}

WLOG, we focus on $r_1^{(2)}$. The matrix of interest has a linear pencil representation given by (with zero-based indexing):
\begin{align}
    r_1^{(2)} = -Q^{-1}_{1, 13},
\end{align}
where the linear pencil $Q$ is defined as follows:
\begin{equation}
\rotatebox{270}{%
$Q = $
\resizebox{\linewidth}{!}{%
$\left(\begin{array}{cccccccccccccccccccc}
I_{d} & 0 & -\Sigma_{1}^{\frac{1}{2}} & 0 & 0 & 0 & 0 & 0 & 0 & 0 & 0 & 0 & 0 & 0 & 0 & 0 & 0 & 0 & 0 & 0 \\
-A & I_{d} & 0 & 0 & 0 & 0 & 0 & 0 & 0 & 0 & 0 & 0 & 0 & 0 & 0 & 0 & 0 & 0 & 0 & 0 \\
0 & 0 & I_{d} & -\frac{1}{\sqrt{\lambda}} Z_{1}^\top & 0 & 0 & 0 & 0 & 0 & 0 & 0 & 0 & 0 & 0 & 0 & 0 & 0 & 0 & 0 & 0 \\
0 & 0 & 0 & I_{n_1} & -\frac{1}{\sqrt{\lambda}} Z_{1} & 0 & 0 & 0 & 0 & 0 & 0 & 0 & 0 & 0 & 0 & 0 & 0 & 0 & 0 & 0 \\
0 & 0 & 0 & 0 & I_{d} & -\Sigma_{1}^{\frac{1}{2}} & 0 & 0 & 0 & 0 & 0 & 0 & 0 & 0 & 0 & 0 & 0 & 0 & 0 & 0 \\
0 & 0 & 0 & 0 & 0 & I_{d} & -S & 0 & 0 & 0 & 0 & 0 & 0 & 0 & 0 & 0 & 0 & 0 & 0 & 0 \\
0 & 0 & 0 & 0 & 0 & 0 & I_{m} & S^\top & 0 & 0 & 0 & 0 & 0 & 0 & 0 & 0 & 0 & 0 & 0 & 0 \\
0 & 0 & -\Sigma_{1}^{\frac{1}{2}} & 0 & 0 & 0 & 0 & I_{d} & -\Sigma_{2}^{\frac{1}{2}} & 0 & 0 & B & 0 & 0 & 0 & 0 & 0 & 0 & 0 & 0 \\
0 & 0 & 0 & 0 & 0 & 0 & 0 & 0 & I_{d} & -\frac{1}{\sqrt{\lambda}} Z_{2}^\top & 0 & 0 & 0 & 0 & 0 & 0 & 0 & 0 & 0 & 0 \\
0 & 0 & 0 & 0 & 0 & 0 & 0 & 0 & 0 & I_{n_2} & -\frac{1}{\sqrt{\lambda}} Z_{2} & 0 & 0 & 0 & 0 & 0 & 0 & 0 & 0 & 0 \\
0 & 0 & 0 & 0 & 0 & -\Sigma_{2}^{\frac{1}{2}} & 0 & 0 & 0 & 0 & I_{d} & 0 & 0 & 0 & 0 & 0 & 0 & 0 & 0 & 0 \\
0 & 0 & 0 & 0 & 0 & 0 & 0 & 0 & 0 & 0 & 0 & I_{d} & -S & 0 & 0 & 0 & 0 & 0 & 0 & 0 \\
0 & 0 & 0 & 0 & 0 & 0 & 0 & 0 & 0 & 0 & 0 & 0 & I_{m} & S^\top & 0 & 0 & 0 & 0 & 0 & 0 \\
0 & 0 & 0 & 0 & 0 & 0 & 0 & 0 & 0 & 0 & 0 & 0 & 0 & I_{d} & -\Sigma_{1}^{\frac{1}{2}} & 0 & 0 & -\Sigma_{2}^{\frac{1}{2}} & 0 & 0 \\
0 & 0 & 0 & 0 & 0 & 0 & 0 & 0 & 0 & 0 & 0 & 0 & 0 & 0 & I_{d} & -\frac{1}{\sqrt{\lambda}} Z_{1}^\top & 0 & 0 & 0 & 0 \\
0 & 0 & 0 & 0 & 0 & 0 & 0 & 0 & 0 & 0 & 0 & 0 & 0 & 0 & 0 & I_{n_1} & -\frac{1}{\sqrt{\lambda}} Z_{1} & 0 & 0 & 0 \\
0 & 0 & 0 & 0 & 0 & 0 & 0 & 0 & 0 & 0 & 0 & -\Sigma_{1}^{\frac{1}{2}} & 0 & 0 & 0 & 0 & I_{d} & 0 & 0 & 0 \\
0 & 0 & 0 & 0 & 0 & 0 & 0 & 0 & 0 & 0 & 0 & 0 & 0 & 0 & 0 & 0 & 0 & I_{d} & -\frac{1}{\sqrt{\lambda}} Z_{2}^\top & 0 \\
0 & 0 & 0 & 0 & 0 & 0 & 0 & 0 & 0 & 0 & 0 & 0 & 0 & 0 & 0 & 0 & 0 & 0 & I_{n_2} & -\frac{1}{\sqrt{\lambda}} Z_{2} \\
0 & 0 & 0 & 0 & 0 & 0 & 0 & 0 & 0 & 0 & 0 & -\Sigma_{2}^{\frac{1}{2}} & 0 & 0 & 0 & 0 & 0 & 0 & 0 & I_{d} \\
\end{array}\right)$}}.
\end{equation}

The following holds:
\begin{align}
    \mathbb{E} \ntrace r_1^{(2)} = -G_{1, 33},
\end{align}
with:
\begin{align}
    G_{1, 33} &= -p_1 \ntrace A \Sigma_1 P_1 P^{-1}_2, \\
    P_1 &= \gamma \lambda B G_{3, 23} G_{6, 26} G_{12, 32} - \gamma p_2 \Sigma_2 G_{3, 23} G_{6, 26} G_{9, 38} G_{12, 32} \\
    &+ \gamma p_2 \Sigma_2 G_{3, 35} G_{6, 26} G_{9, 29} G_{12, 32} + \lambda G_{3, 23} G_{6, 32} I_d + \lambda G_{3, 15} G_{12, 32} I_d, \\
    P_2 &= (\gamma G_{6, 26} (p_1 \Sigma_1 G_{3, 23} + \gamma p_2 \Sigma_2 G_{9, 29}) + \lambda I_d) \\
    &\cdot (\gamma G_{12, 32} (p_1 \Sigma_1 G_{15, 35} + p_2 \Sigma_2 G_{18, 38}) + \lambda I_d).
\end{align}

Following similar steps as before and recognizing identifications, we arrive at that:
\begin{align}
    e_1 &= G_{3, 23} = G_{15, 35}, \\
    e_2 &= G_{9, 29} = G_{18, 38}, \\
    \tau &= G_{6, 26} = G_{12, 32}.
\end{align}

We now focus on the remaining terms. We observe that:
\begin{align}
    G_{3, 35} &= \phi e_1^2 \frac{G_{4, 14}}{\lambda} \\
    \frac{G_{4, 14}}{\lambda} &= \gamma \ntrace \Sigma_1 (\gamma \tau^2 (p_1 \Sigma_1 G_{3, 35} + p_2 \Sigma_2 G_{9, 38} - \lambda B) - \lambda G_{6, 32} I_d) K^{-2}, \\
    G_{9, 38} &= \phi e_2^2 \frac{G_{10, 37}}{\lambda}, \\
    \frac{G_{10, 37}}{\lambda} &= \gamma \ntrace \Sigma_2 (\gamma \tau^2 (p_1 \Sigma_1 G_{3, 35} + p_2 \Sigma_2 G_{9, 38} - \lambda B) - \lambda G_{6, 32} I_d) K^{-2}.
\end{align}

We define $u_1 = -\frac{G_{3, 35}}{\lambda}, u_2 = -\frac{G_{9, 38}}{\lambda}$, with $u_1 \leq 0, u_2 \leq 0$. We further define $D = p_1 u_1 \Sigma_1 + p_2 u_2 \Sigma_2 + B$. We now observe that:
\begin{align}
G_{6, 32} &= -\frac{G_{7, 31}}{(G_{7, 25} + 1)(G_{13, 31} + 1)} = -\tau^2 G_{7, 31}, \\
G_{7, 31} &= -\ntrace (\gamma G_{6, 32} L^2 + \lambda^2 D) K^{-2}.
\end{align}

Defining $\rho = G_{6, 32}$, we must have the following system of equations:
\begin{align}
    u_s &= \psi e_s^2 \ntrace \Sigma_s (\gamma \tau^2 D + \rho I_d) K^{-2}, \\
    \rho &= \tau^2 \ntrace (\gamma \rho L^2 + \lambda^2 D) K^{-2}.
\end{align}

In conclusion:
\begin{align}
    P_2 &= K^2, \\
    -P_1 &= \lambda \gamma e_1 \tau^2 B + \lambda \gamma \tau^2 p_2 \Sigma_2 (e_1 u_2 - e_2 u_1) + \lambda e_1 \rho I_d - \lambda^2 u_1 \tau I_d, \\
    \mathbb{E} \ntrace r_j^{(2)} &= \lambda p_j \gamma \ntrace A \Sigma_j (\gamma e_j \tau^2 B + \gamma \tau^2 p_{j'} \Sigma_{j'} (e_j u_{j'} - e_{j'} u_j) + e_j \rho I_d - \lambda u_j \tau I_d) K^{-2}.
\end{align}

\subsection{Computing \texorpdfstring{$\mathbb{E} \ntrace 
 r_j^{(3)}$}{}}

WLOG, we focus on $r_1^{(3)}$. The matrix of interest has a linear pencil representation given by (with zero-based indexing):
\begin{align}
    r_1^{(3)} = Q^{-1}_{1, 20},
\end{align}
where the linear pencil $Q$ is defined as follows:
\begin{equation}
\rotatebox{270}{%
$Q = $
\resizebox{\linewidth}{!}{%
$\left(\begin{array}{ccccccccccccccccccccc}
I_{d} & 0 & -\Sigma_{1}^{\frac{1}{2}} & 0 & 0 & 0 & 0 & 0 & 0 & 0 & 0 & 0 & 0 & 0 & 0 & 0 & 0 & 0 & 0 & 0 & 0 \\
-A & I_{d} & 0 & 0 & 0 & 0 & 0 & 0 & 0 & 0 & 0 & 0 & 0 & 0 & 0 & 0 & 0 & 0 & 0 & 0 & 0 \\
0 & 0 & I_{d} & -\frac{1}{\sqrt{\lambda}} Z_{1}^\top & 0 & 0 & 0 & 0 & 0 & 0 & 0 & 0 & 0 & 0 & 0 & 0 & 0 & 0 & 0 & 0 & 0 \\
0 & 0 & 0 & I_{n_1} & -\frac{1}{\sqrt{\lambda}} Z_{1} & 0 & 0 & 0 & 0 & 0 & 0 & 0 & 0 & 0 & 0 & 0 & 0 & 0 & 0 & 0 & 0 \\
0 & 0 & 0 & 0 & I_{d} & -\Sigma_{1}^{\frac{1}{2}} & 0 & 0 & 0 & 0 & 0 & 0 & 0 & 0 & 0 & 0 & 0 & 0 & 0 & 0 & 0 \\
0 & 0 & 0 & 0 & 0 & I_{d} & -S & 0 & 0 & 0 & 0 & 0 & 0 & 0 & 0 & 0 & 0 & 0 & 0 & 0 & 0 \\
0 & 0 & 0 & 0 & 0 & 0 & I_{m} & S^\top & 0 & 0 & 0 & 0 & 0 & 0 & 0 & 0 & 0 & 0 & 0 & 0 & 0 \\
0 & 0 & -\Sigma_{1}^{\frac{1}{2}} & 0 & 0 & 0 & 0 & I_{d} & -\Sigma_{2}^{\frac{1}{2}} & 0 & 0 & B & 0 & 0 & 0 & 0 & 0 & 0 & 0 & 0 & 0 \\
0 & 0 & 0 & 0 & 0 & 0 & 0 & 0 & I_{d} & -\frac{1}{\sqrt{\lambda}} Z_{2}^\top & 0 & 0 & 0 & 0 & 0 & 0 & 0 & 0 & 0 & 0 & 0 \\
0 & 0 & 0 & 0 & 0 & 0 & 0 & 0 & 0 & I_{n_2} & -\frac{1}{\sqrt{\lambda}} Z_{2} & 0 & 0 & 0 & 0 & 0 & 0 & 0 & 0 & 0 & 0 \\
0 & 0 & 0 & 0 & 0 & -\Sigma_{2}^{\frac{1}{2}} & 0 & 0 & 0 & 0 & I_{d} & 0 & 0 & 0 & 0 & 0 & 0 & 0 & 0 & 0 & 0 \\
0 & 0 & 0 & 0 & 0 & 0 & 0 & 0 & 0 & 0 & 0 & I_{d} & -S & 0 & 0 & 0 & 0 & 0 & 0 & 0 & 0 \\
0 & 0 & 0 & 0 & 0 & 0 & 0 & 0 & 0 & 0 & 0 & 0 & I_{m} & S^\top & 0 & 0 & 0 & 0 & 0 & 0 & 0 \\
0 & 0 & 0 & 0 & 0 & 0 & 0 & 0 & 0 & 0 & 0 & 0 & 0 & I_{d} & -\Sigma_{1}^{\frac{1}{2}} & 0 & 0 & -\Sigma_{2}^{\frac{1}{2}} & 0 & 0 & 0 \\
0 & 0 & 0 & 0 & 0 & 0 & 0 & 0 & 0 & 0 & 0 & 0 & 0 & 0 & I_{d} & -\frac{1}{\sqrt{\lambda}} Z_{1}^\top & 0 & 0 & 0 & 0 & 0 \\
0 & 0 & 0 & 0 & 0 & 0 & 0 & 0 & 0 & 0 & 0 & 0 & 0 & 0 & 0 & I_{n_1} & -\frac{1}{\sqrt{\lambda}} Z_{1} & 0 & 0 & 0 & 0 \\
0 & 0 & 0 & 0 & 0 & 0 & 0 & 0 & 0 & 0 & 0 & -\Sigma_{1}^{\frac{1}{2}} & 0 & 0 & 0 & 0 & I_{d} & 0 & 0 & 0 & \Sigma_{1}^{\frac{1}{2}} \\
0 & 0 & 0 & 0 & 0 & 0 & 0 & 0 & 0 & 0 & 0 & 0 & 0 & 0 & 0 & 0 & 0 & I_{d} & -\frac{1}{\sqrt{\lambda}} Z_{2}^\top & 0 & 0 \\
0 & 0 & 0 & 0 & 0 & 0 & 0 & 0 & 0 & 0 & 0 & 0 & 0 & 0 & 0 & 0 & 0 & 0 & I_{n_2} & -\frac{1}{\sqrt{\lambda}} Z_{2} & 0 \\
0 & 0 & 0 & 0 & 0 & 0 & 0 & 0 & 0 & 0 & 0 & -\Sigma_{2}^{\frac{1}{2}} & 0 & 0 & 0 & 0 & 0 & 0 & 0 & I_{d} & 0 \\
0 & 0 & 0 & 0 & 0 & 0 & 0 & 0 & 0 & 0 & 0 & 0 & 0 & 0 & 0 & 0 & 0 & 0 & 0 & 0 & I_{d} \\
\end{array}\right)$}}.
\end{equation}

It holds that $\mathbb{E} \ntrace r_1^{(3)} = G_{1, 41}$. We immediately observe that:
\begin{align}
    e_1 &= G_{3, 24}, G_{15, 36}, \\
    e_2 &= G_{9, 30}, G_{18, 39}, \\
    \tau &= G_{6, 27}, G_{12, 33}, \\
    u_1 &= -\frac{G_{3, 36}}{\lambda}, \\
    u_2 &= -\frac{G_{9, 39}}{\lambda}, \\
    \rho &= G_{6, 33}.
\end{align}

In conclusion:
\begin{align}
    \mathbb{E} \ntrace r_j^{(3)} = p_j \ntrace A \Sigma_j (\gamma e_j^2 p_j \Sigma_j (\gamma \tau^2 u_{j'} p_{j'} \Sigma_{j'} + \gamma \tau^2 B + \rho I_d) + u_j (\gamma e_{j'} \tau p_{j'} \Sigma_{j'} + \lambda I_d)^2) K^{-2}.
\end{align}

\subsection{Computing \texorpdfstring{$\mathbb{E} \ntrace r_j^{(4)}$}{}}

WLOG, we focus on $r_1^{(4)}$. The matrix of interest has a linear pencil representation given by (with zero-based indexing):
\begin{align}
    r_1^{(4)} = Q^{-1}_{1, 20},
\end{align}
where the linear pencil $Q$ is defined as follows:
\begin{equation}
\rotatebox{270}{%
$Q = $
\resizebox{\linewidth}{!}{%
$\left(\begin{array}{ccccccccccccccccccccc}
I_{d} & 0 & -\Sigma_{1}^{\frac{1}{2}} & 0 & 0 & 0 & 0 & 0 & 0 & 0 & 0 & 0 & 0 & 0 & 0 & 0 & 0 & 0 & 0 & 0 & 0 \\
-A & I_{d} & 0 & 0 & 0 & 0 & 0 & 0 & 0 & 0 & 0 & 0 & 0 & 0 & 0 & 0 & 0 & 0 & 0 & 0 & 0 \\
0 & 0 & I_{d} & -\frac{1}{\sqrt{\lambda}} Z_{1}^\top & 0 & 0 & 0 & 0 & 0 & 0 & 0 & 0 & 0 & 0 & 0 & 0 & 0 & 0 & 0 & 0 & 0 \\
0 & 0 & 0 & I_{n_1} & -\frac{1}{\sqrt{\lambda}} Z_{1} & 0 & 0 & 0 & 0 & 0 & 0 & 0 & 0 & 0 & 0 & 0 & 0 & 0 & 0 & 0 & 0 \\
0 & 0 & 0 & 0 & I_{d} & -\Sigma_{1}^{\frac{1}{2}} & 0 & 0 & 0 & 0 & 0 & 0 & 0 & 0 & 0 & 0 & 0 & 0 & 0 & 0 & 0 \\
0 & 0 & 0 & 0 & 0 & I_{d} & -S & 0 & 0 & 0 & 0 & 0 & 0 & 0 & 0 & 0 & 0 & 0 & 0 & 0 & 0 \\
0 & 0 & 0 & 0 & 0 & 0 & I_{m} & S^\top & 0 & 0 & 0 & 0 & 0 & 0 & 0 & 0 & 0 & 0 & 0 & 0 & 0 \\
0 & 0 & -\Sigma_{1}^{\frac{1}{2}} & 0 & 0 & 0 & 0 & I_{d} & -\Sigma_{2}^{\frac{1}{2}} & 0 & 0 & B & 0 & 0 & 0 & 0 & 0 & 0 & 0 & 0 & 0 \\
0 & 0 & 0 & 0 & 0 & 0 & 0 & 0 & I_{d} & -\frac{1}{\sqrt{\lambda}} Z_{2}^\top & 0 & 0 & 0 & 0 & 0 & 0 & 0 & 0 & 0 & 0 & 0 \\
0 & 0 & 0 & 0 & 0 & 0 & 0 & 0 & 0 & I_{n_2} & -\frac{1}{\sqrt{\lambda}} Z_{2} & 0 & 0 & 0 & 0 & 0 & 0 & 0 & 0 & 0 & 0 \\
0 & 0 & 0 & 0 & 0 & -\Sigma_{2}^{\frac{1}{2}} & 0 & 0 & 0 & 0 & I_{d} & 0 & 0 & 0 & 0 & 0 & 0 & 0 & 0 & 0 & 0 \\
0 & 0 & 0 & 0 & 0 & 0 & 0 & 0 & 0 & 0 & 0 & I_{d} & -S & 0 & 0 & 0 & 0 & 0 & 0 & 0 & 0 \\
0 & 0 & 0 & 0 & 0 & 0 & 0 & 0 & 0 & 0 & 0 & 0 & I_{m} & S^\top & 0 & 0 & 0 & 0 & 0 & 0 & 0 \\
0 & 0 & 0 & 0 & 0 & 0 & 0 & 0 & 0 & 0 & 0 & 0 & 0 & I_{d} & -\Sigma_{1}^{\frac{1}{2}} & 0 & 0 & -\Sigma_{2}^{\frac{1}{2}} & 0 & 0 & 0 \\
0 & 0 & 0 & 0 & 0 & 0 & 0 & 0 & 0 & 0 & 0 & 0 & 0 & 0 & I_{d} & -\frac{1}{\sqrt{\lambda}} Z_{1}^\top & 0 & 0 & 0 & 0 & 0 \\
0 & 0 & 0 & 0 & 0 & 0 & 0 & 0 & 0 & 0 & 0 & 0 & 0 & 0 & 0 & I_{n_1} & -\frac{1}{\sqrt{\lambda}} Z_{1} & 0 & 0 & 0 & 0 \\
0 & 0 & 0 & 0 & 0 & 0 & 0 & 0 & 0 & 0 & 0 & -\Sigma_{1}^{\frac{1}{2}} & 0 & 0 & 0 & 0 & I_{d} & 0 & 0 & 0 & 0 \\
0 & 0 & 0 & 0 & 0 & 0 & 0 & 0 & 0 & 0 & 0 & 0 & 0 & 0 & 0 & 0 & 0 & I_{d} & -\frac{1}{\sqrt{\lambda}} Z_{2}^\top & 0 & 0 \\
0 & 0 & 0 & 0 & 0 & 0 & 0 & 0 & 0 & 0 & 0 & 0 & 0 & 0 & 0 & 0 & 0 & 0 & I_{n_2} & -\frac{1}{\sqrt{\lambda}} Z_{2} & 0 \\
0 & 0 & 0 & 0 & 0 & 0 & 0 & 0 & 0 & 0 & 0 & -\Sigma_{2}^{\frac{1}{2}} & 0 & 0 & 0 & 0 & 0 & 0 & 0 & I_{d} & \Sigma_{2}^{\frac{1}{2}} \\
0 & 0 & 0 & 0 & 0 & 0 & 0 & 0 & 0 & 0 & 0 & 0 & 0 & 0 & 0 & 0 & 0 & 0 & 0 & 0 & I_{d} \\
\end{array}\right)$}.}
\end{equation}

It holds that $\mathbb{E} \ntrace r_1^{(4)} = G_{1, 41}$. We immediately observe that:
\begin{align}
    e_1 &= G_{3, 24}, G_{15, 36}, \\
    e_2 &= G_{9, 30}, G_{18, 39}, \\
    \tau &= G_{6, 27}, G_{12, 33}, \\
    u_1 &= -\frac{G_{3, 36}}{\lambda}, \\
    u_2 &= -\frac{G_{9, 39}}{\lambda}, \\
    \rho &= G_{6, 33}.
\end{align}

In conclusion:
\begin{align}
    \mathbb{E} \ntrace r_j^{(4)} &= p_j \gamma p_{j'} \ntrace \Sigma_j \Sigma_{j'} A (\gamma \tau^2 (B e_j e_{j'} - p_j \Sigma_j e_j^2 u_{j'} - p_{j'} \Sigma_{j'} e_{j'}^2 u_j) \\
    &- \lambda \tau (e_j u_{j'} + e_{j'} u_j) I_d + e_j e_{j'} \rho I_d) K^{-2}.
\end{align}
\end{proof}

\clearpage

\section{Theorem \ref{thm:edd-theory-rand-proj}}
\label{sec:edd-random-proj-proof}

\begin{definition}
\label{def:cosmo-constants-edd}
Let $(e_1,e_2,\tau_1,\tau_2,u_1,u_2,\rho_1,\rho_2)$ is be unique positive solution to the following system of fixed-point equations:
\begin{gather}
    e_s = \frac{1}{1 + \psi_s \tau_s \ntrace \Sigma_s (\gamma \tau_s e_s \Sigma_s + \lambda_s I_d)^{-1}},\text{ for }s\in\{1,2\} \\
    \tau_s = \frac{1}{1 + \ntrace e_s \Sigma_s (\gamma \tau_s e_s \Sigma_s + \lambda_s I_d)^{-1}},\text{ for }s\in\{1,2\} \\
    u_s = \psi_s e_s^2 \ntrace \Sigma_s (\gamma \tau_s^2 (u_s + 1) \Sigma_s + \rho_s I_d) (\gamma \tau_s e_s \Sigma_s + \lambda_s I_d)^{-2},\text{ for }s\in\{1,2\} \\
    \rho_s = \tau_s^2 \ntrace (\gamma \rho_s (e_s \Sigma_s)^2 + \lambda_s^2 (u_s + 1) \Sigma_s) (\gamma \tau_s e_s \Sigma_s + \lambda_s I_d)^{-2},\text{ for }s\in\{1,2\}.
\end{gather}

For deterministic $d \times d$ PSD matrices $A$ and $B$, we define the following auxiliary quantities:
\begin{align}
    h_j^{(1)} (A) &:= \gamma e_j \tau_j \ntrace A \Sigma_j (\gamma \tau_j e_j \Sigma_j + \lambda_j I_d)^{-1}, \\
    h_j^{(2)} (A) &:= \gamma \ntrace A \Sigma_j (\gamma e_j \tau_j^2 \Sigma_j + e_j \rho_j I_d - \lambda_j u_j \tau_j I_d) (\gamma \tau_j e_j \Sigma_j + \lambda_j I_d)^{-2}, \\
    h_j^{(3)} (A) &:= \ntrace A \Sigma_j (\gamma e_j^2 \Sigma_j (\gamma \tau_j^2 \Sigma_j + \rho_j I_d) + \lambda_j^2 u_j I_d) (\gamma \tau_j e_j \Sigma_j + \lambda_j I_d)^{-2}.
\end{align}
\end{definition}

Under Assumptions \ref{ass:commute} and \ref{ass:scaling-random-proj}, it holds that:
    \begin{align}
        R_s (\widehat f_s) &\simeq B_s (\widehat f_s) + V_s (\widehat f_s),
        \text{ with }    V_s (\widehat f_s) = \lim_{p_s \to 1} V_s (\widehat f),\quad  B_s (\widehat f_s) = \lim_{p_s \to 1} B_s (\widehat f).
    \end{align}
More explicitly:
\begin{align}
    V_s (\widehat f_s) &= \sigma_s^2 \phi_s h_s^{(2)} (I_d), \quad
    B_s (\widehat f_s) = \ntrace \Theta_s \Sigma_s + h_s^{(3)} (\Theta_s) - 2 h_s^{(1)} (\Theta_s \Sigma_s).
    \end{align}

\begin{proof}
Theorem \ref{thm:edd-theory-rand-proj} follows from Theorem \ref{thm:odd-theory-rand-proj} in the limit $p_s \to 1$ (i.e., $p_{s'} \to 0$).
\end{proof}

The scalars $e_s, \tau_s, u_s, \rho_s$ can be intuitively interpreted in the setting where a separate model is learned for each group. For ridge regression with random projections and $\lambda \to 0^+$, we show in Equations \ref{eq:e-lin} and \ref{eq:tau-lin} that $e_s, \tau_s$ are related to the normalized first-order degrees of freedom $I_{1, 1}$ of the population covariance matrix $\Sigma_s$. $e_s$ captures the effect of the feature rate $\phi_s$ while $\tau$ captures the effect of the parameterization rate $\gamma$. Similarly, for classical ridge regression, we show in Equation \ref{eq:kap-lin} that $e_s$ is related to the normalized first-order degrees of freedom $\ndof_1^{(s)}$. On the other hand, $u_s$ and $\rho_s$ can be understood as pseudo-variances. Indeed, for ridge regression with random projections and $\lambda \to 0^+$, Equations \ref{eq:u-rho-lin-1} and \ref{eq:u-rho-lin-2} show that $u_s, \rho_s, V_s$ are all related to the normalized second-order degrees of freedom $I_{2, 2}$ of $\Sigma_s$.

\clearpage

\section{Solving Fixed-Point Equations for Theorem \ref{thm:odd-theory}}
\label{sec:fixed-point-complexity}

\subsection{Proportional Covariance Matrices}

When $\lambda \to 0^+$, it is not possible to analytically solve the fixed-point equations for the constants in Definition \ref{def:cosmo-constants} for general $\Sigma_1, \Sigma_2$. As such, we consider a more tractable case where the covariance matrices are proportional, i.e., $\Sigma_1 = a_1 \Sigma$ and $\Sigma_2 = a_2 \Sigma$, for some $\Sigma \in \mathbb R^{d \times d}$.

We define $\theta = \frac{\lambda}{\gamma \tau (a_1 p_1 e_1 + a_2 p_2 e_2)}$ and $\eta = \ntrace \Sigma (\Sigma + \theta I_d)^{-1}$. Then, we have that:
\begin{align}
    1/e_s = e_s' &= 1 + \psi \tau \ntrace \Sigma_s K^{-1} = 1 + \frac{\phi a_s \eta}{a_1 p_1 e_1 + a_2 p_2 e_2}, \\
    1/\tau = \tau' &= 1+\ntrace L K^{-1} = 1+(\eta/\gamma) \tau' = \frac{1}{1 - \eta / \gamma}.
\end{align}

If $\theta_0 = 0$, then $\eta_0 = 1$. Therefore, $e_s' \to 1 + \frac{\phi a_s}{a_1 p_1 e_1 + a_2 p_2 e_2}$, which is a quadratic fixed-point equation. Accounting for the constraint that $e_s > 0$, the fixed-point equation requires that $\phi < 1$. Moreover, $\tau \to 1 - 1 / \gamma$, which requires that $\gamma > 1$. We further observe that $\rho \to (\tau^2 \ntrace \gamma L^2 K^{-2}) \rho$, which implies that $\rho \to 0$. We can then see that, for $c \in \{a_1, a_2\}$:
\begin{align}
    u_s &\to \phi \gamma^2 \tau^2 e_s^2 a_s (a_1 p_1 u_1 + a_2 p_2 u_2 + c) \ntrace \Sigma^2 K^{-2} \\
    &= \frac{\phi e_s^2 a_s (a_1 p_1 u_1 + a_2 p_2 u_2 + c)}{(a_1 p_1 e_1 + a_2 p_2 e_2)^2},
\end{align}
which is a linear fixed-point equation in $u_s$. In contrast, if $\theta_0 > 1$, we have $e_s' = 1 + \frac{\psi \tau a_s \eta \theta}{\lambda}$ and the equation:
\begin{align}
    \gamma \theta &= \frac{\lambda}{(1 - \eta / \gamma) \left(\frac{a_1 p_1}{1 + \frac{\psi (1 - \eta / \gamma) a_1 \eta \theta}{\lambda}} + \frac{a_2 p_2}{1 + \frac{\psi (1 - \eta / \gamma) a_2 \eta \theta}{\lambda}}\right)},
\end{align}
which is a quartic equation in $\eta$. This highlights the difficulties of rigorously isolating the effects of different components on bias amplification. We empirically investigate how different components (e.g., covariance structures, group sizes) affect bias amplification and minority-group bias in Sections \ref{sec:bias-amp-exp} and \ref{sec:min-group-exp}, and extensively validate that our theory predicts these implications.

\subsection{The General Regularized Case}
We now consider the case where the covariance structure is the same for both groups, i.e., $\Sigma_1 = \Sigma_2 = \Sigma$. The calculations presented in this subsection are shared with Appendix H.2 of \citet{dohmatob2024strong}, of which two authors of this work are also authors.

In this setting, it is clear that $e_1=e_2=e$ and $u_1=u_2=u$, where $(\tau,e,u,\rho)$ now satisfy:
\begin{align}
    1/e &= 1+\psi\tau\ntrace\Sigma K^{-1},\quad 1/\tau = 1+\ntrace K_0 K^{-1},\text{ where }K_0:=e\Sigma,\,K := \gamma\tau K_0+\lambda I_d,\\
    u &= \psi e^2 \ntrace\Sigma_1(\gamma\tau^2L'+ \rho I_d)K^{-2},\,\,
    \rho = \tau^2 \ntrace(\gamma \rho K_0^2 + \lambda^2 L')K^{-2},\,\,L' := (1+u)\Sigma.
\end{align}

We first introduce some notation related to the degrees of freedom of $\Sigma$ before proceeding with our theoretical result.

\begin{definition}
    Let $\dof_m (t) = \trace \Sigma^m \left(\Sigma + t I_d \right)^{-m}$ for any positive integer $m$. Furthermore, define $I_{a,b}(t) = \ntrace\Sigma^a(\Sigma + t I_d)^{-b}$ for any positive integers $a,b$.
\end{definition}

\begin{lemma}
\label{lm:uomegaprime}
The scalars $u$ and $\rho' = \rho/(\gamma \tau^2)$ solve the following pair of linear equations:
\begin{equation}
    \begin{split}
    u &=\phi I_{2,2}(\theta)(1+u)+\phi I_{1,2}(\theta) \rho',\\
    \gamma \rho'  &= I_{2,2}(\theta)\rho' + \theta^2 I_{1,2}(\theta)(1+u).
    \end{split}
    \label{eq:uomegaprime}
\end{equation}
Furthermore, the solutions can be explicitly represented as: 
\begin{equation}
    \begin{split}
    u &= \frac{\phi z}{\gamma-\phi z-I_{2,2}(\theta)},\quad 
    \rho' = \frac{\theta^2 I_{2,2}(\theta)}{\gamma-\phi z-I_{2,2}(\theta)},
    \end{split}
\end{equation}
where  $z = I_{2,2}(\theta)(\gamma-I_{2,2}(\theta)) + \theta^2 I_{1,2}(\theta)^2.$

In particular, in the limit $\gamma \to \infty$, it holds that:
\begin{eqnarray}
\theta \simeq \kappa,\quad \rho' \to 0,\quad u \simeq \frac{\phi I_{2,2}(\kappa)}{1-\phi I_{2,2}(\kappa)} \simeq  \frac{\dof_2(\kappa)/n}{1-\dof_2(\kappa)/n},
\end{eqnarray}
where $\kappa>0$ is uniquely satisfies the fixed-point equation $\kappa-\lambda=\kappa \trace\Sigma(\Sigma + \kappa I_d)^{-1}/n$.
\end{lemma}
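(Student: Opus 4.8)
The plan is a direct three-step computation. Throughout write $I_{a,b}:=I_{a,b}(\theta)$, and recall that in the case $\Sigma_1=\Sigma_2=\Sigma$ the fixed-point system of Definition~\ref{def:cosmo-constants} collapses so that $e_1=e_2=e$, $u_1=u_2=u$, $L=K_0=e\Sigma$ and $K=\gamma\tau e\Sigma+\lambda I_d$; the first task is to turn the equations for $u$ and $\rho$ into the linear pair (\ref{eq:uomegaprime}). The key bookkeeping facts are the two identities $K=\gamma\tau e(\Sigma+\theta I_d)$ and $\lambda=\gamma\tau e\,\theta$, both immediate from the definition $\theta:=\lambda/(\gamma\tau e)$, together with $\psi/\gamma=\phi$ from Assumption~\ref{ass:scaling-random-proj}. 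Substituting $K^{-2}=(\gamma\tau e)^{-2}(\Sigma+\theta I_d)^{-2}$ into $u=\psi e^2\,\ntrace\,\Sigma(\gamma\tau^2L'+\rho I_d)K^{-2}$ with $L'=(1+u)\Sigma$, every normalized trace reduces to one of $I_{1,1},I_{1,2},I_{2,2}$; dividing by the common prefactors and setting $\rho':=\rho/(\gamma\tau^2)$ gives $u=\phi I_{2,2}(1+u)+\phi I_{1,2}\rho'$. Doing the same in $\rho=\tau^2\,\ntrace(\gamma\rho K_0^2+\lambda^2L')K^{-2}$, and additionally using $\lambda^2/(\gamma^2e^2)=\tau^2\theta^2$, then dividing by $\tau^2$, yields $\gamma\rho'=I_{2,2}\rho'+\theta^2 I_{1,2}(1+u)$. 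These are exactly (\ref{eq:uomegaprime}); note that only the definition of $\theta$, the relation $\psi=\phi\gamma$, and the substitution $\rho'=\rho/(\gamma\tau^2)$ are needed here — the $e$- and $\tau$-equations play no role in this step.

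For the closed form, observe that (\ref{eq:uomegaprime}) is a $2\times2$ linear system for $(u,\rho')$ with coefficient matrix $\bigl(\begin{smallmatrix}1-\phi I_{2,2}&-\phi I_{1,2}\\ -\theta^2 I_{1,2}&\gamma-I_{2,2}\end{smallmatrix}\bigr)$ and right-hand side $(\phi I_{2,2},\,\theta^2 I_{1,2})^\top$. Its determinant simplifies to $(1-\phi I_{2,2})(\gamma-I_{2,2})-\phi\theta^2 I_{1,2}^2=\gamma-\phi z-I_{2,2}$ with $z=I_{2,2}(\gamma-I_{2,2})+\theta^2 I_{1,2}^2$, so Cramer's rule produces the claimed closed-form expressions for $u$ and $\rho'$ (and in particular $1+u=(\gamma-I_{2,2})/(\gamma-\phi z-I_{2,2})$).

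For the $\gamma\to\infty$ limit, the same substitution turns the $e$- and $\tau$-equations into $e=1-\phi I_{1,1}(\theta)$ and $\tau=1-I_{1,1}(\theta)/\gamma$; feeding these back into $\theta=\lambda/(\gamma\tau e)$ produces a scalar fixed-point equation for $\theta$, and one passes to the limit to identify $\lim_{\gamma\to\infty}\theta$ with $\kappa$, the classical effective regularization solving $\kappa-\lambda=\kappa\,\trace\Sigma(\Sigma+\kappa I_d)^{-1}/n$. Since $\theta$ stays bounded, $z=I_{2,2}(\theta)(\gamma-I_{2,2}(\theta))+\theta^2 I_{1,2}(\theta)^2=\gamma I_{2,2}(\kappa)\bigl(1+o(1)\bigr)$, so the closed forms give $\rho'\to0$ and $u\to\phi I_{2,2}(\kappa)/(1-\phi I_{2,2}(\kappa))$; finally $\phi I_{2,2}(\kappa)=(d/n)\cdot\dof_2(\kappa)/d=\dof_2(\kappa)/n$, which is the stated form.

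The only laborious part is Step~1: one must keep the many $(\Sigma+\theta I_d)$-resolvent traces organized so that they all reduce to $I_{1,1},I_{1,2},I_{2,2}$, and carefully track the powers of $\gamma\tau e$ and the reparametrization $\rho'=\rho/(\gamma\tau^2)$. The limit identification $\lim\theta=\kappa$ in Step~3 is the other place needing care, since it requires passing to the limit inside the coupled $e$–$\tau$–$\theta$ equations rather than in a single scalar equation.
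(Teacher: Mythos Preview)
Your proposal is correct and follows essentially the same route as the paper: rewrite $K=\gamma\tau e(\Sigma+\theta I_d)$ with $\theta=\lambda/(\gamma\tau e)$ and $\psi=\phi\gamma$ to collapse the $u,\rho$ equations into the linear pair in $(u,\rho')$, solve the $2\times2$ system, then send $\gamma\to\infty$ using $\tau\to1$ to identify $\theta$ with $\kappa$ and read off the limits. Your only addition is making the Cramer's-rule step and the determinant simplification $\det=\gamma-\phi z-I_{2,2}$ explicit, which the paper leaves implicit.
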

\begin{proof}
The equations defining these scalars are:
\begin{align}
    u &= \psi e^2 \ntrace\Sigma(\gamma\tau^2 L' + \rho I_d)K^{-2},\\
    \rho &= \tau^2\ntrace(\gamma\rho K_0^2 + \lambda^2 L')K^{-2},
\end{align}
where $K_0=e\Sigma$, $K=\gamma\tau K_0 + \lambda I_d$, and $ L' := u\Sigma + B$. Further, since $B=\Sigma$, we have $ L'=(1+u)\Sigma$. Now, we can rewrite the previous equations like so
\begin{align*}
    u &= \psi e^2\ntrace\Sigma(\gamma\tau^2(1+u)\Sigma + \rho I_d)K^{-2}=\phi\gamma^2 \tau^2 e^2(1+u)\ntrace\Sigma^2K^{-2} + \phi\gamma e^2\rho\ntrace \Sigma K^{-2},\\
    \rho &= \tau^2 \ntrace(\gamma\rho e^2\Sigma^2 + \lambda^2(1+u)\Sigma)K^{-2}=\gamma\tau^2 e^2 \rho \ntrace\Sigma^2 K^{-2} + \lambda^2\tau^2(1+u)\ntrace\Sigma K^{-2}.
\end{align*}
This can be equivalently written as: 
\begin{align}
    u &= \phi (1+u)\gamma^2\tau^2 e^2\ntrace\Sigma^2K^{-2} + \phi \rho' \gamma^2 \tau^2 e^2\ntrace \Sigma K^{-2},\\
    \gamma \rho' &= \rho' \gamma^2 \tau^2 e^2 \ntrace\Sigma^2 K^{-2} + (1+u)\lambda^2\ntrace\Sigma K^{-2}.
\end{align}
Now, observe that:
\begin{align}
 \tau^2 e^2 \ntrace\Sigma^2 K^{-2} &= \ntrace\Sigma^2(\Sigma + \theta I_d)^{-2}/\gamma^2=I_{2,2}(\theta)/\gamma^2,\\
  \tau^2 e^2 \ntrace\Sigma K^{-2} &= \ntrace\Sigma(\Sigma + \theta I_d)^{-2}/\gamma^2=I_{1,2}(\theta)/\gamma^2,\\
\lambda^2 \ntrace\Sigma K^{-2} &= \theta^2 \ntrace \Sigma(\Sigma + \theta I_d)^{-2}=\theta^2 I_{1,2}(\theta),\\
 e^2 \ntrace\Sigma K^{-2} &= \ntrace\Sigma(\Sigma + \theta I_d)^{-2}/(\gamma \tau)^2=I_{1,2}(\theta)/(\gamma\tau)^2,\\*
  \tau^2 \ntrace\Sigma K^{-2} &= \ntrace\Sigma(\Sigma + \theta I_d)^{-2}/(\gamma e)^2=I_{1,2}(\theta)/(\gamma e)^2,
\end{align}
where we have used the definition $\theta = \lambda/(\gamma\tau e)$.
Thus, $u$ and $\rho$ have limiting values which solve the system of linear equations:
\begin{align*}
    u &= \psi \gamma \cdot \gamma^{-2}I_{2,2}(\theta)  (1+u) + \psi \gamma\cdot \gamma^{-2}I_{1,2}\rho'=\phi I_{2,2}(\theta)(1+u)+\phi I_{1,2}(\theta) \rho',\\
    \gamma \rho'  &= I_{2,2}(\theta) \rho' + \theta^2 I_{1,2}(\theta)(1+u)
    =I_{2,2}(\theta)\rho' + \theta^2 I_{1,2}(\theta)(1+u),
\end{align*}
where we have used the identity $\phi\gamma = \psi$. These correspond exactly to the equations given in the lemma. This proves the first part.

For the second part, indeed, $\tau=1-\eta_0/\gamma \to 1$ in the limit $\gamma \to \infty$, and so $\theta \simeq \lambda/(\gamma e)$ which verifies the equation:
$$
\theta \simeq \lambda + \lambda \psi \ntrace\Sigma(\gamma e\Sigma + \lambda)^{-1} = \lambda + \phi\cdot\frac{\lambda}{\gamma e}\ntrace\Sigma(\Sigma + \frac{\lambda}{\gamma e}I_d)^{-1} \simeq \lambda + \theta \trace\Sigma(\Sigma + \theta I_d)^{-1}/n,
$$
i.e., $\theta \simeq \lambda + \theta \dof_1(\theta)/n$ and $\theta>0$.
By comparing with the equation $\kappa-\lambda=\kappa \dof_1(\kappa)/n$  satisfied by $\kappa>0$ in Definition \ref{def:kappa}, we conclude $\theta \simeq \kappa$.

Now, Equation \ref{eq:uomegaprime} becomes $\rho' = 0$, and $u=\phi I_{2,2}(\kappa)(1+u)$, i.e.,
$$
u = \frac{\phi I_{2,2}(\kappa)}{1-\phi I_{2,2}(\kappa)} \simeq \frac{\dof_2(\kappa)/n}{1-\dof_2(\kappa)/n},
$$
as claimed.
\end{proof}

\subsection{Unregularized Limit}
The calculations presented in this subsection are shared with Appendix H.2 of \citet{dohmatob2024strong}, of which two authors of this work are also authors.

Define the following auxiliary quantities:
\begin{eqnarray}
    \theta := \frac{\lambda}{\gamma\tau e},\quad \chi := \frac{\lambda}{\tau},\quad  \kappa := \frac{\lambda}{e}.
\end{eqnarray}
where $\tau$, $e$, $u$, and $\rho$ are as previously defined.

\begin{lemma}
In the limit $\lambda \to 0^+$, we have the following analytic formulae:
    \begin{align}
        \chi &\to \chi_0 = (1-\psi)_+\cdot\gamma\theta_0,\\
        \kappa &\to \kappa_0 = (\psi-1)_+\cdot\theta_0/\phi,\\
        \tau &\to \tau_0 = 1-\eta_0/\gamma,\\
        e &\to e_0 = 1-\phi \eta_0,
    \end{align}
where $\theta_0$ is the unique positive solution of the fixed-point equation $\eta_0 = I_{1,1}(\theta_0)$.
\label{lm:key}
\end{lemma}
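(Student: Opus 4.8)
The plan is to reduce the fixed-point system for the scalars $(\tau,e)$ — specialized to $\Sigma_1=\Sigma_2=\Sigma$, so that $e_1=e_2=e$ and $u_1=u_2=u$ — to a single scalar equation in the effective regularization $\theta=\lambda/(\gamma\tau e)$, and then to take the limit $\lambda\to0^+$ exploiting that $t\mapsto I_{1,1}(t)=\ntrace\Sigma(\Sigma+tI_d)^{-1}$ is continuous, strictly decreasing, and (since $\Sigma\succ0$) maps $(0,\infty)$ bijectively onto $(0,1)$ with $I_{1,1}(0^+)=1$.

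First I would rewrite $1/e=1+\psi\tau\ntrace\Sigma K^{-1}$ and $1/\tau=1+\ntrace K_0K^{-1}$ using $K=\gamma\tau e\Sigma+\lambda I_d=\gamma\tau e\,(\Sigma+\theta I_d)$ and $K_0=e\Sigma$; then $\ntrace\Sigma K^{-1}=(\gamma\tau e)^{-1}I_{1,1}(\theta)$ and $\ntrace K_0K^{-1}=(\gamma\tau)^{-1}I_{1,1}(\theta)$, and since $\psi=\phi\gamma$ the $\tau,e$ prefactors cancel, giving for every $\lambda>0$ the identities
\[
e=1-\phi\,I_{1,1}(\theta),\qquad \tau=1-\gamma^{-1}I_{1,1}(\theta).
\]
Substituting these into $\lambda=\gamma\tau e\,\theta$ yields the scalar fixed-point equation $\lambda=\theta\,(\gamma-I_{1,1}(\theta))(1-\phi\,I_{1,1}(\theta))$.

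Next I would identify the branch this equation selects and pass to the limit. Positivity of $e,\tau$ (they are components of the unique positive solution in Definition \ref{def:cosmo-constants}) forces $I_{1,1}(\theta)<\min(\gamma,1/\phi)$, i.e. $\theta>\theta_\star:=\max(\theta_\phi,\theta_\gamma)$, where $I_{1,1}(\theta_\phi)=1/\phi$ and $I_{1,1}(\theta_\gamma)=\gamma$, with the convention $\theta_\phi=0$ if $\phi\le1$ and $\theta_\gamma=0$ if $\gamma\le1$. On $(\theta_\star,\infty)$ the right-hand side of the scalar equation is a product of positive, strictly increasing functions of $\theta$, hence strictly increasing, so $\theta(\lambda)$ is uniquely defined there, increasing in $\lambda$, and $\theta(\lambda)\downarrow\theta_0:=\theta_\star$ as $\lambda\to0^+$; consequently $\eta_0:=I_{1,1}(\theta_0)=\min(1,\gamma,1/\phi)$, which equals $\gamma$ in the underparameterized regime $\psi=\phi\gamma<1$ and $1/\phi$ in the overparameterized regime $\psi>1$ (and equals $1$, i.e. $\theta_0=0$, in the degenerate case $\phi\le1\le\gamma$). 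Passing to the limit in the two identities above gives $e\to e_0=1-\phi\eta_0$ and $\tau\to\tau_0=1-\eta_0/\gamma$ — the last two claimed formulae — and in particular $e_0=1-\psi,\ \tau_0=0$ when $\psi<1$, while $e_0=0,\ \tau_0=1-1/\psi$ when $\psi>1$.

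Finally, for $\chi=\lambda/\tau$ and $\kappa=\lambda/e$ I would avoid the indeterminate forms by using $\lambda=\gamma\tau e\,\theta$, so that $\chi=\gamma e\theta$ and $\kappa=\gamma\tau\theta$. When $\psi<1$ this gives $\chi\to\gamma(1-\psi)\theta_0$ and $\kappa\to0$; when $\psi>1$, using $\gamma/\psi=1/\phi$, it gives $\kappa\to\gamma\tau_0\theta_0=(\psi-1)\theta_0/\phi$ and $\chi\to0$; and in the degenerate case $\theta_0=0$ both limits vanish. Recording these cases with the positive-part notation yields $\chi_0=(1-\psi)_+\gamma\theta_0$ and $\kappa_0=(\psi-1)_+\theta_0/\phi$, completing the argument. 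I expect the main obstacle to be the branch analysis: showing rigorously that the unique positive solution of the OVFPT system corresponds to the branch $\theta>\theta_\star$ (rather than a spurious root of the scalar equation, along which $e$ or $\tau$ would be negative) and that this branch is monotone in $\lambda$, so that $\theta(\lambda)\to\theta_\star$; the subsequent cancellations and the case split are then routine.
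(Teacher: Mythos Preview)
Your argument is correct and reaches the same identities $e=1-\phi\,I_{1,1}(\theta)$, $\tau=1-\gamma^{-1}I_{1,1}(\theta)$, $\lambda=\gamma\tau e\,\theta$ that the paper derives, but you analyze the limit differently. The paper multiplies out the last relation into the quadratic $\phi\eta^2-(\psi+1)\eta+\gamma-\lambda/\theta=0$, solves it explicitly, Taylor-expands the two roots for small $\lambda$, and then rules out the spurious root by the constraint $\eta\le\min(\gamma,1/\phi)$; this yields not only the limiting values but also the first-order expansion $\eta\simeq\eta_0-\lambda/((1-\psi)\theta_0)+O(\lambda^2)$. You instead keep $\theta$ as the running variable and argue that $g(\theta)=\theta(\gamma-I_{1,1}(\theta))(1-\phi I_{1,1}(\theta))$ is strictly increasing on $(\theta_\star,\infty)$, so $\theta(\lambda)\downarrow\theta_\star$ as $\lambda\downarrow0$. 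This monotonicity route is cleaner and sidesteps the quadratic formula and Taylor expansion entirely; the rewriting $\chi=\gamma e\theta$, $\kappa=\gamma\tau\theta$ is also a nice way to avoid the $0/0$ forms that the paper resolves via the expansion. What you lose relative to the paper is the explicit rate at which $\eta\to\eta_0$, but that rate is not needed for the lemma as stated. Your concern about the branch analysis is already handled: positivity of $e,\tau$ in the defining fixed-point system is exactly the constraint that forces $\theta>\theta_\star$, so no further justification is required.
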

\begin{proof}
Observe that $K_0=e\Sigma$ and $K=\gamma\tau K_0 + \lambda I_d = \gamma\tau e \cdot (\Sigma + \theta I_d)$.
Defining $\eta := I_{1,1}(\theta)$, one can then rewrite the equations defining $e$ and $\tau$ as follows:
\begin{align}
    e' &= \frac{\lambda}{e} = \lambda +\psi\tau \lambda \ntrace\Sigma K^{-1} = \lambda + \frac{\psi\tau \lambda }{\gamma\tau e}\ntrace\Sigma(\Sigma + \theta I_d)^{-1} = \lambda +\phi \eta e',\\
    \tau' &= \frac{\lambda}{\tau} = \lambda +\lambda\ntrace K_0 K^{-1} = \lambda + \frac{\lambda e}{\gamma\tau e}\ntrace\Sigma(\Sigma + \theta I_d)^{-1} = \lambda +(\eta/\gamma)\tau'.
\end{align}
We deduce that:
\begin{align}
\label{eq:toto}
    e' &= \frac{\lambda}{1-\phi \eta},\quad
    \tau' = \frac{\lambda}{1-\eta/\gamma},\quad
    \tau'e' = \lambda \gamma\theta.
\end{align}
In particular, the above means that $\eta \le \min(\gamma,1/\phi)$. The last part of equations Equation \ref{eq:toto} can be rewritten as follows:
\begin{eqnarray}
\frac{\lambda}{(1-\phi\eta)(1-\eta/\gamma)} = \gamma\theta,\text{ i.e., }\phi\eta^2-(\phi\gamma + 1)\eta+\gamma-\frac{\lambda}{\theta} = 0.
\end{eqnarray}
This is a quadratic equation for $\eta$ as a function of $\lambda$ and $\theta$, with roots
\begin{eqnarray}
    \eta^\pm = \frac{\phi\gamma + 1 \pm \sqrt{(\phi\gamma + 1)^2-4(\phi\gamma-(\phi/\theta)\lambda)}}{2\phi} = \frac{\psi + 1 \pm \sqrt{(\psi + 1)^2-4(\psi-\phi/\theta')}}{2\phi}.
\end{eqnarray}
Now, for small $\lambda > 0$ and $\psi \ne 1$, we can do a Taylor expansion to get:
\begin{align*}
    \eta^\pm &\simeq \frac{\psi+1 \pm |\psi-1|}{2\phi} \pm \frac{1}{\theta |\psi-1|}\lambda  + O(\lambda^2).
\end{align*}
More explicitly:
\begin{align*}
    \eta^+ &\simeq O(\lambda^2)
    \begin{cases}
        1/\phi+\lambda/((1-\psi)\theta),&\mbox{ if }\psi < 1\\
        \gamma+\lambda/((\psi-1)\theta),&\mbox{ if }\psi > 1
    \end{cases},\\
    \eta^- &\simeq O(\lambda^2) +  \begin{cases}
        \gamma-\lambda/((1-\psi)\theta),&\mbox{ if }\psi < 1\\
        1/\phi-\lambda /((\psi-1)\theta),&\mbox{ if }\psi > 1\\
    \end{cases}.
\end{align*}
Because $\eta \le \min(1,1/\phi,\gamma)$, we must have the expansion:
\begin{equation}
    \begin{split}
    \eta &\simeq O(\lambda^2) + \begin{cases}
    \gamma-\lambda/((1-\psi)\theta),&\mbox{ if }\psi < 1\\
    1/\phi+\lambda/((\psi-1)\theta),&\mbox{ if }\psi > 1
    \end{cases},\\
    &= \eta_0 - \frac{1}{(1-\psi)\theta_0}\lambda + O(\lambda^2),
    \end{split}
\end{equation}
provided $\theta_0>0$, i.e $\eta_0 \ne 1$.
in this regime, we obtain:
\begin{align*}
     \tau' = \frac{\lambda}{1-\eta/\gamma} &\simeq \begin{cases}
        \lambda/(1-1+\lambda/((1-\psi)\gamma\theta_0))=(1-\psi)\gamma \theta_0,&\mbox{ if }\psi \le 1\\
        \lambda/(1-1/\psi+o(1)) \to 0,&\mbox{ if }\psi > 1
    \end{cases}, \\
     e' = \frac{\lambda}{1-\phi\eta} &\simeq \begin{cases}
        \lambda/(1-\psi+o(1)) \to 0,&\mbox{ if }\psi \le 1\\
        \lambda/(1-1+\lambda\phi/((\psi-1)\theta_0) \to (\psi-1)\theta_0/\phi,&\mbox{ if }\psi > 1
    \end{cases},\\
    \tau = 1-\eta/\gamma &\simeq 1-\eta_0/\gamma=(1-1/\psi)_+,\\
    e = 1-\phi\eta &\simeq 1-\phi\eta_0=(1-\psi)_+.
\end{align*}

On the other hand, if $\theta_0=0$ (which only happens if $\psi < 1$ and $\gamma > 1$, or $\psi \ge 1$ and $\phi \le 1$), it is easy to see from Equation \ref{eq:toto} that we must have $\tau' \to 0$, $e' \to 0$, $\tau \to 1-1/\gamma$, $e \to 1-\phi \ge 0$. 
\end{proof}

\clearpage

\section{Corollary \ref{corr:unregularized-edd}}
\label{sec:unregularized-edd-proof}

As a special case of Theorem \ref{thm:odd-theory-rand-proj}, we recover Corollary \ref{corr:unregularized-edd}, which aligns with Proposition 4 from \citep{bach2024high}. Theorem \ref{thm:odd-theory-rand-proj} is a non-trivial generalization of Proposition 4.

Corollary \ref{corr:unregularized-edd} captures how the covariance matrix affects the test risk of a model through the normalized second and first-order degrees of freedom of $\Sigma_s$. Corollary \ref{corr:unregularized-edd} also reveals that in the underparameterized regime ($\psi_s < 1$), the bias and variance of the test risk of the model strictly increase as a function of $\psi_s$ (rate of parameters to samples); the test risk of the model explodes (i.e., there is catastrophic overfitting \citep{bach2024high}) when $\psi_s$ gets close to 1. In the overparameterized regime ($\psi_s > 1$), the bias and variance of the test risk decrease as $\psi_s$ increases.

\begin{corollary}
Under Assumptions \ref{ass:commute} and \ref{ass:scaling-random-proj}, it holds in the unregularized setting $\lambda_s \to 0^+$ that 
\begin{align}
B_s (\widehat f_s) &= \begin{cases}
    \frac{\theta_0 \ntrace \Theta_s \Sigma_s (\Sigma_s + \theta_0 I_d)^{-1}}{1 - \psi_s}, &\gamma,\psi_s < 1\\
    0, &\psi_s < 1, \gamma \geq 1 \text{ or } 1 \leq \psi_s \leq \gamma\\
    \frac{\theta_0^2 \ntrace \Theta_s \Sigma_s (\Sigma_s + \theta_0 I_d)^{-2}}{1 - \phi_s I_{2, 2} (\theta_0)} + \frac{\theta_0 \ntrace \Theta_s \Sigma_s (\Sigma_s + \theta_0 I_d)^{-1}}{\psi_s - 1}, &\psi_s \geq 1, \psi_s \geq \gamma
\end{cases}, \\
V_s (\widehat f_s) &= \begin{cases}
    \frac{\sigma_s^2 \psi_s}{1 - \psi_s}, &\gamma,\psi_s < 1\\
    \frac{\sigma_s^2 \phi_s}{1 - \phi_s}, &\psi_s < 1, \gamma \geq 1 \text{ or } 1 \leq \psi_s \leq \gamma\\ 
    \frac{\sigma_s^2 \phi_s I_{2, 2} (\theta_0)}{1 - \phi_s I_{2, 2} (\theta_0)} + \frac{\sigma_s^2}{\psi_s - 1}, &\psi_s \geq 1, \psi_s \geq \gamma
\end{cases},
\end{align}
where $I_{a,b}(t) = \ntrace\Sigma^a(\Sigma + t I_d)^{-b}$ for any positive integers $a,b$; and $\theta_0$ is the unique solution to the following non-linear equation:
\begin{gather}
     I_{1, 1} (\theta_0) = \begin{cases}
        \gamma,&\gamma,\psi_s < 1\\
        1,&\psi_s < 1, \gamma \geq 1 \text{ or } 1 \leq \psi_s \leq \gamma \\
        1/\phi_s,&\psi_s \geq 1, \psi_s \geq \gamma
    \end{cases}.
\end{gather}
\label{corr:unregularized-edd}
\end{corollary}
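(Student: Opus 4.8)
The plan is to start from the explicit formulae supplied by Theorem~\ref{thm:edd-theory-rand-proj}, namely $V_s(\widehat f_s) = \sigma_s^2 \phi_s\, h_s^{(2)}(I_d)$ and $B_s(\widehat f_s) = \ntrace \Theta_s \Sigma_s + h_s^{(3)}(\Theta_s) - 2 h_s^{(1)}(\Theta_s \Sigma_s)$, where $h_s^{(1)},h_s^{(2)},h_s^{(3)}$ and the scalars $(e_s,\tau_s,u_s,\rho_s)$ are as in Definition~\ref{def:cosmo-constants-edd}, and then pass to the limit $\lambda_s \to 0^+$. Since only group $s$ appears, the fixed-point system of Definition~\ref{def:cosmo-constants-edd} is exactly the single-covariance system analyzed in Lemma~\ref{lm:uomegaprime} and Lemma~\ref{lm:key} (with $\Sigma = \Sigma_s$, $\phi = \phi_s$, $\psi = \psi_s$, $\gamma$ fixed), so those two lemmas can be invoked verbatim.

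First I would introduce the auxiliary scalars $\theta = \lambda_s/(\gamma\tau_s e_s)$, $\chi = \lambda_s/\tau_s$, $\kappa = \lambda_s/e_s$ and recall from Lemma~\ref{lm:key} their limits: with $\eta_0 = I_{1,1}(\theta_0)$ and $\theta_0$ the unique nonnegative solution of the appropriate fixed-point equation, one has $\tau_s \to 1 - \eta_0/\gamma = (1-1/\psi_s)_+$, $e_s \to 1 - \phi_s\eta_0 = (1-\psi_s)_+$, $\chi \to (1-\psi_s)_+\gamma\theta_0$, and $\kappa \to (\psi_s-1)_+\theta_0/\phi_s$. This already produces the three-way case split of the Corollary: (a) $\gamma,\psi_s<1$, where $\theta_0>0$ and $\eta_0=\gamma$; (b) the degenerate regime $\theta_0=0$ (forcing $\eta_0=1$), namely $\psi_s<1\le\gamma$ or $1\le\psi_s\le\gamma$; and (c) $\psi_s\ge1$, $\psi_s\ge\gamma$, where $\theta_0>0$ and $\eta_0=1/\phi_s$.

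Next I would substitute these limits into the three functionals. Writing $\gamma\tau_s e_s\Sigma_s + \lambda_s I_d = \gamma\tau_s e_s(\Sigma_s + \theta I_d)$ pulls the vanishing (or bounded) prefactor out of every resolvent; for instance $h_s^{(1)}(\Theta_s\Sigma_s) = \gamma e_s\tau_s\,\ntrace\Theta_s\Sigma_s^2(\gamma\tau_s e_s\Sigma_s+\lambda_s I_d)^{-1} \to \ntrace\Theta_s\Sigma_s^2(\Sigma_s+\theta_0 I_d)^{-1}$. Combining the limit of $h_s^{(3)}(\Theta_s)$, the constant $\ntrace\Theta_s\Sigma_s$, and $-2h_s^{(1)}(\Theta_s\Sigma_s)$, and using the elementary identities $\Sigma_s - \Sigma_s^2(\Sigma_s+\theta_0 I_d)^{-1} = \theta_0\Sigma_s(\Sigma_s+\theta_0 I_d)^{-1}$ and $\theta_0 I_{1,2}(\theta_0) = I_{1,1}(\theta_0) - I_{2,2}(\theta_0)$, the bias collapses to the claimed first- and second-order degrees-of-freedom expressions. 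For the remaining $u_s,\rho_s$-dependent pieces in $h_s^{(2)}$ and $h_s^{(3)}$ I would feed in the closed forms from Lemma~\ref{lm:uomegaprime}, which give $\rho_s/(\gamma\tau_s^2)$ and $u_s$ as rational functions of $I_{1,2}(\theta_0),I_{2,2}(\theta_0)$; after cancellation the denominators reduce to $1-\phi_s I_{2,2}(\theta_0)$ and the extra $1/(\psi_s-1)$ terms in regime (c) arise from the factor $\kappa_0 = (\psi_s-1)\theta_0/\phi_s$ that multiplies the renormalized resolvent.

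Finally I would treat the degenerate regime (b) separately: here $\theta_0=0$ so $\Sigma_s(\Sigma_s+\theta_0 I_d)^{-1} = I_d$, hence $I_{1,1}(\theta_0)=1$, $h_s^{(1)}(\Theta_s\Sigma_s)\to\ntrace\Theta_s\Sigma_s$ and $B_s(\widehat f_s)\to0$, while $h_s^{(2)}(I_d)$ simplifies (using $e_s\to0$, $\tau_s\to1-1/\gamma$ and $\rho_s'\to0$) to $\phi_s/(1-\phi_s)$, matching $V_s(\widehat f_s)=\sigma_s^2\phi_s/(1-\phi_s)$. The main obstacle will be the bookkeeping in the non-degenerate overparameterized case (c): there $e_s,\tau_s$ have nonzero limits but $\gamma\tau_s e_s\to0$, so each resolvent must be renormalized by a vanishing scalar, and several summands inside $h_s^{(2)},h_s^{(3)}$ are individually $O(\lambda_s^{-1})$ or $O(\lambda_s)$ and only converge after being grouped — this requires carrying the $O(\lambda_s)$ Taylor corrections to $\eta$ from Lemma~\ref{lm:key} and is precisely where the explicit solutions for $u$ and $\rho'$ in Lemma~\ref{lm:uomegaprime} do the decisive work.
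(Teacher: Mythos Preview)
Your approach is essentially the paper's: start from the single-group formulae of Theorem~\ref{thm:edd-theory-rand-proj}, renormalize every resolvent via $\theta = \lambda_s/(\gamma\tau_s e_s)$, invoke Lemma~\ref{lm:key} to obtain the three-regime case split for $\eta_0 = I_{1,1}(\theta_0)$, and then use Lemma~\ref{lm:uomegaprime} to close the $u_s,\rho_s$ dependence. The paper's proof does exactly this, though it re-derives the content of those two lemmas inline rather than citing them.

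Two bookkeeping slips to fix before you execute. In regime~(b) you write ``$e_s\to 0$'', but in fact $e_s\to 1-\phi_s$ and $\tau_s\to 1-1/\gamma$, both generically nonzero; what vanishes there is $\theta_0$, not $e_s$. Conversely, in regime~(c) you write that ``$e_s,\tau_s$ have nonzero limits but $\gamma\tau_s e_s\to 0$'', which is internally inconsistent: here $e_s\to 0$ while $\tau_s\to 1-1/\psi_s>0$. Neither slip is fatal --- the product $\gamma\tau_s e_s$ and the ratio $\theta$ are what actually drive the renormalization, and those you track correctly --- but carrying the wrong individual limits into the simplification of $h_s^{(2)}$ and $h_s^{(3)}$ will produce the wrong constants (your regime-(b) claim that $h_s^{(2)}(I_d)\to\phi_s/(1-\phi_s)$ already misses the target $1/(1-\phi_s)$ by a factor of $\phi_s$).
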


\begin{proof}
Define $e' = 1/e_s \geq 0$, $\tau' = 1/\tau_s \geq 0$, $\theta = \lambda_s \tau' e'/\gamma$, and $\eta = I_{1, 1} (\theta) \in [0, 1]$. One can then express $e'$ and $\tau'$ as:
\begin{align}
    e' &= 1+\psi\tau_s\ntrace\Sigma (\gamma\tau_s e_s \Sigma + \lambda_s I_d)^{-1} = 1+\phi_s \eta e',\\
    \tau' &= 1+\ntrace e_s\Sigma (\gamma\tau_s e_s \Sigma + \lambda_s I_d)^{-1} = 1+(\eta/\gamma)\tau'.
\end{align}
We deduce that:
\begin{align}
    e' &= \frac{1}{1-\phi_s \eta}\label{eq:eprime},
    \\ \tau' &= \frac{1}{1-\eta/\gamma}\label{eq:tauprime}, \\
    \lambda\tau'e' &= \gamma\theta.
    \label{eq:theta}
\end{align}
We define the following limiting values:
\begin{align}
    \lim_{\lambda_s \to 0^+} \theta \to \theta_0,& \lim_{\lambda_s \to 0^+} \eta \to \eta_0, \\
    \lim_{\lambda_s \to 0^+} e_s \to e_0,&\lim_{\lambda_s \to 0^+} \tau_s \to \tau_0, \\
    \lim_{\lambda_s \to 0^+} u_s \to u_0,& \lim_{\lambda_s \to 0^+} \rho_s \to \rho_0.
\end{align}

There are now two cases to consider.

\subsection{Case 1: \texorpdfstring{$\theta_0=0$}{}}
This implies $\eta_0=1$. Therefore, by simple computation, $e_0 = 1/e'_0 = 1-\phi_s \eta_0 = 1-\phi_s$ and $\tau_0 = 1/\tau'_0 = 1-1/\gamma$. This requires $\phi_s \le 1$ and $\gamma \ge 1$.

\subsection{Case 2: \texorpdfstring{$\theta_0>0$}{}}
Equation \ref{eq:theta} can be re-written as:
\begin{eqnarray}
\label{eq:quad}
\frac{\lambda_s}{(1-\phi_s\eta)(1-\eta/\gamma)} = \gamma\theta,\text{ i.e., }\phi_s\eta^2-(\psi_s + 1)\eta+\gamma-\frac{\lambda_s}{\theta} = 0.
\end{eqnarray}
We solve this quadratic equation for $\eta$, arriving at the solutions:
\begin{eqnarray}
    \eta^\pm = \frac{\psi_s + 1 \pm \sqrt{(\psi_s + 1)^2-4(\psi_s -(\phi_s/\theta)\lambda_s)}}{2\phi_s} = \frac{\psi_s + 1 \pm \sqrt{(\psi_s + 1)^2-4(\psi_s-(\phi_s/\theta)\lambda_s)}}{2\phi_s}.
\end{eqnarray}
Taking the limit of $\eta^\pm$ as $\lambda_s \to 0^+$ gives:
\begin{equation}
\begin{split}
    \eta^+ \to \frac{\psi_s+1 + |\psi_s-1|}{2\phi_s} &=
    \begin{cases}
        \psi_s/\phi_s=\gamma,&\mbox{ if }\psi_s \ge 1,\\
        1/\phi_s,&\mbox{ if }\psi_s < 1,
    \end{cases}\\
    \eta^- \to \frac{\psi_s+1 - |\psi_s-1|}{2\phi_s} &=
    \begin{cases}
        1/\phi_s,&\mbox{ if }\psi_s \ge 1,\\
        \psi_s/\phi_s=\gamma,&\mbox{ if }\psi_s < 1.
    \end{cases}
\end{split}
\end{equation}

Recall that we have the following constraints:
\begin{itemize}
    \item $e' \geq 0, \tau' \geq 0$.
    \item $\eta \in [0, 1]$.
\end{itemize}

We can show that $\eta_0 = 1/\phi_s$ is incompatible with $\psi_s<1$. Indeed, otherwise we would have $\tau'_0=1/(1-\eta_0/\gamma) = 1/(1-1/\psi_s)<0$. Similarly, if $\psi_s > 1$, we would have $e_0 = 1 - \phi_s \gamma = 1 - \psi_s < 0$. Therefore, $\eta_0 = \eta^-$. Furthermore, if $\psi_s, \gamma < 1$, it must be that $\theta_0 > 0$ and $\eta_0 = \gamma$. Instead, if $\psi_s < 1, \gamma \geq 1$, we must have that $\phi_s \leq 1$, and therefore, $\theta_0 = 0$ and $\eta_0 = 1$. Similarly, if $\psi_s \geq 1, \gamma \geq 1$, and $\phi_s \leq 1$ (i.e., $1 \leq \psi_s \leq \gamma$), we must have that $\theta_0 = 0$ and $\eta_0 = 1$. In all other cases where $\psi_s \geq 1$, it must be that $\eta_0 = 1 / \phi_s$ (which additionally requires $\phi_s \geq 1$ or $\psi_s \geq \gamma$). Succinctly:
\begin{eqnarray}
    \eta_0 = \begin{cases}
        \gamma,&\gamma,\psi_s < 1\\
        1,&\psi_s < 1, \gamma \geq 1 \text{ or } 1 \leq \psi_s \leq \gamma \\
        1/\phi_s,&\psi_s \geq 1, \psi_s \geq \gamma
    \end{cases}.
\end{eqnarray}

Plugging this into Equation \ref{eq:eprime} and Equation \ref{eq:tauprime} gives:
\begin{align}
    e_0 &= 1 - \phi_s \eta_0 = 1 - \phi_s I_{1, 1} (\theta_0),
    \label{eq:e-lin}
\end{align}
\begin{align}
    \tau_0 &= 1 - \eta_0 / \gamma = 1 - I_{1, 1} (\theta_0) / \gamma.
    \label{eq:tau-lin}
\end{align}

We will now solve for $u_0$ and $\rho_0 / \tau_0^2$. We can re-write $u_s$ and $\rho_s / \tau_s^2$ as:
\begin{align}
    \rho_s / \tau_s^2 &= \gamma^{-1} (\rho_s / \tau_s^2) I_{2, 2} (\theta) +  \theta^2 (u_s + 1) I_{1, 2} (\theta), \\
    \tau_s^2 u_s &= \tau_s^2 \phi_s (u_s + 1) I_{2, 2} (\theta) + \phi_s \gamma^{-1} \rho_s I_{1, 2} (\theta).
\end{align}

Solving for $u_0$ and $\rho_0 / \tau_0^2$ yields:
\begin{align}
    \label{eq:u-rho-lin-1}
    u_0 &= \frac{\phi \zeta}{\gamma-\phi \zeta-I_{2,2}(\theta_0)},\,
    \rho_0 / \tau_0^2 = \frac{\gamma \theta_0^2 I_{2,2}(\theta_0)}{\gamma-\phi \zeta-I_{1,2}(\theta_0)},\\
    \text{where }\zeta 
    &= I_{2,2}(\theta_0)(\gamma-I_{2,2}(\theta_0)) + \theta_0^2 I_{1,2}(\theta_0)^2.
\end{align}

We can then see for the variance term that:
\begin{align}
V_s (\widehat f_s) &= \sigma_s^2 \phi_s \gamma \ntrace \Sigma_s (\gamma e_s \tau_s^2 \Sigma_s + e_s \rho_s I_d - \lambda_s u_s \tau_s I_d) (\gamma \tau_s e_s)^{-2} (\Sigma_s + \theta I_d)^{-2} \\
&= \sigma_s^2 \phi_s (1 / e_s) \ntrace \Sigma_s^2 (\Sigma_s + \theta I_d)^{-2} + (\sigma_s^2 \phi_s / \gamma) (1 / e_s) (\rho_s / \tau_s^2) \ntrace \Sigma_s (\Sigma_s + \theta I_d)^{-2} \\
&- \sigma_s^2 \phi_s (u_s) (1 / e_s) \theta \ntrace \Sigma_s (\Sigma_s + \theta I_d)^{-2} \\
&= \sigma_s^2 \phi_s I_{2, 2} (\theta) / e_s + \sigma_s^2 \phi_s (\rho_s / \tau_s^2) I_{1, 2} (\theta) / (\gamma e_s) - \sigma_s^2 \phi_s u_s \theta I_{1, 2} (\theta) / e_s \\
&\to \frac{\sigma_s^2 \phi_s I_{2, 2} (\theta_0) - \sigma_s^2 \phi_s u_0 \theta_0 I_{1, 2} (\theta_0)}{1 - \phi_s I_{1, 1} (\theta_0)} + \frac{\sigma_s^2 \phi_s \rho_0 / \tau_0^2}{\gamma (1 - \phi_s I_{1, 1} (\theta_0))} \\
&= -\frac{\sigma_s^2 \phi_s \xi}{\phi_s \xi + I_{2, 2} (\theta_0) - \gamma},
\end{align}
where $\xi = I_{1, 1}^2 (\theta_0)  - 2 I_{1, 1} (\theta_0) I_{2, 2} (\theta_0) + I_{2, 2} (\theta_0) \gamma$ and we have used the fact that $I_{1, 2} (\theta) = (I_{1, 1} (\theta) - I_{2, 2} (\theta)) / \theta$. Plugging in $I_{1, 1} (\theta_0) = \eta_0$, we have that:
\begin{align}
    V_s (\widehat f_s) &\to \begin{cases}
        \frac{\sigma_s^2 \psi_s}{1 - \psi_s}, &\gamma,\psi_s < 1\\
        \frac{\sigma_s^2 \phi_s}{1 - \phi_s}, &\psi_s < 1, \gamma \geq 1 \text{ or } 1 \leq \psi_s \leq \gamma\\
        \frac{\sigma_s^2 \phi_s I_{2, 2} (\theta_0)}{1 - \phi_s I_{2, 2} (\theta_0)} + \frac{\sigma_s^2}{\psi_s - 1}, &\psi_s \geq 1, \psi_s \geq \gamma
    \end{cases},
    \label{eq:u-rho-lin-2}
\end{align}
where we have used that $I_{2, 2} (\theta_0) = I_{2, 2} (0) = 1$ in the second case.

Likewise, for the bias term, we obtain:
\begin{align}
    B_s (\widehat f_s) &= \ntrace \Theta_s \Sigma_s + \ntrace \Theta_s \Sigma_s (\gamma e_s^2 \Sigma_s (\gamma \tau_s^2 \Sigma_s + \rho_s I_d) + \lambda_s^2 u_s I_d) (\gamma \tau_s e_s \Sigma_s + \lambda_s I_d)^{-2} \\
    &- 2 \gamma e_s \tau_s \ntrace \Theta_s \Sigma_s^2 (\gamma \tau_s e_s \Sigma_s + \lambda_s I_d)^{-1} \\
    &\to \ntrace \Theta_s \Sigma_s (\Sigma_s^2 + 2 \theta_0 \Sigma_s + \theta_0^2 I_d) (\Sigma_s + \theta_0 I_d)^{-2} \\
    &+ \ntrace \Theta_s \Sigma_s (\Sigma_s^2) (\Sigma_s + \theta_0 I_d)^{-2} \\
    &+ \ntrace \Theta_s \Sigma_s ((\rho_0 / \tau_0^2) \Sigma_s / \gamma) (\Sigma_s + \theta_0 I_d)^{-2} \\
    &+ \ntrace \Theta_s \Sigma_s (\theta_0^2 u_0 I_d) (\Sigma_s + \theta_0 I_d)^{-2} \\
    &+ \ntrace \Theta_s \Sigma_s (-2 \Sigma_s^2 - 2 \theta_0 \Sigma_s) (\Sigma_s + \theta_0 I_d)^{-2} \\
    &= \theta_0^2 (u_0 + 1) \ntrace \Theta_s \Sigma_s (\Sigma_s + \theta_0 I_d)^{-2} + (1 / \gamma) (\rho_0 / \tau_0^2) \ntrace \Theta_s \Sigma_s^2 (\Sigma_s + \theta_0 I_d)^{-2}.
\end{align}

Again, plugging in $I_{1, 1} (\theta_0) = \eta_0$, we have that:
\begin{align}
    B_s (\widehat f_s) &\to \begin{cases}
        \frac{\theta_0 \ntrace \Theta_s \Sigma_s (\Sigma_s + \theta_0 I_d)^{-1}}{1 - \psi_s}, &\gamma,\psi_s < 1\\
        \frac{\theta_0^2 \ntrace \Theta_s \Sigma_s (\Sigma_s + \theta_0 I_d)^{-2}}{1 - \phi_s} = 0, &\psi_s < 1, \gamma \geq 1 \text{ or } 1 \leq \psi_s \leq \gamma\\
        \frac{\theta_0^2 \ntrace \Theta_s \Sigma_s (\Sigma_s + \theta_0 I_d)^{-2}}{1 - \phi_s I_{2, 2} (\theta_0)} + \frac{\theta_0 \ntrace \Theta_s \Sigma_s (\Sigma_s + \theta_0 I_d)^{-1}}{\psi_s - 1}, &\psi_s \geq 1, \psi_s \geq \gamma
    \end{cases},
\end{align}
where we have used that $\ntrace \Theta_s \Sigma_s^2 (\Sigma_s + \theta_0 I_d)^{-2} = \ntrace \Theta_s \Sigma_s (\Sigma_s + \theta_0 I_d)^{-1} - \theta_0 \ntrace \Theta_s \Sigma_s (\Sigma_s + \theta_0 I_d)^{-2}$ and in the second case, $\theta_0 = 0$ and $I_{2, 2} (\theta_0) = 1$.
\end{proof}

\clearpage

\section{Experimental Details}

\subsection{Synthetic Experiments}
\label{sec:common-exp-details}

Across all experiments on synthetic data, we choose $n = 400$. We further use 5 runs to estimate test risks (e.g., $\mathbb E R_s (\widehat f), \mathbb E R_s (\widehat f_s)$), and 5 runs to capture the variance of the estimators, for a total of 25 runs. We use 10,000 samples to estimate test risks.

Our experiments validate that bias amplification occurs even in low-dimensional regimes. In Sections \ref{sec:bias-amp-exp} and \ref{sec:min-group-exp}, and Appendices \ref{sec:bias-amp-plots}, \ref{sec:bias-amp-training}, and \ref{sec:minority-plots}, we show that our theory predicts bias amplification for models trained on only $n = 400$ samples. The high-dimensional regime is commonly studied in ML theory and statistical physics (as we mention in Section \ref{sec:rw-main}), as it makes precise analysis more tractable.

\paragraph{Setup for Section \ref{sec:bias-amp-iso-setup}.} We further choose $\lambda = 1 \times 10^{-6}$ to approximate the minimum-norm interpolator; we henceforth set $\lambda = \lambda_1 = \lambda_2$ for simplicity. We modulate $a_1, a_2, \sigma_1^2, \sigma_2^2$, as well as $\psi$ (rate of parameters to samples) and $\phi$ (rate of features to samples) to understand the effects of model size, number of features, and sample size on bias amplification. We consider diverse and dense values of these variables to obtain a clear picture of when and how models amplify bias.

\paragraph{Setup for Section \ref{sec:spurious-diatomic-setup}.} The first $\pi d$ features represent common {\em core} features of groups 1 and 2 while the latter $(1 - \pi) d$ features capture unshared {\em extraneous} features for group 2 (e.g., spurious features). Intuitively, this setting can model: (1) learning from data from two groups where one group suffers from spurious features \citep{sagawa2020investigation}, or (2) learning from a mixture of raw data (i.e., with spurious features) and clean data (i.e., without spurious features) for a single population \citep{Khani2021Spurious}. We ask: Does our theory predict how the inclusion of different amounts of extraneous features affect the test risk of a minority group (compared to the majority group) when a single model is trained on data from both groups vs. a separate model is trained per group?

Although \citet{sagawa2020investigation} consider classification instead of regression, to mirror their experimental setting, we pick $p_1 = 0.9$ (i.e., group 1 is much larger than group 2) and $\Theta = I_d, \Delta = 0$ (i.e., $w_1^* = w_2^*$). We additionally choose $\lambda = 1 \times 10^{-6}$ and $\sigma_1^2 = \sigma_2^2 = 1$. We modulate $a_1, b_2$, as well as $\psi$ (rate of parameters to samples) and $\phi$ (rate of features to samples). Notably, this setting also captures learning problems with $o(d)$ overlapping core and extraneous features in our asymptotic scaling limit. An extremization of this setting is choosing $\Sigma_1 = a_1 I_{\pi d} \oplus 0 I_{(1-\pi)d}, \Sigma_2 = 0 I_{\pi d} \oplus b_2 I_{(1-\pi)d}$, where groups 1 and 2 have no overlapping features.

The experiments in Section \ref{sec:min-group-exp} validate that our analysis does not rely on conditional dependence heterogeneity. That is, we empirically verify that our theory still holds and predicts bias amplification occurs even when $w^*_1 = w^*_2$ (see Figure \ref{fig:spurious-plots}). In essence, the structure and eigenspectra of the covariance matrices of the two groups still contribute to bias amplification even when the ground-truth weights for the groups are the same. In our theory, we only allow the possibility of $w^*_1 \neq w^*_2$ to be as general as possible. In practice, labeling rule heterogeneity may be leveraged, for example, to train a mixture of experts that is regularized to deamplify bias.

\paragraph{Extraneous vs. Spurious Features.} \RevisionFour{Our usage of extraneous features (i.e., features that are different across groups and correlated with labels) differs from classical definitions of spuriousness (i.e., non-causal correlations between features and labels) \citep{bell2024multipledimensionsspuriousnessmachine}; indeed, the extraneous features are used to generate the labels of the minority group. For example, \citet{Khani2021Spurious} model both the labels and spurious features in linear regression as being separately generated by the core features, such that the labels and spurious features are associated but not causally related. However, this setup is not encompassed by our modeling assumptions, as it entails that the ground-truth parameter and feature covariance matrices are not jointly diagonalizable. In contrast, \citet{sagawa2020investigation} study spurious correlations in classification. At a high level, \citet{sagawa2020investigation} create four subgroups of a population with different combinations of class labels $y \in \{-1, 1\}$ and group labels $a \in \{-1, 1\}$. The core and spurious features are then sampled from normal distributions parameterized by $y, a$ (respectively) and different variance levels. By setting the spurious features to have a significantly lower variance than the core features and making $y$ and $a$ highly associated (i.e., imbalanced groups), the authors coerce models to perform classification as a function of primarily the spurious features of the majority group, which does not generalize to the minority group. To capture this spirit, our setup uses imbalanced groups, and the data for the majority group provides no learning signal for the extraneous features; this coerces models to perform regression as a function of primarily the core features, without learning appropriate parameters for the extraneous features, and thus generalize poorly to the minority group.}

\subsection{Colored MNIST Experiments}
\label{sec:colored-mnist-details}

\textbf{Train-test split.} Colored MNIST has a total of 60k instances. Each image is $28 \times 28 \times 3$ pixels. We use the prescribed 0.67-0.33 train-test split. We do not perform validation of hyperparameters, which we mostly adopt\footnote{\url{https://colab.research.google.com/github/reiinakano/invariant-risk-minimization/blob/master/invariant_risk_minimization_colored_mnist.ipynb}}.

\textbf{Model architecture.} By default, our CNN architecture consists of: (1) a convolutional layer (3 in-channels, 20 out-channels, kernel size of 5, stride of 1); (2) a max pooling layer (kernel size of 2, stride of 2); (3) a second convolutional layer (20 in-channels, 50 out-channels, kernel size of 5, stride of 1); (4) a second max-pooling layer (kernel size of 2, stride of 2); (5) a fully-connected layer ($\mathbb R^{800} \to \mathbb R^{500}$); and (6) a second fully-connected layer ($\mathbb R^{500} \to \mathbb R^1$).

\textbf{Model training.} We train each model with a batch size of 250 for a single epoch with respect to groups (i.e., 80 training steps given there are two groups). We use a cross-entropy loss and the Adam optimizer with learning rate 0.01. We run all experiments on a single NVIDIA L40S. We report our results over 10 random seeds.

\clearpage

\section{Bias Amplification Plots}
\label{sec:bias-amp-plots}

\begin{figure}[!ht]
    \centering
    \includegraphics[width=\linewidth]{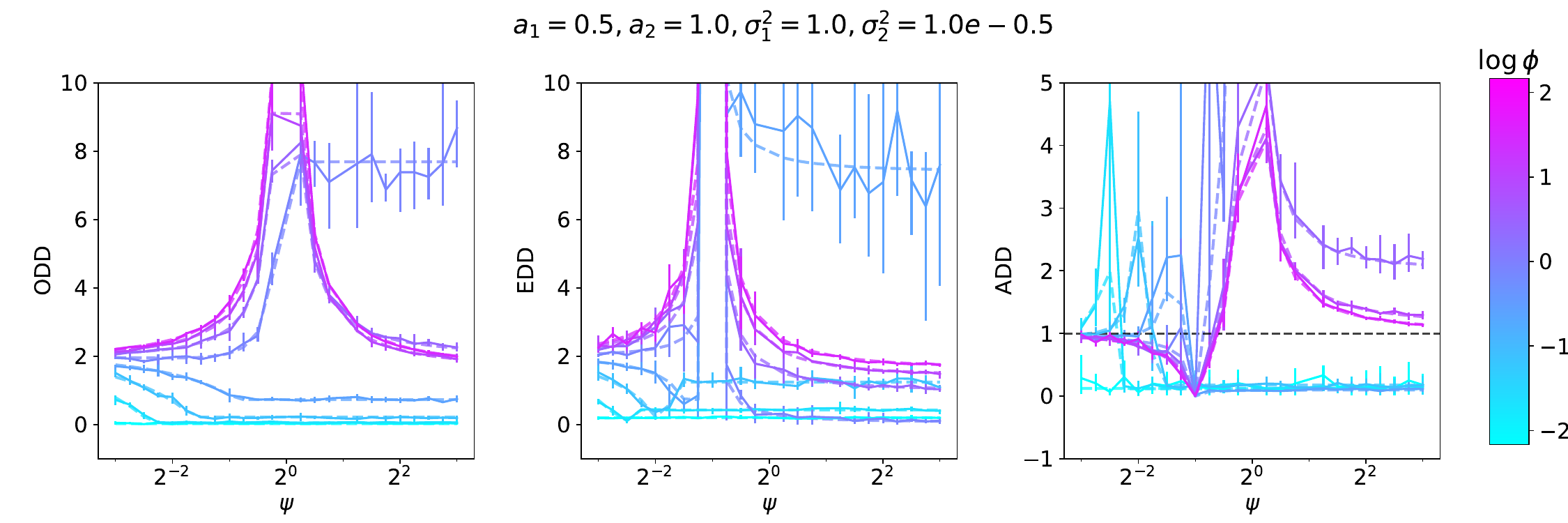}
    \includegraphics[width=\linewidth]{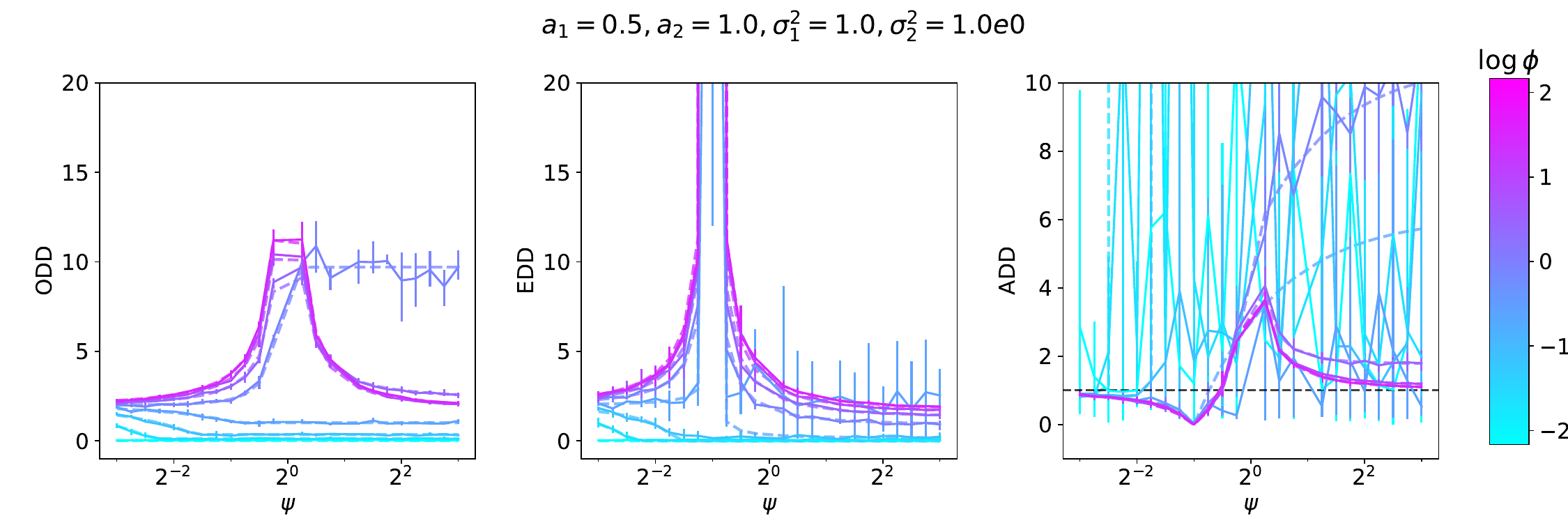}
    \includegraphics[width=\linewidth]{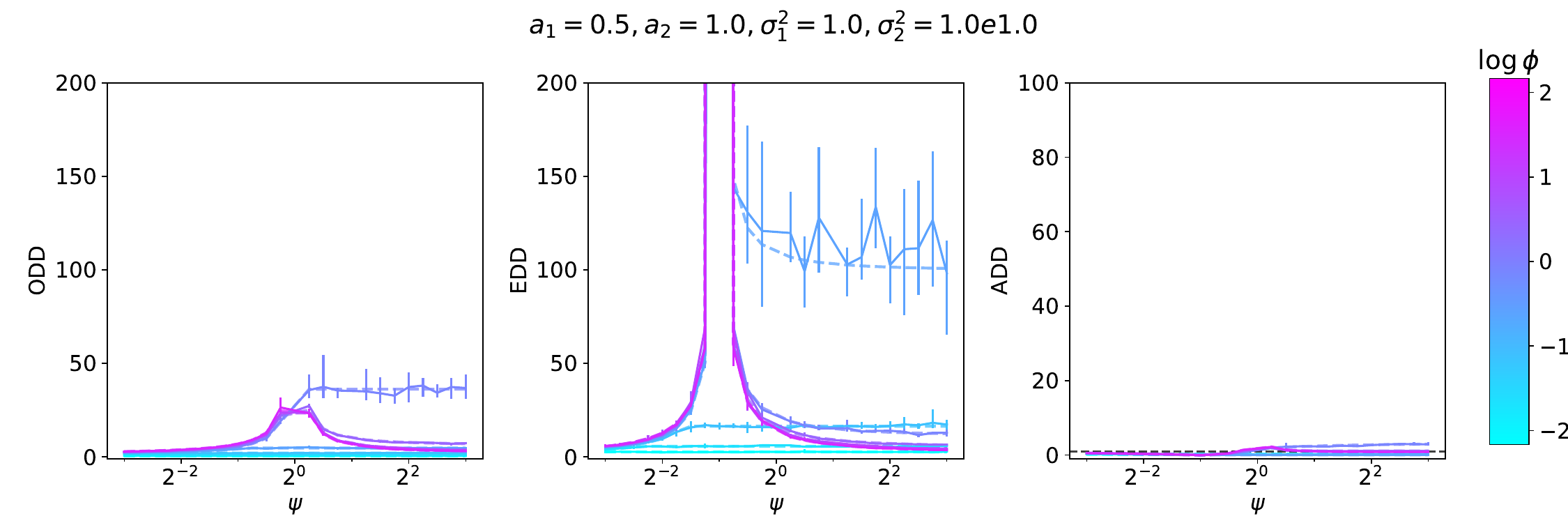}
   \caption{We empirically demonstrate that bias amplification occurs and validate our theory (Theorems \ref{thm:odd-theory-rand-proj} and \ref{thm:edd-theory-rand-proj}) for $ODD$, $EDD$, and $ADD$ under the setup described in Section \ref{sec:bias-amp-iso-setup}. The solid lines capture empirical values while the corresponding lower-opacity dashed lines represent what our theory predicts. We plot $ODD$ and $EDD$ on the same scale for easy comparison, and include a black dashed line at $ADD = 1$ to contrast bias amplification vs. deamplification. The error bars capture the range of the estimators over 25 random seeds.}
\end{figure}

\clearpage

\begin{figure}[!ht]
    \centering
    \includegraphics[width=\linewidth]{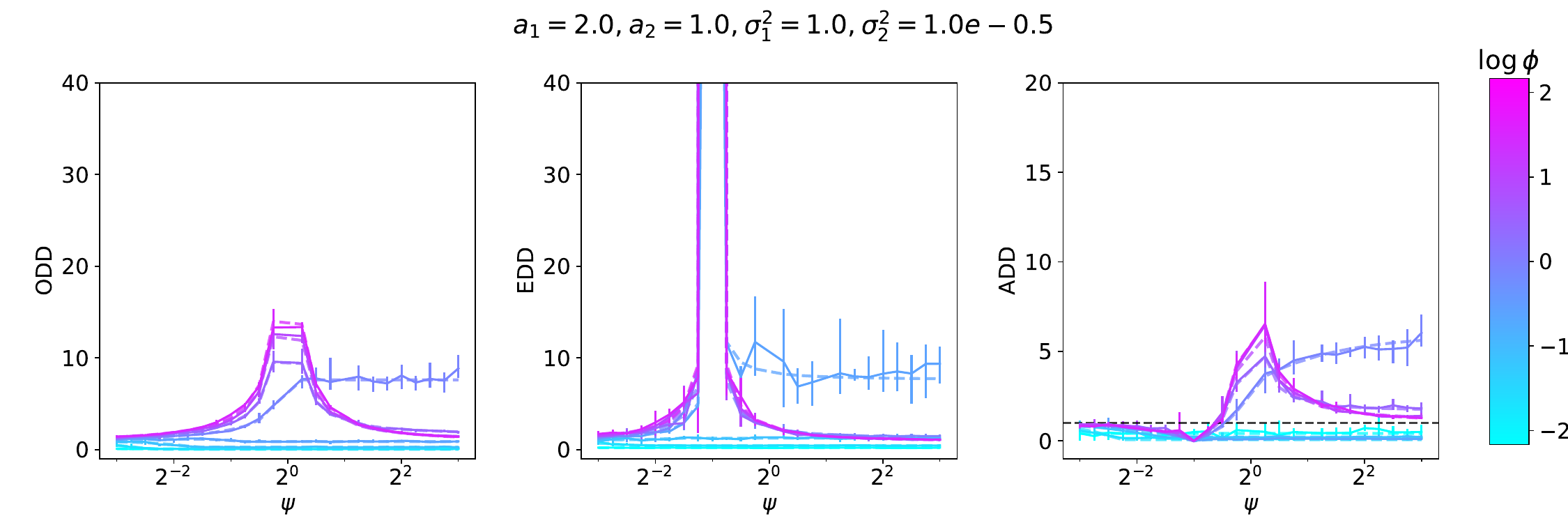}
    \includegraphics[width=\linewidth]{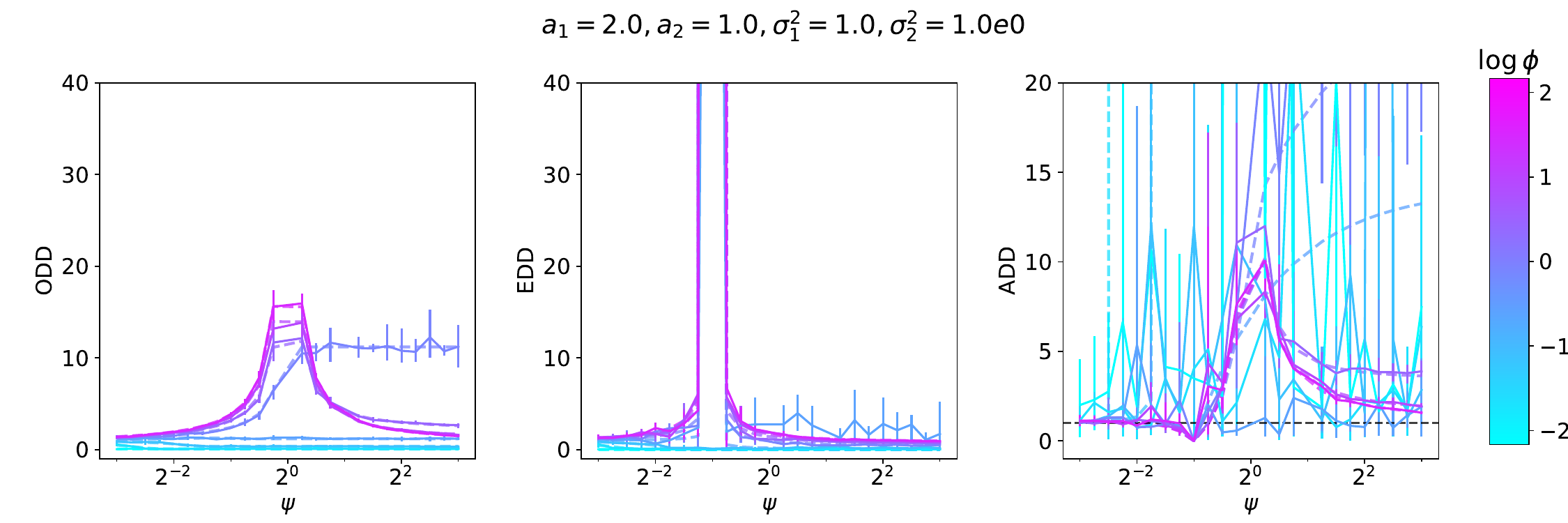}
    \includegraphics[width=\linewidth]{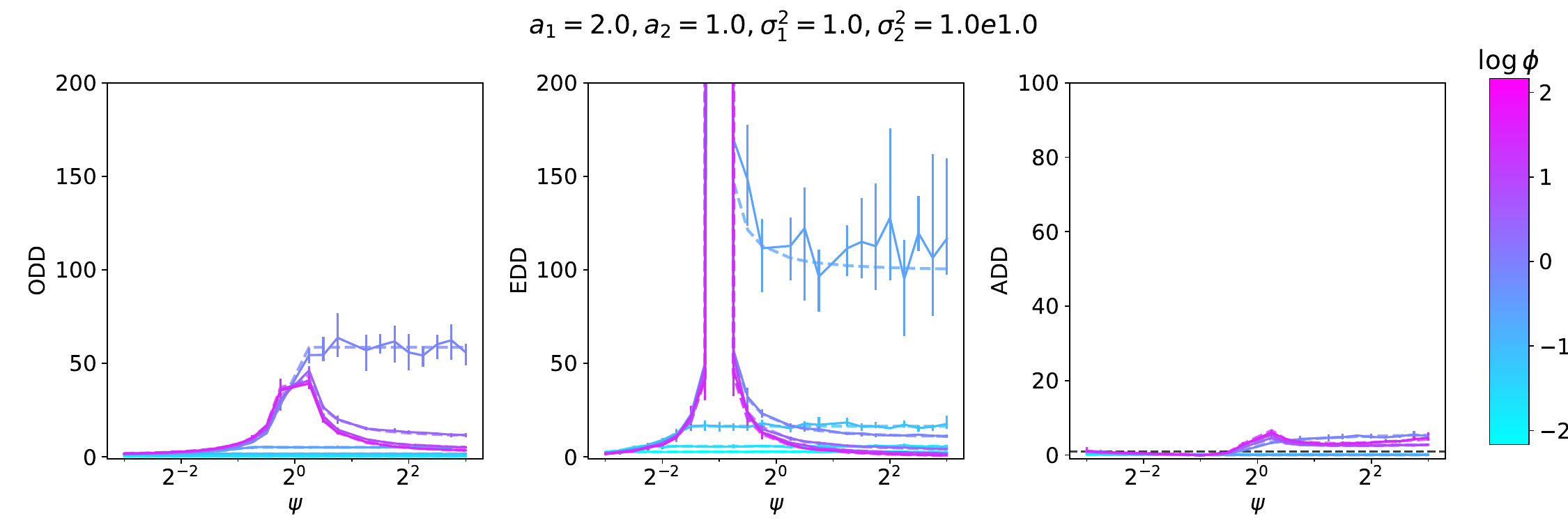}
    \caption{We empirically demonstrate that bias amplification occurs and validate our theory (Theorems \ref{thm:odd-theory-rand-proj} and \ref{thm:edd-theory-rand-proj}) for $ODD$, $EDD$, and $ADD$ under the setup described in Section \ref{sec:bias-amp-iso-setup}. The solid lines capture empirical values while the corresponding lower-opacity dashed lines represent what our theory predicts. We plot $ODD$ and $EDD$ on the same scale for easy comparison, and include a black dashed line at $ADD = 1$ to contrast bias amplification vs. deamplification. The error bars capture the range of the estimators over 25 random seeds.}
\end{figure}

\clearpage

\begin{figure}[!ht]
    \centering
    \includegraphics[width=\linewidth]{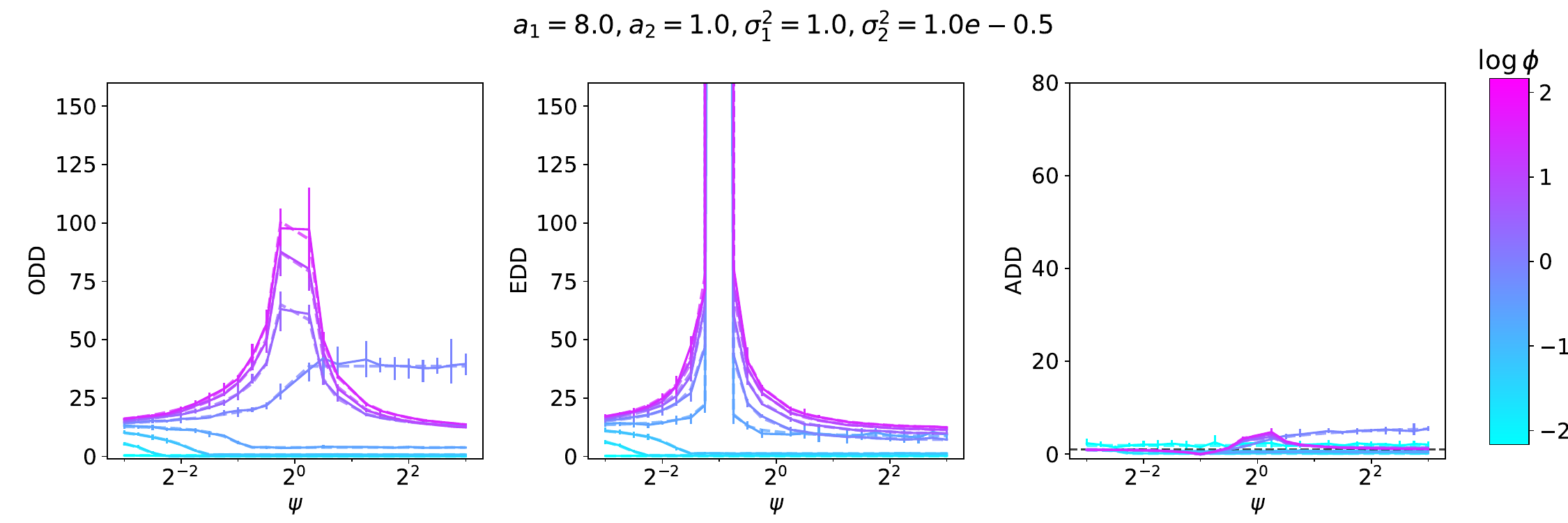}
    \includegraphics[width=\linewidth]{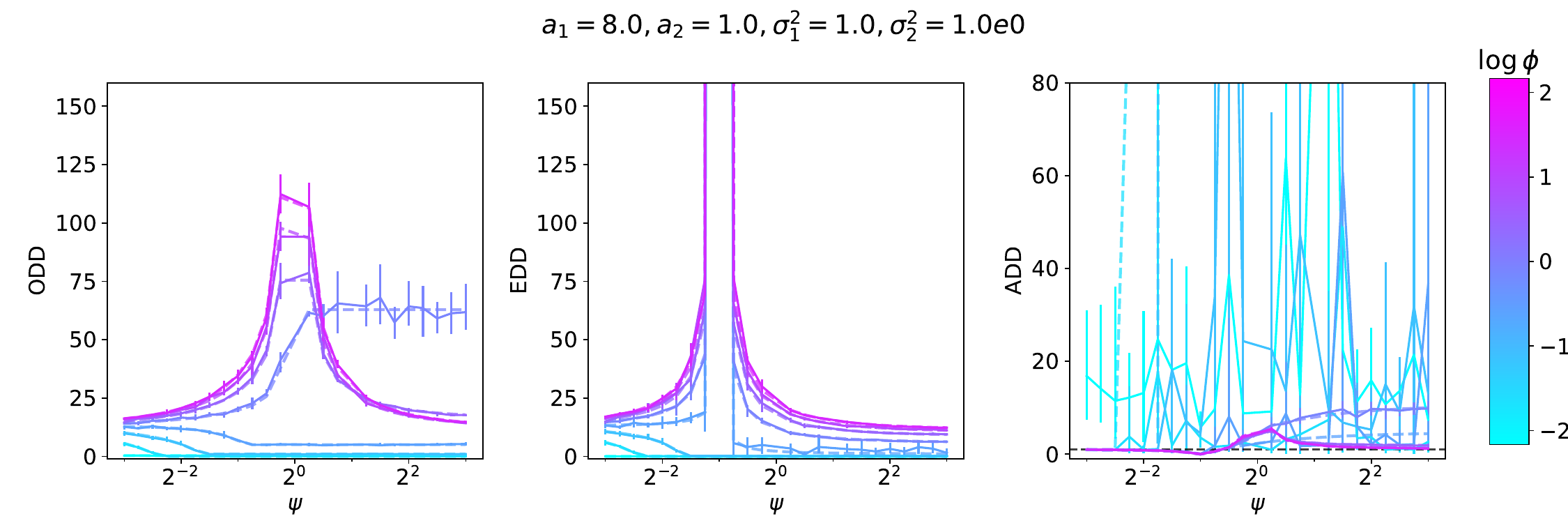}
    \includegraphics[width=\linewidth]{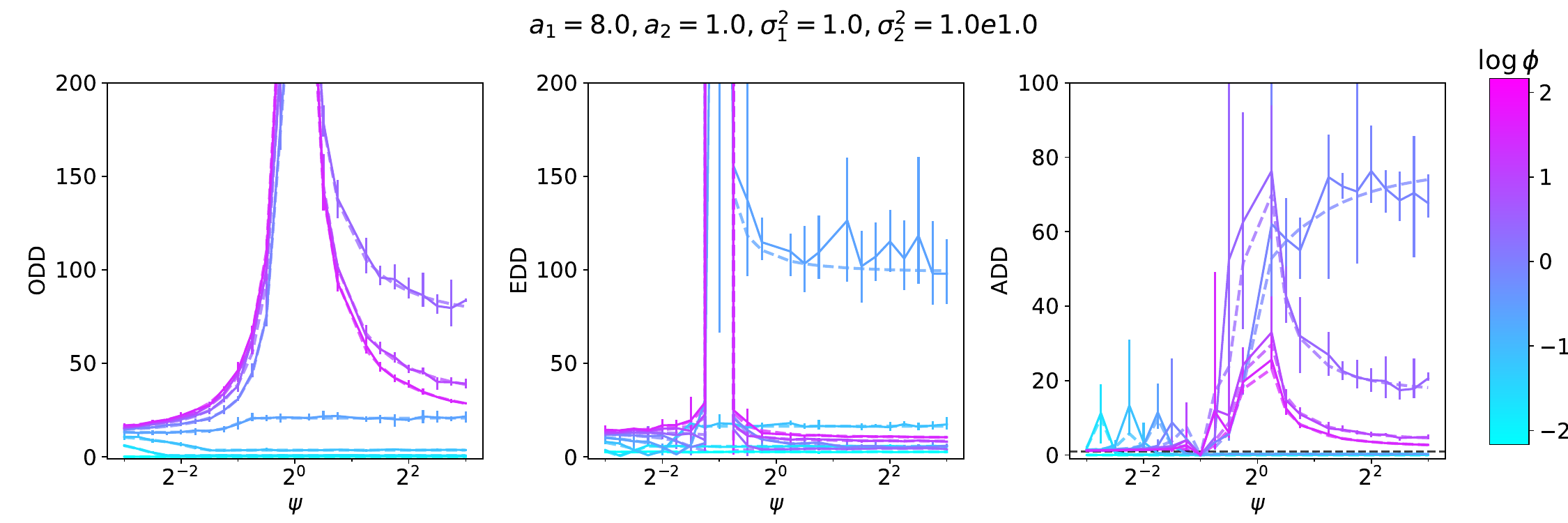}
    \caption{We empirically demonstrate that bias amplification occurs and validate our theory (Theorems \ref{thm:odd-theory-rand-proj} and \ref{thm:edd-theory-rand-proj}) for $ODD$, $EDD$, and $ADD$ under the setup described in Section \ref{sec:bias-amp-iso-setup}. The solid lines capture empirical values while the corresponding lower-opacity dashed lines represent what our theory predicts. We plot $ODD$ and $EDD$ on the same scale for easy comparison, and include a black dashed line at $ADD = 1$ to contrast bias amplification vs. deamplification. The error bars capture the range of the estimators over 25 random seeds.}
\end{figure}

\clearpage

\section{Power-Law Covariance}
\label{sec:bias-amp-pow-setup}

To better understand how $\phi$ (rate of features to samples) and the label noise ratio $c$ affect bias amplification, we derive explicit phase transitions in the bias amplification profile of unregularized ridge regression with random projections in terms of these quantities. We consider the setting of power-law covariance, as it is analytically tractable and can be translated to the case of wide neural networks \citep{Caponnetto2007OptimalRF,Cui_2022, maloney2022solvable}, where the exponents can be empirically gauged. Let the eigenvalues $\lambda_k^{(s)}$ of $\Sigma_s$ have power-law decay, i.e., $\lambda_k^{(s)} = k^{-\beta_s}$, for all $k$ and some positive constants $\beta_1$ and $\beta_2$. WLOG, we will assume $\beta_1 > \beta_2$. Note that $\beta_s$ controls the effective dimension and ultimately the difficulty of fitting the noiseless part of the signal from group $s$. If $\beta_s$ is large, then all the information is concentrated in a few features, and so the learning problem is easier. We similarly assume that the eigenvalues $\mu_k$ of $\Delta$ have power-law decay $\mu_k = k^{-\alpha}$, for all $k$ and constant $\alpha > 0$. Finally, we consider balanced groups (i.e., $p_1 = p_2 = 1/2$). Under this setup, we have the following corollary.

\begin{corollary}
Suppose that in the single model setting, as $\lambda \to 0^+$, $(e_1,e_2,\tau,u_1,u_2,\rho)$ is the unique positive solution to the following fixed-point equations:
\begin{gather}
1/\tau = 1 + 1 / (\gamma \tau),\quad 1/e_s = 1 + \phi \ntrace \Sigma_s L^{-1},\text{ for }s\in\{1,2\}, \\
\rho = 0,\quad u_s = \phi e_s^2 \ntrace \Sigma_s D L^{-2},\text{ for }s\in\{1,2\},\\
\text{ where: }
L = p_1 e_1 \Sigma_1 + p_2 e_2 \Sigma_2,\, D = p_1 u_1 \Sigma_1 + p_2 u_2 \Sigma_2 + B.
\end{gather}

Furthermore, suppose $\psi_s < 1, \gamma \geq 1 \text{ or } 1 \leq \psi_s \leq \gamma.$ Under the assumptions of Theorem \ref{thm:odd-theory-rand-proj} and Assumption \ref{ass:spectral}, as $\lambda \to 0^+$, we have the following approximate analytical phase transitions in the bias amplification profile of ridge regression with random projections:
\begin{align}
    \lim_{\substack{d,n_1,n_2 \to \infty\\\phi_{1,2}\to 2\phi}}ADD \to \frac{c}{|c-1|}, &\quad
    \lim_{c \to 0^+} \lim_{\substack{d,n_1,n_2 \to \infty\\\phi_{1,2}\to 2\phi}} ADD \to 0,\\
    \lim_{c \to \infty} \lim_{\substack{d,n_1,n_2 \to \infty\\\phi_{1,2}\to 2\phi}} ADD \to 1,&\quad
    \lim_{c \to 1}\lim_{\substack{d,n_1,n_2 \to \infty\\\phi_{1,2}\to 2\phi}} ADD \to \infty.
\end{align}
\label{corr:bias-amp-phase-transitions}
\end{corollary}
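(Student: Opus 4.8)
The plan is to evaluate $ADD = ODD/EDD$ by computing numerator and denominator separately in the stated regime; only the first displayed limit ($ADD \to c/|c-1|$) carries content, since the remaining three follow from $\lim_{t\to 0^+} t/|t-1| = 0$, $\lim_{t\to\infty} t/|t-1| = 1$, and $\lim_{t\to 1} t/|t-1| = \infty$ applied with $t = c$.

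First I would dispatch $EDD$ via Corollary \ref{corr:unregularized-edd}. The hypothesis $\psi_s < 1,\ \gamma \geq 1$ or $1 \leq \psi_s \leq \gamma$ is precisely the middle branch there, in which $\eta_0 = 1$ forces $\theta_0 = 0$ (the unique root of $I_{1,1}(\theta_0) = 1$), so $B_s(\widehat f_s) = 0$ and $V_s(\widehat f_s) = \sigma_s^2\phi_s/(1-\phi_s)$. Balancedness gives $\phi_1 = \phi_2 = 2\phi$ (with $\phi_s < 1$ on this branch), hence
\[
EDD = \bigl|V_2(\widehat f_2) - V_1(\widehat f_1)\bigr| = \frac{2\phi}{1-2\phi}\,\bigl|\sigma_2^2 - \sigma_1^2\bigr| = \frac{2\phi\,\sigma_1^2}{1-2\phi}\,|c-1|.
\]

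Next I would compute $ODD$ from Theorem \ref{thm:odd-theory-rand-proj}. In the unregularized limit the constants $(e_1,e_2,\tau,u_1,u_2,\rho)$ of Definition \ref{def:cosmo-constants} collapse to the simplified system displayed in the corollary (notably $\rho = 0$ and $\tau = 1 - 1/\gamma$), and under Assumption \ref{ass:spectral} every $\ntrace(\cdot)$ appearing in the auxiliary quantities $h_j^{(i)}$ becomes an integral against the joint limiting density of the spectra of $\Sigma_2\Sigma_1^{-1}$ and $\Sigma_1$ and the limiting spectral density of $\Delta$; the power-law parametrizations $\lambda_k^{(s)} = k^{-\beta_s}$, $\mu_k = k^{-\alpha}$ make these explicit one-dimensional integrals, and commutativity (Assumption \ref{ass:commute}) lets everything be handled on a common eigenbasis. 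The structural fact to verify is that, in this benign regime with $p_1 = p_2 = 1/2$, the bias contributions to $R_1(\widehat f)$ and $R_2(\widehat f)$ — i.e., $\ntrace\Theta_s\Sigma_s$ together with the $h^{(1)},h^{(3)},h^{(4)}$ terms and the $s=2$ correction — as well as the $\sigma_1^2$-weighted pieces of $V_s(\widehat f) = \sum_j \sigma_j^2\phi\, h_j^{(2)}(I_d,\Sigma_s)$, cancel in the difference $R_2(\widehat f) - R_1(\widehat f)$ up to lower order, so that $ODD$ is carried entirely by the $\sigma_2^2$-weighted variance gap. Evaluating that gap in the power-law model, the $\beta$- and $\alpha$-dependence drops out and one obtains $ODD = \frac{2\phi\,\sigma_1^2}{1-2\phi}\,c$; dividing by the $EDD$ above gives $ADD \to c/|c-1|$, and the three remaining limits follow.

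The main obstacle is that middle step: pushing the OVFPT fixed-point system of Definition \ref{def:cosmo-constants} through the power-law model far enough to (i) show that the per-group bias and group-$1$-noise contributions agree at leading order and (ii) extract the clean value of the surviving $\sigma_2^2$-term. This is the only place where genuine calculation is needed — substituting the spectral integrals, exploiting $p_1 = p_2$, and tracking the cancellations — while the $EDD$ side is a one-line consequence of Corollary \ref{corr:unregularized-edd} and the final $c$-limits are elementary.
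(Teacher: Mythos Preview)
Your overall architecture matches the paper's proof: compute $EDD$ from Corollary \ref{corr:unregularized-edd} (exactly as you do), compute $ODD$ from the spectral-integral form of the constants in Definition \ref{def:cosmo-constants} and Theorem \ref{thm:odd-theory-rand-proj}, then divide and read off the $c$-limits. The $EDD$ paragraph and the final sentence about elementary limits are correct and essentially verbatim what the paper does.

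Where your sketch diverges from the paper is in the mechanism behind the $ODD$ computation. You frame it as showing that bias terms and $\sigma_1^2$-weighted variance pieces \emph{cancel in the difference} $R_2(\widehat f)-R_1(\widehat f)$, leaving a $\sigma_2^2$-gap to evaluate. The paper does not argue this way: it observes that under $\beta_1>\beta_2$ the spectral ratios $r_k=\lambda_k^{(2)}/\lambda_k^{(1)}=k^{\beta_1-\beta_2}\to\infty$, so the limiting measure $\nu$ of Assumption \ref{ass:spectral} degenerates to a Dirac at $r=\infty$. Plugging $\nu=\delta_{r=\infty}$ into the integral representations collapses the fixed-point system explicitly ($e_1=1$, $e_2=1-\phi_2$, $u_1^{(1)}=u_2^{(1)}=u_1^{(2)}=0$, $u_2^{(2)}=\phi/(p_2(p_2-\phi))$) and forces $V_1(\widehat f)=0$ outright, together with the $\sigma_1^2$-piece of $V_2(\widehat f)$ vanishing---so the terms die individually, not merely in the difference. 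The bias terms likewise vanish because they reduce to $\ntrace\Delta\Sigma_s\to 0$ under the power law. This is the single observation that converts your ``main obstacle'' into a one-line evaluation at $r=\infty$, and without it your plan of ``substituting the spectral integrals and tracking the cancellations'' would be a much heavier brute-force exercise. You should name $\nu=\delta_{r=\infty}$ explicitly; once you do, the rest of your outline goes through unchanged.
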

We relegate the proof to Appendix \ref{sec:power-law-proof} and empirically assess the validity of this result in Figure \ref{fig:power-law-validation}. The phase transitions reveal that bias amplification peaks near $c = 1$, bias deamplification peaks when $c \to 0^+$, and bias is roughly neither amplified or deamplified when $c \to \infty$. Furthermore, the right tail of the $ODD$ profile (which Corollary \ref{corr:bias-amp-phase-transitions} predicts to be proportional to $c$) is higher than the left tail (i.e., 0) for larger $c$. However, the left tail of the $EDD$ profile (which Corollary \ref{corr:bias-amp-phase-transitions} predicts to be proportional to $|c - 1|$) does not increase steeply as $c \to 0^+$. Interestingly, in the proof of Corollary \ref{corr:bias-amp-phase-transitions}, we observe that the bias term depends on $\ntrace \Delta \Sigma_s$; therefore, the setting $\forall k, \lambda_k^{(s)} \geq 1 / \mu_k$ (e.g., common in learning from synthetic data \citep{dohmatob2024Demystified}) can prevent the bias term from vanishing or even cause it to explode. This may explain why training models on synthetic data (i.e., data previously generated by the model) may amplify unfairness \citep{wyllie2024fairness}.

\begin{figure}[!ht]
    \includegraphics[width=0.9\linewidth]{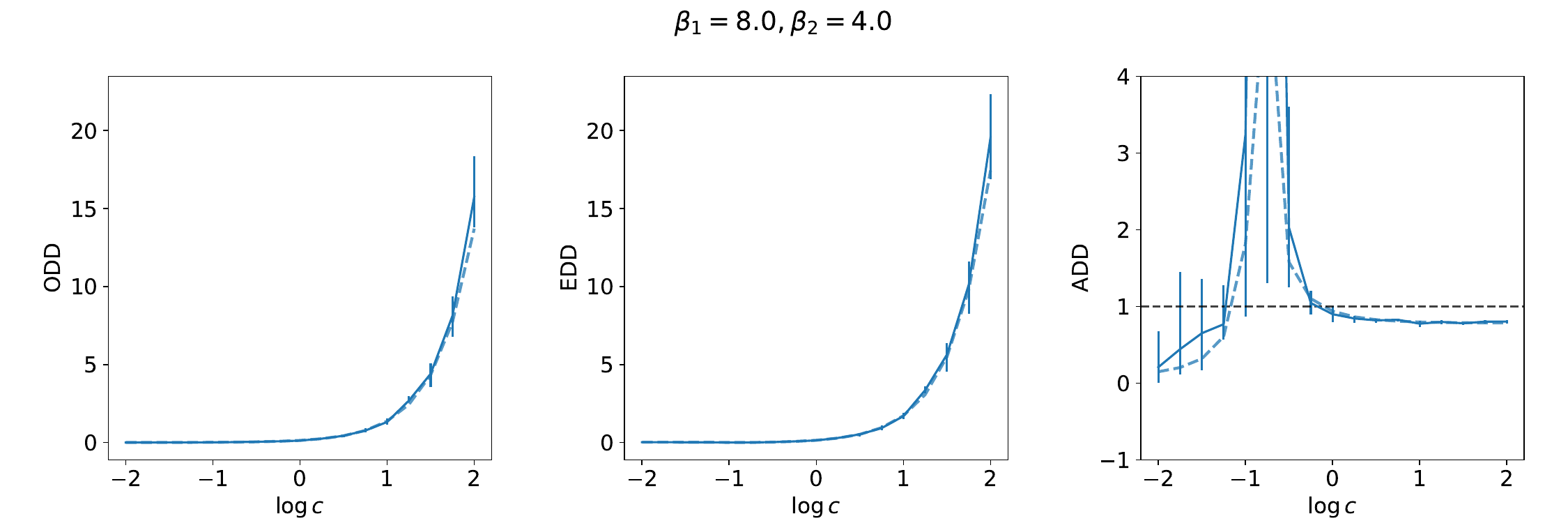}
    \caption{\textbf{Our theory predicts that bias amplification is larger for higher noise ratios than lower noise ratios.} We observe that Corollary \ref{corr:bias-amp-phase-transitions} generally predicts the $ADD$ profile with respect to the noise ratio $c$. The solid lines capture empirical values while the corresponding lower-opacity dashed lines represent what Theorem \ref{thm:odd-theory-rand-proj} predicts. We plot $ODD$ and $EDD$ on the same scale for easy comparison, and include a black dashed line at $ADD = 1$ to contrast bias amplification vs. deamplification. The error bars capture the range of the estimators over 25 random seeds. We consider the setup described in Appendix \ref{sec:bias-amp-pow-setup} with $\psi = 0.5$, $\phi = 0.2$, and $\lambda = 1 \times 10^{-6}$.}
    \label{fig:power-law-validation}
\end{figure}

\clearpage

\section{Proof of Corollary \ref{corr:bias-amp-phase-transitions}}
\label{sec:power-law-proof}

\begin{proof}
We begin by computing the $ODD$ in the limit $\lambda \to 0^+$. We define $u^{(s)}_j = u_j$ for $B = \Sigma_s$. By Assumption \ref{ass:spectral}, we can re-express the constants in Definition \ref{def:cosmo-constants} in terms of the limiting spectral densities of the covariance matrices:
\begin{gather}
e_1 = \frac{1}{1+\phi \int_0^\infty \frac{1}{p_1 e_1 + p_2 e_2 r}\mathrm d\nu(r)},
e_2 = \frac{1}{1+\phi \int_0^\infty \frac{r}{p_1 e_1 + p_2 e_2 r}\mathrm d\nu(r)},\\
\tau = \frac{1}{1+\frac{1}{\gamma \tau}} = 1 - 1 / \gamma, \rho = 0, \\
u^{(1)}_1 = \phi e_1^2 \int_0^\infty \frac{u^{(1)}_1 p_1 + u^{(1)}_2 p_2 r + 1}{(p_1 e_1 + p_2 e_2 r)^2}\mathrm d\nu(r), u^{(1)}_2 = \phi e_2^2 \int_0^\infty \frac{u^{(1)}_1 p_1 r + u^{(1)}_2 p_2 r^2 + r}{(p_1 e_1 + p_2 e_2 r)^2}\mathrm d\nu(r), \\
u^{(2)}_1 = \phi e_1^2 \int_0^\infty \frac{u^{(2)}_1 p_1 + u^{(2)}_2 p_2 r + r}{(p_1 e_1 + p_2 e_2 r)^2}\mathrm d\nu(r), u^{(2)}_2 = \phi e_2^2 \int_0^\infty \frac{u^{(2)}_1 p_1 r + u^{(2)}_2 p_2 r^2 + r^2}{(p_1 e_1 + p_2 e_2 r)^2}\mathrm d\nu(r).
\end{gather}

Since $\beta_1 > \beta_2$, $-\beta_2 - (-\beta_1) > 0$. As such, for $d \to \infty$, the ratios $r_k = \lambda_k^{(2)}/\lambda_k^{(1)}$ have the approximate limiting distribution $\nu=\delta_{r=\infty}$, i.e., a Dirac atom at infinity. Thus:
\begin{gather}
e_1 = 1,
e_2 = 1 - \frac{\phi}{p_2} = 1 - \phi_2, \tau = 1 - 1 / \gamma, \rho = 0, \\
u^{(1)}_1 = 0, u^{(1)}_2 = 0, u^{(2)}_1 = 0, u^{(2)}_2 = \frac{\phi}{p_2 (p_2 - \phi)}.
\end{gather}

Now, we can re-express the variance terms as:
\begin{align}
V_1 (\widehat f) &= \phi \sigma_1^2 \int_0^\infty \frac{p_1}{(p_1 + p_2 e_2 r)^2} \mathrm d\nu(r) + \phi \sigma_2^2 \int_0^\infty \frac{p_2 e_2 r}{(p_1 + p_2 e_2 r)^2} \mathrm d\nu(r) = 0, \\
V_2 (\widehat f) &= \phi \sigma_1^2 \int_0^\infty \frac{p_1 r + p_1 p_2 u_2^{(2)} r}{(p_1 + p_2 e_2 r)^2} \mathrm d\nu(r) + \phi \sigma_2^2 \int_0^\infty \frac{p_2 e_2 r^2}{(p_1 e_1 + p_2 e_2 r)^2} \mathrm d\nu(r) = \frac{\sigma_2^2 \phi}{p_2 - \phi}.
\end{align}

Likewise, we can re-express the bias terms as:
\begin{gather}
B_1 (\widehat f) = \int_0^\infty \int_0^\infty \int_0^\infty \frac{a \delta e_2^2 p_2^2 r^2}{(e_1 p_1 + e_2 p_2 r)^2} \mathrm d\mu(r, a) d\pi(\delta) = \int_0^\infty \int_0^\infty a \delta \mathrm \, d\mu(a) d\pi(\delta) = 0, \\
B_2 (\widehat f) = 0.
\end{gather}
In this calculation, we observe that the adversarial setting $\forall k, \lambda_k^{(1)} \geq 1 / \mu_k$ can prevent the bias term from vanishing. Putting these pieces together and recalling that $p_2 = 1 / 2$:
\begin{gather}
    ODD \to \left| V_1(\widehat f) - V_2(\widehat f) \right| = \frac{2 \phi \sigma_1^2}{1 - 2 \phi} c.
\end{gather}

We now compute the $EDD$. We can once again re-express the constants in Definition \ref{def:cosmo-constants-edd} in terms of the limiting spectral densities of the covariance matrices:
\begin{align}
e_s = \frac{1}{1+\phi_s / e_s} = 1 - \phi_s, \tau_s = 1 - 1 / \gamma.
\end{align}
By Corollary \ref{corr:unregularized-edd}, because $\psi_s < 1, \gamma \geq 1 \text{ or } 1 \leq \psi_s \leq \gamma$,  $B_s (\widehat f_s) = 0$ and $V_s (\widehat f_s) = \frac{\sigma_s^2 \phi_s}{1 - \phi_s}$.

Therefore, because $\phi = p_s \phi_s$:
\begin{gather}
    EDD \to \left| V_1(\widehat f_1) - V_2(\widehat f_2) \right| = \frac{2 \phi}{1 - 2 \phi} \left| \sigma_1^2 - \sigma_2^2 \right| = \frac{2 \phi \sigma_1^2}{1 - 2 \phi} \left| c - 1 \right|, \\
    ADD \to \frac{c}{|c - 1|}.
\end{gather}
\end{proof}

\clearpage

\section{Bias Amplification During Training}
\label{sec:bias-amp-training}

\begin{figure}[!ht]
    \centering
    \includegraphics[width=\linewidth]{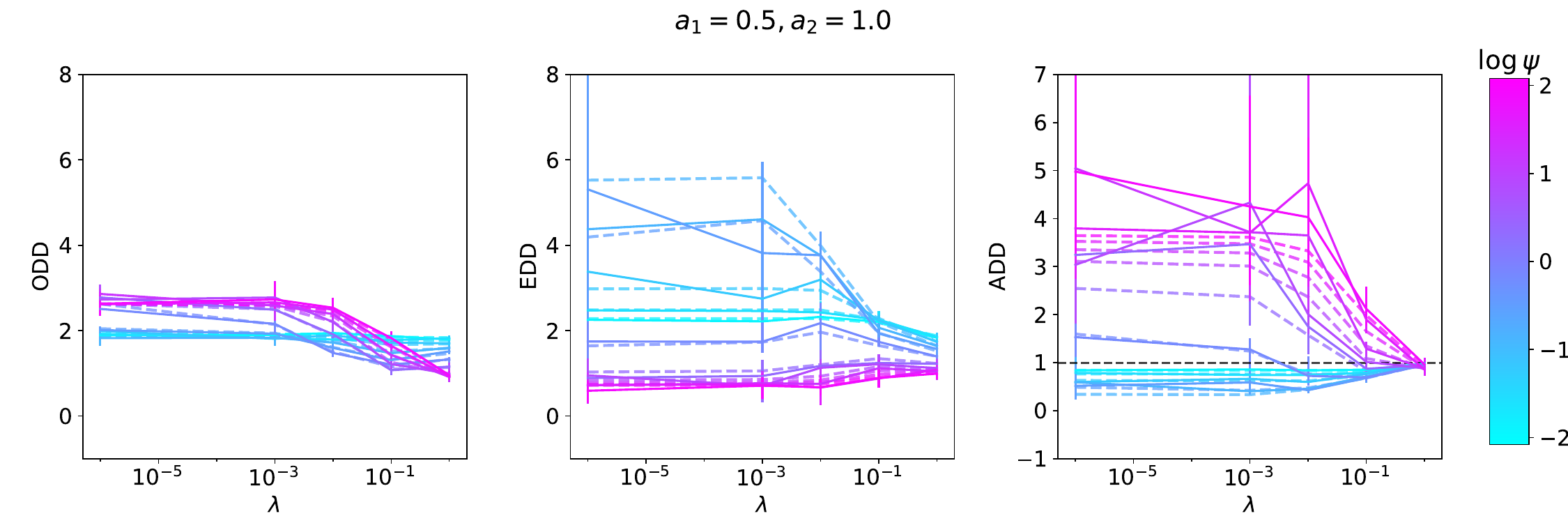}
    \includegraphics[width=\linewidth]{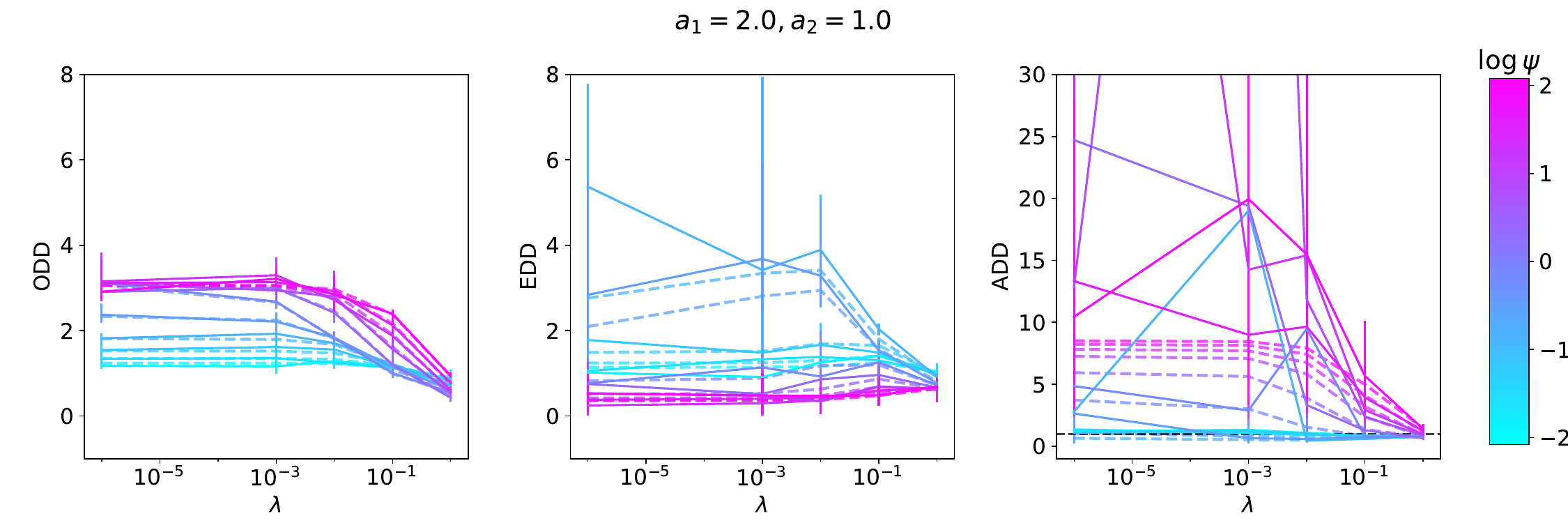}
    \includegraphics[width=\linewidth]{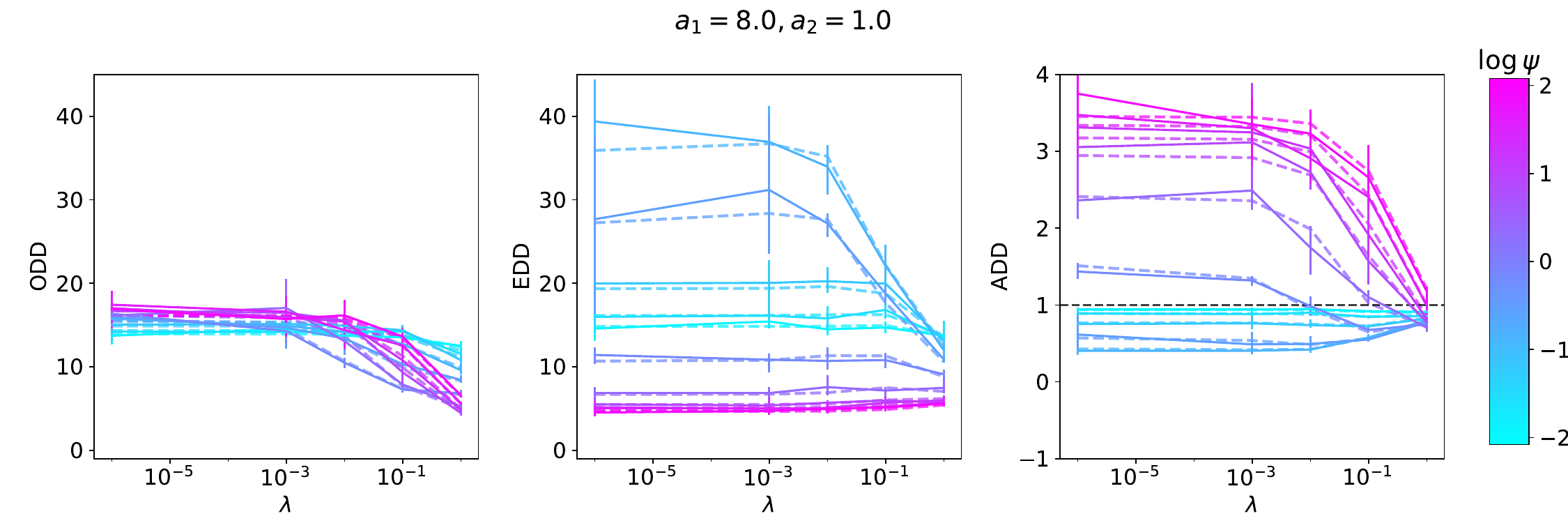}
    \caption{\textbf{Our theory reveals that there may be an optimal regularization penalty to deamplify bias.} We empirically demonstrate that bias amplification can be heavily affected by $\lambda$ and validate our theory (Theorems \ref{thm:odd-theory-rand-proj} and \ref{thm:edd-theory-rand-proj}) for $ODD$, $EDD$, and $ADD$ under the setup described in Section \ref{sec:over-time-setup}. The solid lines capture empirical values while the corresponding lower-opacity dashed lines represent what our theory predicts. We include a black dashed line at $ADD = 1$ to contrast bias amplification vs. deamplification. The error bars capture the range of the estimators over 25 random seeds.}
    \label{fig:bias-amp-training}
\end{figure}

\clearpage

\section{Colored MNIST Plots}
\label{sec:add-cmnist-plots}

We further assess the applicability of our conclusions about the effects of label noise (Figures \ref{fig:colored-mnist}, \ref{fig:colored-mnist-1-errors}) and model size (Figure \ref{fig:colored-mnist-2-errors}) on bias amplification for Colored MNIST. Please see Section \ref{sec:reg-training-dynamics} for a discussion of Figure \ref{fig:colored-mnist}. We observe in Figure \ref{fig:colored-mnist-1-errors} that as we increase the label noise ratio, the $EDD$ generally increases, while the $ODD$ remains relatively low, which is suggested by our theoretical reasoning in Section \ref{sec:reg-training-dynamics}. Furthermore, in Figure \ref{fig:colored-mnist-2-errors}, as the hidden dimension $m$ of the penultimate layer of the CNN increases, the $ODD$ appears to decrease and plateau, which is predicted by our theoretical results (see Section \ref{sec:bias-amp-iso-results}) in the Colored MNIST regime where $\phi < 1$. However, the $EDD$ does not appear to decrease; while this is plausibly predicted by our theory, it requires going beyond our simplistic assumption that $\Sigma_1$ roughly coincides with $\Sigma_2$ and studying the interplay between $\phi_s, \psi_s, \Sigma_s$ for each group $s$ (as suggested by Appendix \ref{sec:bias-amp-plots}).

\begin{figure}[!ht]
    \centering
    \includegraphics[width=0.35\linewidth]{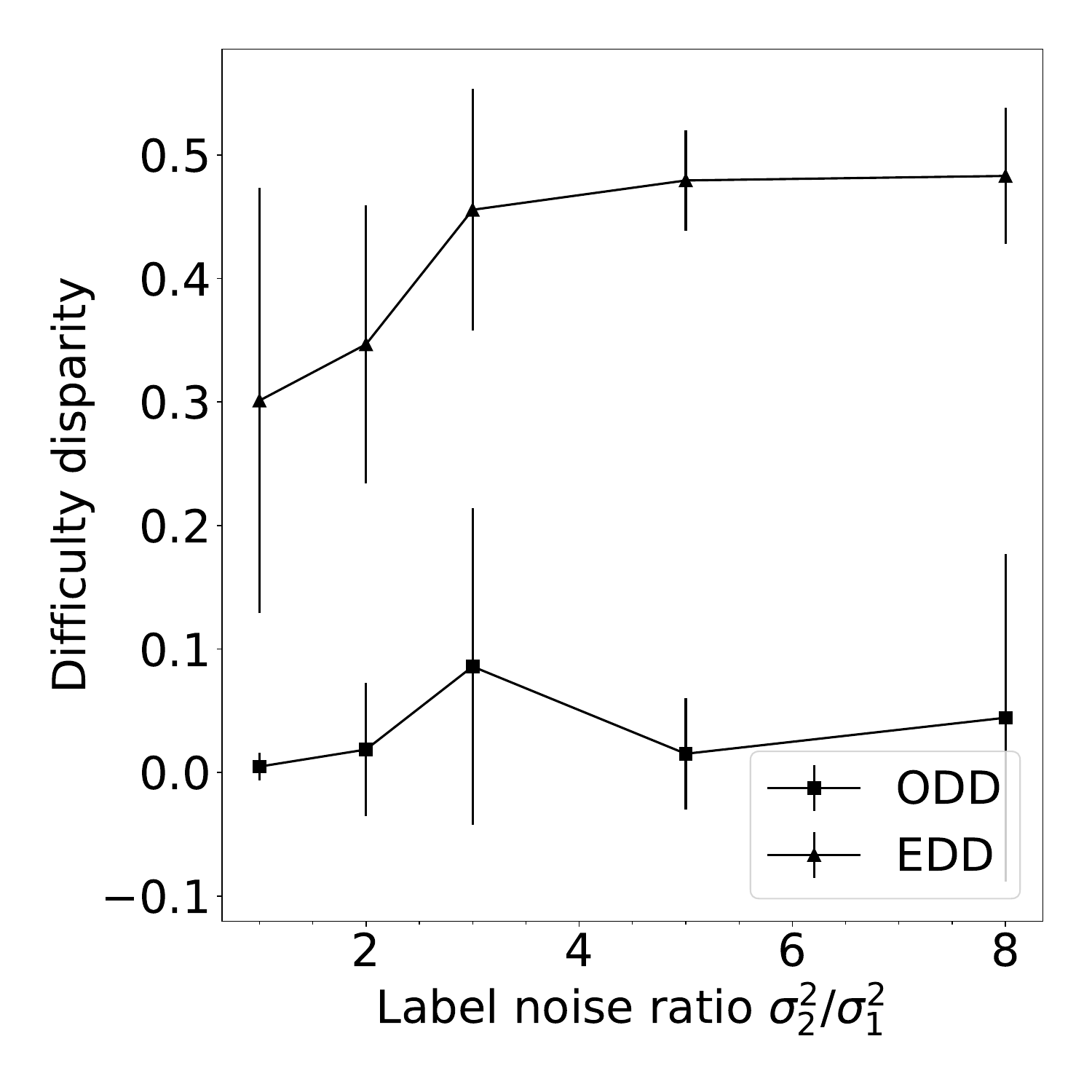}
    \caption{\textbf{Our theory predicts that more disparate label noise between groups deamplifies bias on Colored MNIST.} We plot the $ODD$ and $EDD$ of a CNN for different label noise ratios $c = \sigma_2^2 / \sigma_1^2$ for Colored MNIST. As $c$ increases, the $EDD$ generally increases while the $ODD$ remains relatively low, which is predicted by our theory (see reasoning in Section \ref{sec:over-time-setup}). In our experiments, $\sigma_1^2 = 0.05$ stays fixed while $\sigma_2^2$ varies. For each value of $c$, the model is evaluated after $t = 80$ training steps and has a penultimate layer with dimension $m = 500$. The error bars capture the standard deviation computed over 10 random seeds.}
    \label{fig:colored-mnist-1-errors}
\end{figure}

\begin{figure}[!ht]
    \centering
    \includegraphics[width=0.35\linewidth]{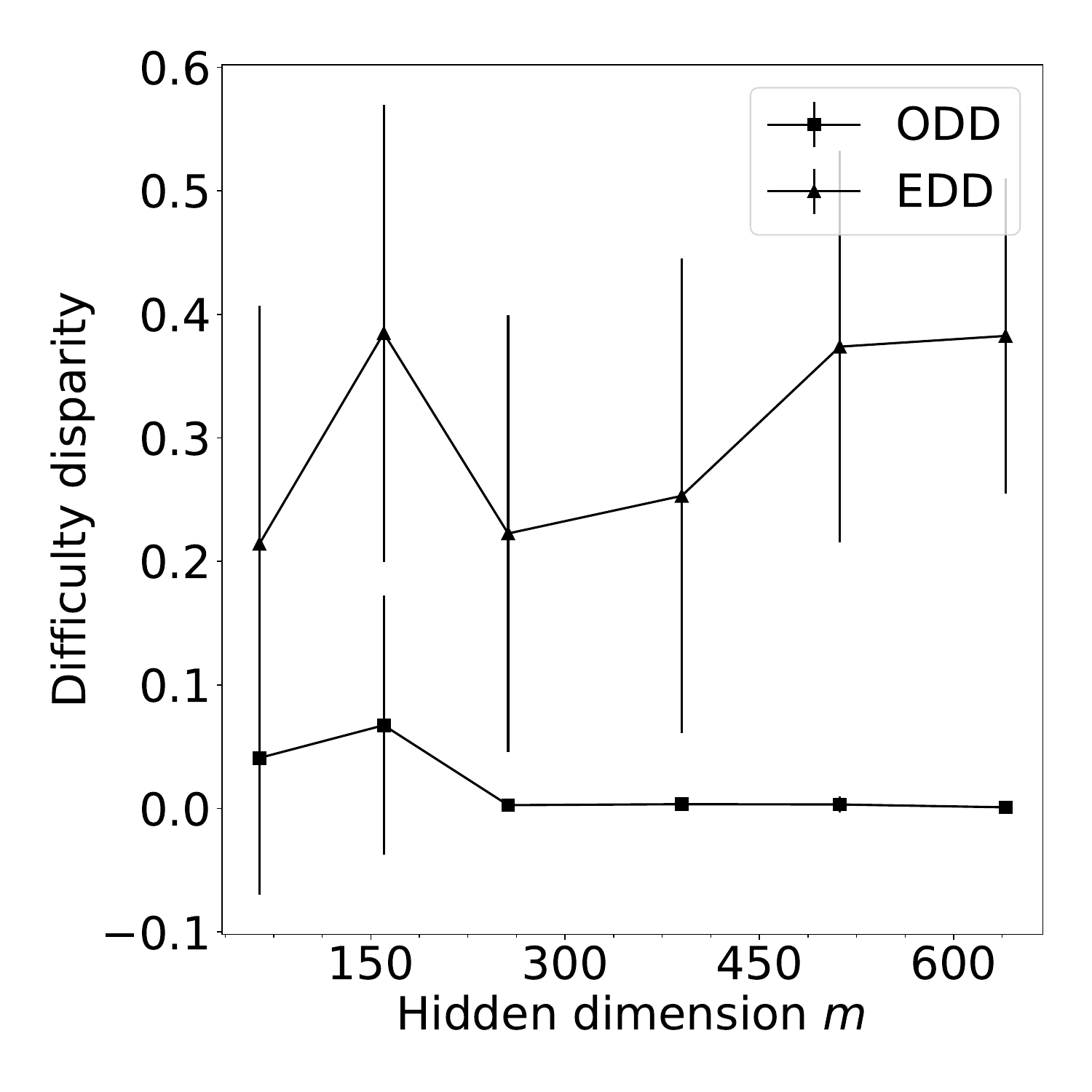}
    \caption{\textbf{Our theory predicts that a larger model size reduces bias on Colored MNIST in the single model setting.} We plot the $ODD$ and $EDD$ of a CNN for different model sizes $m$ (where $m$ is the dimension of the penultimate CNN layer) for Colored MNIST. As $m$ increases, the $ODD$ appears to decrease and plateau, which is in line with what our theory predicts in the regime where $\phi < 1$ (see analysis in Section \ref{sec:bias-amp-iso-results}). The $EDD$ does not tend towards 0. In our experiments, $\sigma_1^2 = \sigma_2^2 = 0.05$. For each value of $m$, the model is evaluated after $t = 80$ training steps. The error bars capture the standard deviation computed over 10 random seeds.}
    \label{fig:colored-mnist-2-errors}
\end{figure}

\clearpage

\section{Minority-Group Bias Plots}
\label{sec:minority-plots}
\label{sec:spurious-plots}

\begin{figure}[!ht]
    \centering
    \includegraphics[width=\linewidth]{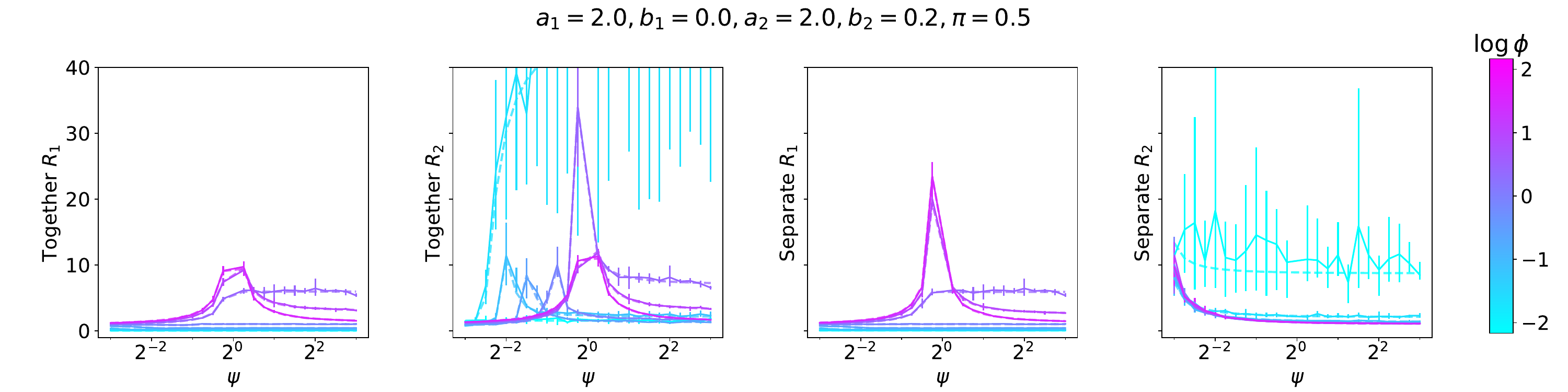}
    \includegraphics[width=\linewidth]{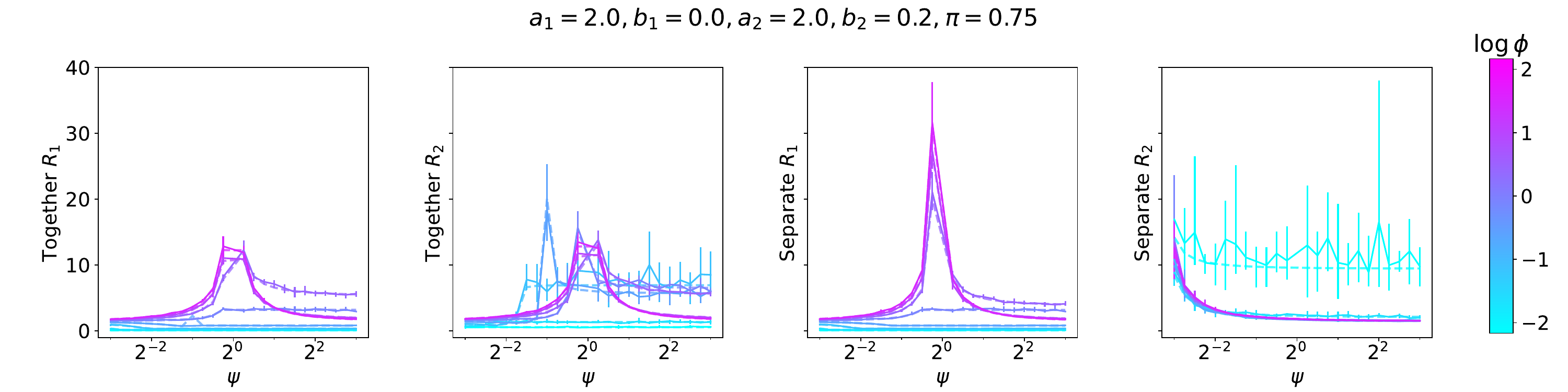}
    \includegraphics[width=\linewidth]{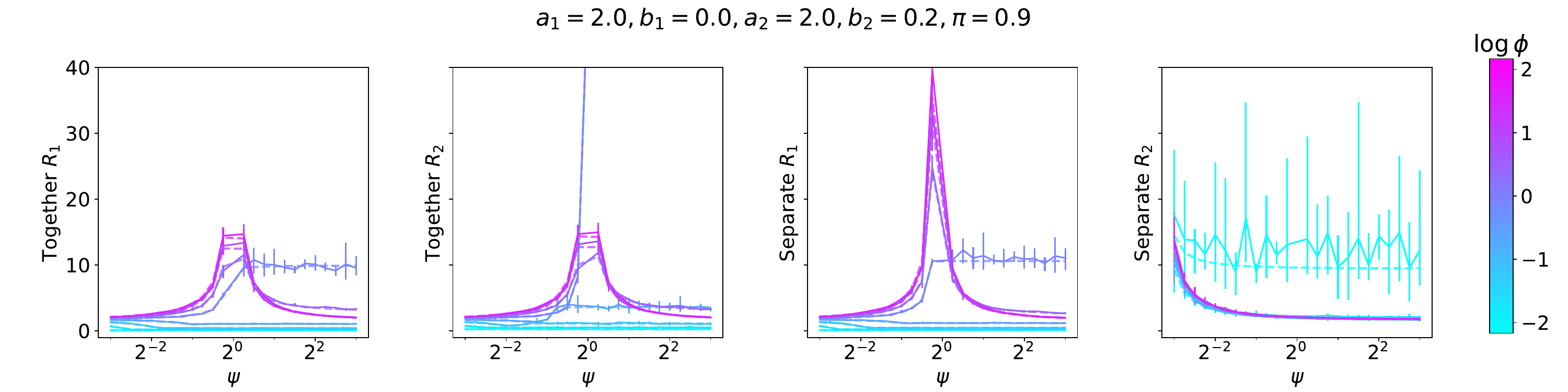}
    \caption{We empirically demonstrate that minority-group bias is affected by extraneous features. We validate our theory (Theorems \ref{thm:odd-theory-rand-proj} and \ref{thm:edd-theory-rand-proj}) for together $R_1, R_2$ (i.e., single model learned for both groups) and separate $R_1, R_2$ (i.e., separate model learned per group) under the setup described in Section \ref{sec:over-time-setup}. The solid lines capture empirical values while the corresponding lower-opacity dashed lines represent what our theory predicts. We include a black dashed line at $ADD = 1$ to contrast bias amplification vs. deamplification. All y-axes are on the same scale for easy comparison. The error bars capture the range of the estimators over 25 random seeds.}
\end{figure}

\clearpage

\begin{figure}[!ht]
    \centering
    \includegraphics[width=\linewidth]{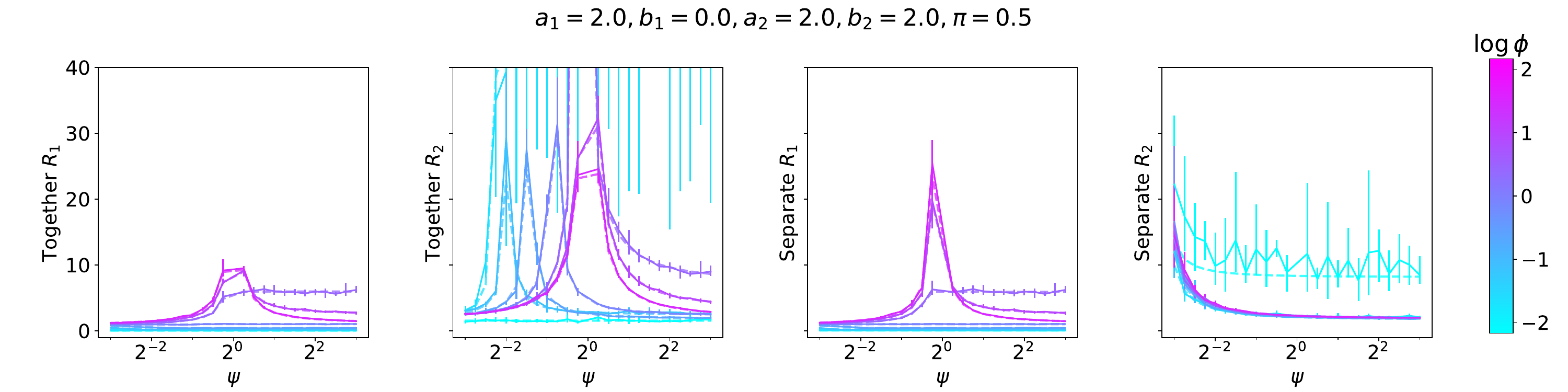}
    \includegraphics[width=\linewidth]{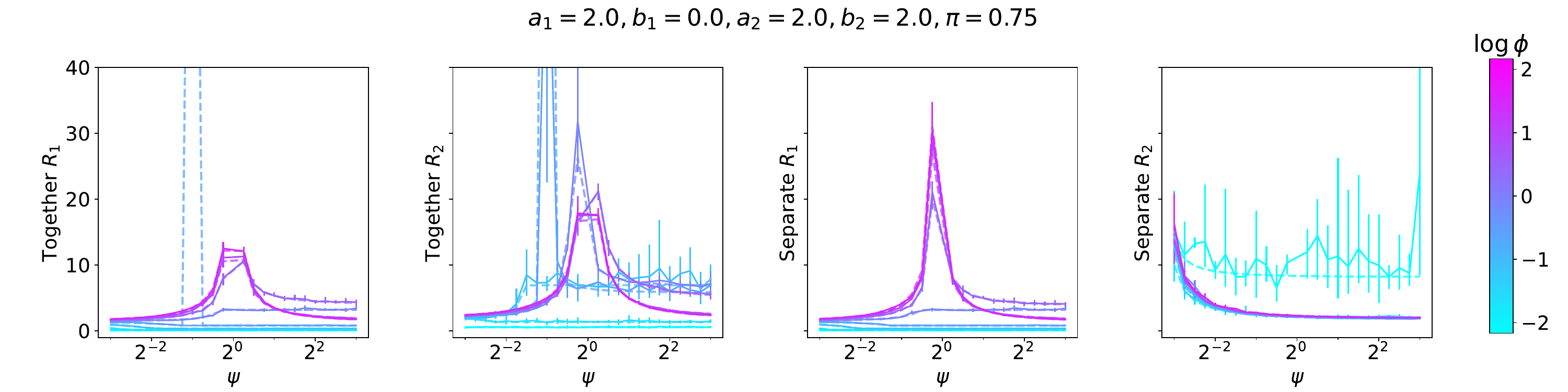}
    \includegraphics[width=\linewidth]{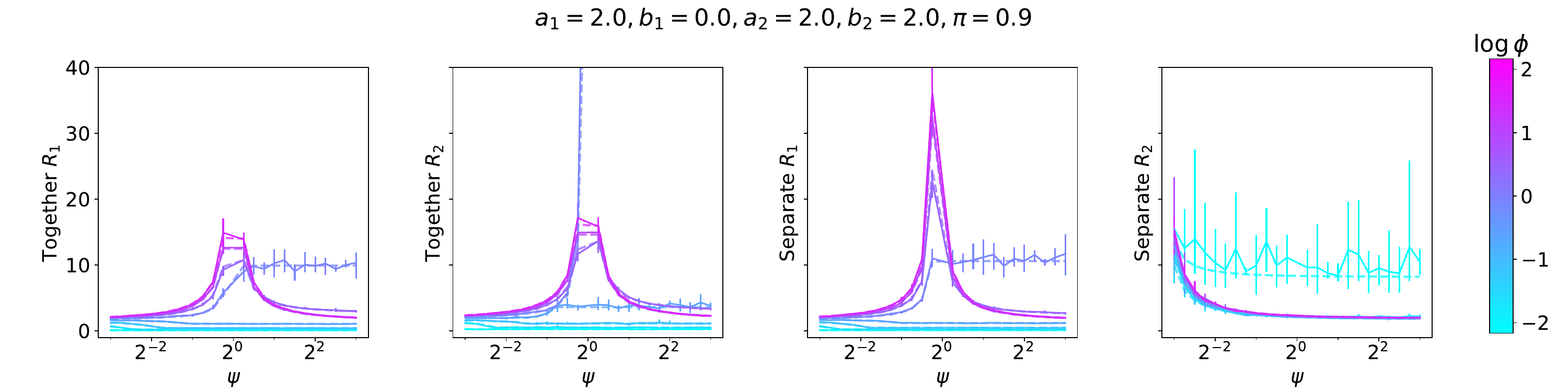}
    \caption{We empirically demonstrate that minority-group bias is affected by extraneous features. We validate our theory (Theorems \ref{thm:odd-theory-rand-proj} and \ref{thm:edd-theory-rand-proj}) for together $R_1, R_2$ (i.e., single model learned for both groups) and separate $R_1, R_2$ (i.e., separate model learned per group) under the setup described in Section \ref{sec:over-time-setup}. The solid lines capture empirical values while the corresponding lower-opacity dashed lines represent what our theory predicts. We include a black dashed line at $ADD = 1$ to contrast bias amplification vs. deamplification. All y-axes are on the same scale for easy comparison. The error bars capture the range of the estimators over 25 random seeds.}
\end{figure}

\clearpage

\begin{figure}[!ht]
    \centering
    \includegraphics[width=\linewidth]{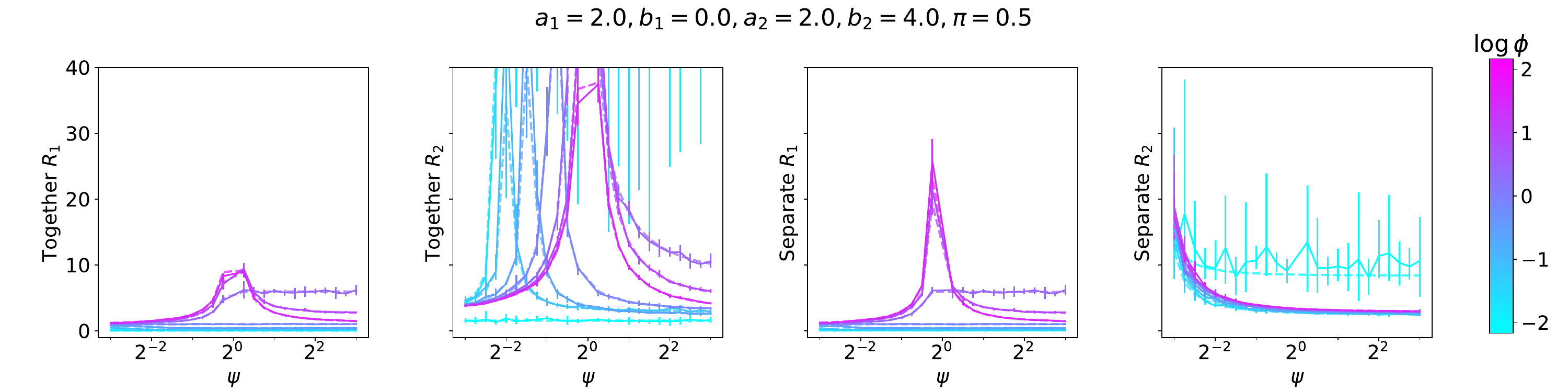}
    \includegraphics[width=\linewidth]{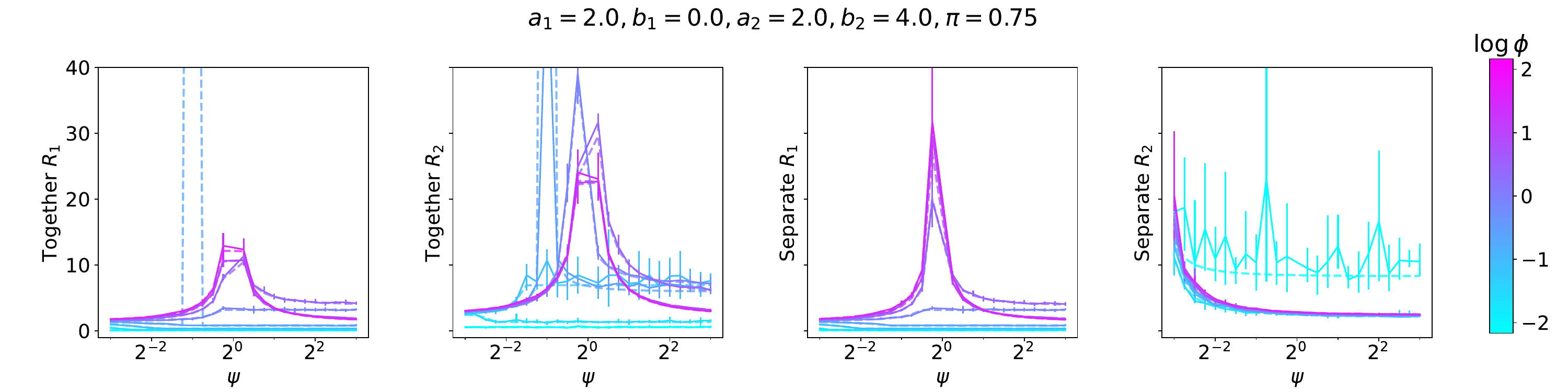}
    \includegraphics[width=\linewidth]{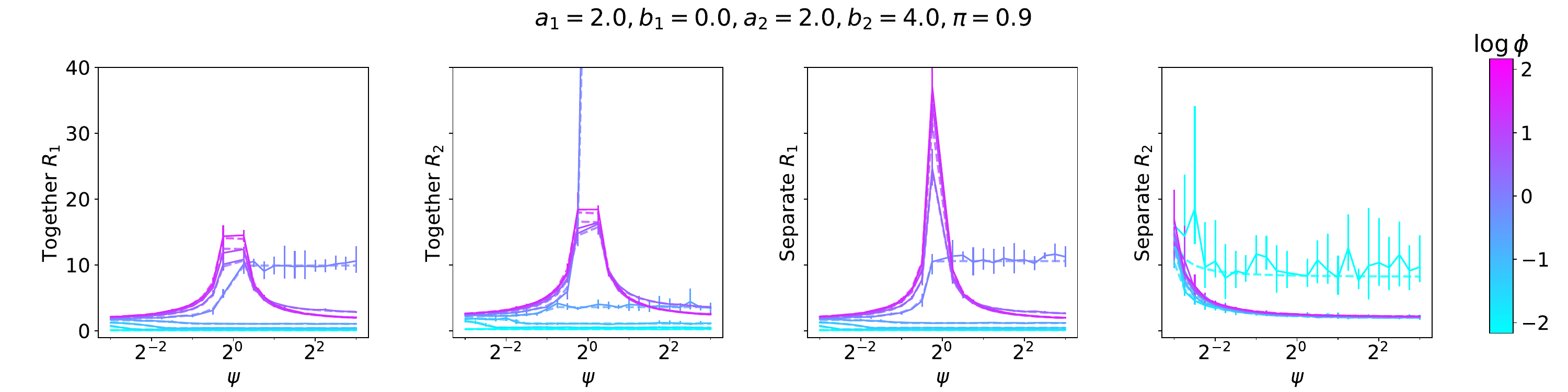}
   \caption{We empirically demonstrate that minority-group bias is affected by extraneous features. We validate our theory (Theorems \ref{thm:odd-theory-rand-proj} and \ref{thm:edd-theory-rand-proj}) for together $R_1, R_2$ (i.e., single model learned for both groups) and separate $R_1, R_2$ (i.e., separate model learned per group) under the setup described in Section \ref{sec:over-time-setup}. The solid lines capture empirical values while the corresponding lower-opacity dashed lines represent what our theory predicts. We include a black dashed line at $ADD = 1$ to contrast bias amplification vs. deamplification. All y-axes are on the same scale for easy comparison. The error bars capture the range of the estimators over 25 random seeds.}
\end{figure}

\clearpage

\section{Actionable Insights from Theory}
\label{sec:actionable-insights}

\paragraph{Searching for optimal hyperparameters.} In practice, an optimal regularization penalty $\lambda$ or training time $t$ can be selected by searching for values that strike a desired balance between overall validation error (that is not too high) and bias amplification (that is not too high). As we would estimate the test error using the empirical validation error, we can estimate bias amplification using the validation set. Moreover, we would need to train: (1) the main model on a mixture of data from groups, and (2) auxiliary separate models on the data for each group.

However, it may be expensive to train auxiliary models for each candidate value of $\lambda$ and $t$. The search space can be reduced by using insights from our theory. For instance, with overparamaterization, as $\lambda$ decreases (or $t$ increases), bias amplification increases and plateaus, and with underparameterization, as $\lambda$ decreases (or $t$ increases), bias deamplification increases and plateaus. When the curves are monotone with respect to $\lambda$, the optimal $\lambda$ is either at the left tail of the curve (e.g., $\lambda = 0$) or the right tail (i.e., the largest $\lambda$ among the reasonable options). In contrast, Figure \ref{fig:bias-amp-training} shows that when $\psi$ is close to the interpolation threshold of 1, bias amplification is often not monotone with respect to $\lambda$.

\paragraph{Informing evaluation and mitigation strategies.} Our theory offers avenues to assess whether a ML model trained on certain real-world data is prone to bias amplification and mitigate this amplification, even though we may lack direct access to population parameters like $\Sigma$. We can estimate such parameters using samples (e.g., $\widehat{\Sigma} = X^\top X$). However, even if we are unable to robustly estimate these parameters, our theory still provides valuable insights. For example, we observe that the ratios of parameters for the groups is often what matters, e.g., label noise ratio $\sigma_2^2 / \sigma_1^2$ (see Section \ref{sec:bias-amp-iso-results}), ratio of covariance eigenvalues (see Appendix \ref{sec:bias-amp-pow-setup}). Thus, practitioners can use our theory to get intuition about when {\em disparities} in the variability of labels and features across groups can amplify bias.

Moreover, our findings warn against the conventional wisdom that increased model overparameterization or data balancing can alleviate bias issues. In addition, our theory informs criteria for feature selection (e.g., discarding features with disparate variance across groups) and warns ML practitioners about the interplay between high vs. low feature-to-sample regimes and overparameterization in inducing bias amplification. Nevertheless, additional work is required to make rigorous connections between our theoretical findings and better strategies for evaluating and mitigating the bias of models.

\end{document}